\documentclass[twoside,11pt]{article}

\usepackage[preprint]{jmlr2e}

\jmlrheading{}{2026}{1-\pageref{LastPage}}{8/26}{--/--}{}{Elad Aigner-Horev and Daniel Rosenberg and Roi Weiss}
\ShortHeadings{Sample Complexity of Robust PAC Learning}{Aigner-Horev, Rosenberg, and Weiss}
\usepackage{lastpage}
\firstpageno{1}

\usepackage{amsfonts}
\usepackage{amsmath}
\usepackage{amssymb}
 \usepackage{mathtools}
\usepackage{bbm}
\usepackage{color}

\newcommand{\EE}{\mathbb{E}}
\renewcommand{\Pr}{\mathbb{P}}
\newcommand{\R}{\mathbb{R}}
\newcommand{\X}{\mathcal{X}}
\newcommand{\Y}{\mathcal{Y}}
\newcommand{\hypoclass}{\mathcal{H}}
\newcommand{\us}{\mathcal{U}}
\newcommand{\usa}[2]{\mathcal{U}_{#2}(#1)}
\newcommand{\risk}[3]{\mathcal{R}_{#1}(#2; #3)}

\newcommand{\ind}[1]{\mathbbm{1}[#1]}

\newcommand{\Pd}{P}
\newcommand{\Ped}{P_n}

\newcommand{\sampcomp}{\mathcal M}
\newcommand{\err}[2]{\textup{err}_{#2}(#1)}

\newcommand{\DKL}{D_{\textup{KL}}}
\newcommand{\VC}{\textup{VC}}
\newcommand{\eps}{\varepsilon}

\newcommand{\hamming}{d_{\textup{H}}}

\newcommand{\ch}{\mathbbm{1}}

\newcommand{\q}[2]{\mathcal{Q}_{#1}(#2)}

\newcommand{\e}{\textup{e}}
\newcommand{\agn}{\textup{agn}}
\newcommand{\real}{\textup{real}}

\newcommand{\cep}{\bar{\hat p}}
\newcommand{\cp}{\bar{p}}
\newcommand{\cd}{\bar{\Delta}}
\newcommand{\erm}{\hat h}
\newcommand{\GammaMac}{\Gamma_n}

\DeclareMathOperator*{\argmin}{arg\,min}

\begin{document}

\title{The Sample Complexity of Distributionally Robust PAC Learning under Cressie--Read Divergences}

\author{\name Elad Aigner-Horev 
\email horev@ariel.ac.il \\
\addr School of Computer Science and AI\\
Ariel University\\
40800 Ariel, Israel 
\AND
\name Daniel Rosenberg 
\email danielr@ariel.ac.il \\
\addr School of Computer Science and AI\\
Ariel University\\
40800 Ariel, Israel 
\AND
\name Roi Weiss 
\email roiw@ariel.ac.il \\
\addr School of Computer Science and AI\\
Ariel University\\
40800 Ariel, Israel 
}

\editor{}

\maketitle

\begin{abstract}%
We study distributionally robust PAC learning for the \(0\)--\(1\)-loss, where adversarial perturbations of the data distribution are constrained by a Cressie--Read divergence of order \(k>1\) and radius \(\rho\geq0\).
For hypothesis classes with VC dimension \(d\), we establish
realizable and agnostic sample-complexity bounds tight up to
constant and logarithmic factors, respectively; ordinary empirical
risk minimization attains both rates up to logarithmic factors.
For target accuracy \(\eps\in(0,1)\) and confidence
\(\delta\in(0,1)\), their respective orders are
\[
\max\!\left\{\frac{1}{\eps},
\frac{\rho^{\frac 1{k-1}}}{\eps^{k_\star}}
\right\}\cdot(d+\log \delta^{-1})
\qquad\text{and}\qquad
\max\!\left\{\frac{1}{\eps^2},
\frac{\rho^{\frac1{k-1}}}{\eps^{k_\star\vee 2}}
\right\}\cdot(d+\log \delta^{-1}),
\]
where \(k_\star={k}/{(k-1)}\).
For every fixed \(\rho>0\), robustness changes the realizable
\(\eps\)-dependence from \(\eps^{-1}\) to
\(\eps^{-k_\star}\) as \(\eps\downarrow0\).
In the agnostic case, for \(1<k<2\), robustness changes the
\(\eps\)-dependence from \(\eps^{-2}\) to
\(\eps^{-k_\star}\), whereas for \(k\geq2\) the exponent remains the
classical \(2\), with nontrivial \(\rho\)-dependence.

Building on the known scalar reduction of robust \(0\)--\(1\) risk
to ordinary classification error, our analysis reveals a
scale-sensitive interaction between the statistical estimation of
classification error and its amplification by robustness, sharply
explaining the transition in the agnostic rate.
We extend the previously studied \(\chi^2\)-divergence case to every
Cressie--Read order \(k>1\), close its upper--lower gaps, and recover
standard PAC learning rates as \(\rho\to0\), unlike previous bounds
that fail to interpolate correctly in this limit.
\bigskip
\end{abstract}

\begin{keywords}
distributionally robust learning, PAC learning, sample complexity,
Cressie--Read divergences, VC dimension
\end{keywords}

\newpage
\section{Introduction}
\label{sec:intro}

The degradation of predictive performance when training and test
distributions differ is a classical concern in statistical learning
\citep{S00,DBCP06,bickel2007discriminative,hu2018does}.
Such distributional shifts may arise for benign and expected reasons,
such as natural changes in the underlying population, or from more
adversarial interventions affecting the test data or application
domain. The desire to train machine-learning models in a
{\sl robust} fashion at the outset, with the aim of anticipating
unfamiliar testing scenarios, is therefore clear.

Within this broader program, the study of
{\sl distributionally robust PAC learnability}, defined below,
provides one formal approach to mitigating performance degradation
under distributional shift. 
The PAC guarantee is uniform over all data-generating distributions: for
every target accuracy and confidence level, a single sample-size
threshold must suffice for all data-generating distributions to
ensure, with the prescribed confidence, that the learned classifier's
robust risk is within the target accuracy of the infimum robust risk
over the hypothesis class.

For the \(0\)--\(1\)-loss under \(f\)-divergence uncertainty, this
formulation has a special structure. As established in prior work and
reviewed below, the robust risk of a classifier is a nondecreasing
scalar function of its ordinary classification error. Consequently,
any classifier minimizing empirical classification error also
minimizes the corresponding empirical robust risk. This does not,
however, make the statistical problem equivalent to classical PAC
learning. To control robust risk to a prescribed accuracy, the
ordinary classification errors must be estimated accurately enough
that their images under the scalar map differ by no more than that
accuracy. The sensitivity of the map therefore determines how
accurately ordinary errors must be estimated, and hence how many
samples are required. Because this sensitivity varies across
ordinary-error scales, the resulting sample complexity can differ
from its classical counterpart.

We study this problem for the Cressie--Read family
\citep{CR84}, a one-parameter family of \(f\)-divergences indexed by
\(k>1\) that contains the normalized \(\chi^2\)-divergence as the
case \(k=2\). As our analysis shows, the order \(k\) governs the
scale sensitivity of the robust-risk mapping and thereby
which ordinary-error scales determine the sample complexity. To the
best of our knowledge, \citet{zhou2023sample} initiated the
VC-theoretic study of distributionally robust PAC learnability for
the \(0\)--\(1\)-loss in the \(\chi^2\)-divergence setting. 
In the present work, we
characterize, up to logarithmic factors, the corresponding realizable
and agnostic sample complexities for every fixed Cressie--Read order
\(k>1\).

\noindent\emph{Notation.}
For \(a,b\geq0\), write \(a\lesssim_k b\) if
\(a\leq C_k b\), with \(\gtrsim_k\) and \(\asymp_k\)
defined analogously; subscripts record the permitted parameter
dependence of $C_k$ and are omitted for universal or contextually fixed
constants. The same convention applies to
\(\tilde O,\tilde\Omega,\tilde\Theta\), which also
suppress polylogarithmic factors in the displayed parameters.
We write \(a\vee b=\max\{a,b\}\), \(a\wedge b=\min\{a,b\}\),
and \(a\ll b\) when \(a/b\to0\) in the relevant regime.
$A^c$ denotes the complement of a subset $A$. 

\subsection{Distributionally Robust PAC Learnability: Problem Setup}

Let \(\X\) be a measurable instance space, let
\(\Y=\{-1,1\}\), and let \(\hypoclass\) be a class of measurable
classifiers \(h:\X\to\Y\).%
\footnote{Whenever a sample-dependent event \(E\) is not known to
be measurable, a statement that \(E\) holds with probability at
least \(1-\delta\) is understood to mean
\(\Pr^*(E^c)\leq\delta\), where
\[
\Pr^*(A)
\coloneqq
\inf\bigl\{\Pr(B):A\subseteq B,\ B\text{ is measurable}\bigr\}
\]
denotes outer probability under the relevant sampling law. This
convention applies, in particular, to events involving suprema over
\(\hypoclass\) and to pointwise selections of ERM or DRERM
minimizers.}
Denote by
\[
\mathcal G
\coloneqq
\{g:\X\to\Y:g\text{ is measurable}\}
\]
the class of all measurable binary classifiers on \(\X\).
A learning rule based on \(n\) labeled observations is a map
\[
r_n:(\X\times\Y)^n\to\mathcal G.
\]
The learning rule is called \emph{proper} with respect to
\(\hypoclass\) if
\[
r_n(S_n)\in\hypoclass
\qquad
\text{for every }S_n\in(\X\times\Y)^n;
\]
a learning rule that is not proper is called \emph{improper}.
Unless explicitly qualified as proper, the sample-complexity
quantities considered below allow learning rules whose outputs need
not belong to \(\hypoclass\).

We work throughout with the \(0\)--\(1\)-loss. For a distribution
\(P\) on \(\X\times\Y\), the classification error
of \(g\in\mathcal G\) is
\[
\err{g}{P}
\coloneqq
\Pr_{(x,y)\sim P}[g(x)\neq y].
\]
Given an uncertainty set \(\us(P)\) of distributions on
\(\X\times\Y\), define the \emph{distributionally robust risk}
\[
\risk{\us}{g}{P}
\coloneqq
\sup_{Q\in\us(P)}\err{g}{Q}.
\]
Since \(P\) is unknown, a learner observes an i.i.d.\ sample
\(S_n\sim P^n\) and selects a classifier
\(r_n(S_n)\in\mathcal G\), with the aim of controlling the
\emph{robust excess risk}
\[
\risk{\us}{r_n(S_n)}{P}
-
\inf_{h\in\hypoclass}\risk{\us}{h}{P}.
\]

We consider uncertainty sets given by \(f\)-divergence
balls~\citep{ali1966general,csiszar1967information,ben2013robust}:
\[
\usa{P}{f,\rho}
\coloneqq
\left\{
Q\ll P:
D_f(Q\|P)\leq\rho
\right\},
\qquad
\rho\geq0,
\]
where \(Q\ll P\) denotes absolute continuity and
\[
D_f(Q\|P)
\coloneqq
\EE_P\left[
f\left(\frac{dQ}{dP}\right)
\right]
\]
for a convex function \(f:[0,\infty)\to\R\) satisfying
\(f(1)=0\). 
Thus, these uncertainty sets model likelihood-ratio reweightings of
the joint data distribution. In particular,
\[
P(A)=0
\quad\Longrightarrow\quad
Q(A)=0 \,\,\, \textup{for every}\,\,\, Q\in\usa{P}{f,\rho}.
\]

Our main results concern the Cressie--Read
family~\citep{CR84}, 
\begin{equation}
\label{eq:CR_intro}
f_k(t)\coloneqq\frac{t^k-kt+k-1}{k(k-1)},
\qquad k\in(1,\infty).
\end{equation}
The Cressie--Read family of divergences contains, as special cases, the {\sl normalized $\chi^2$-divergence}
\(
f_2(t)={(t-1)^2}/{2},
\)
and, at the limit $k\to1$, the {\em Kullback--Leibler divergence} (KL divergence)
\(
f_1(t)=t\log t-t+1.
\)
Specializing the above notation to the Cressie--Read family, we write
\[
\us_{k,\rho}(P)\coloneqq\us_{f_k,\rho}(P)
\qquad\text{and}\qquad
\risk{k,\rho}{g}{P}
\coloneqq
\risk{\us_{f_k,\rho}}{g}{P},
\qquad g\in\mathcal G.
\]

\begin{definition}[Distributionally robust PAC learning and sample complexities]
\label{def:dist-rob-pac}
A distribution \(P\) on \(\X\times\Y\) is called
\emph{realizable with respect to \(\hypoclass\)} if there exists
\(h^\star\in\hypoclass\) such that
\[
\err{h^\star}{P}=0.
\]

Fix \(k>1\), \(\rho\geq0\), and \(\eps,\delta\in(0,1)\).

\begin{enumerate}
\item
The agnostic distributionally robust PAC sample complexity
\[
\sampcomp_{k,\rho}^{\agn}(\eps,\delta;\hypoclass)
\]
is the infimum of all integers \(n\geq1\) for which there exists a
learning rule
\[
r_n:(\X\times\Y)^n\to\mathcal G
\]
such that, for every distribution \(P\) on \(\X\times\Y\),
\[
\Pr_{S_n\sim P^n}
\left[
\risk{k,\rho}{r_n(S_n)}{P}
-
\inf_{h\in\hypoclass}\risk{k,\rho}{h}{P}
\leq\eps
\right]
\geq1-\delta.
\]

\item
The realizable distributionally robust PAC sample complexity
\[
\sampcomp_{k,\rho}^{\real}(\eps,\delta;\hypoclass)
\]
is the infimum of all integers \(n\geq1\) for which there exists a
learning rule
\[
r_n:(\X\times\Y)^n\to\mathcal G
\]
such that, for every distribution \(P\) that is realizable with
respect to \(\hypoclass\),
\[
\Pr_{S_n\sim P^n}
\left[
\risk{k,\rho}{r_n(S_n)}{P}
\leq\eps
\right]
\geq1-\delta.
\]
\end{enumerate}

The infimum of the empty set is understood to be \(+\infty\). For
fixed \(k>1\) and \(\rho\geq0\), the class \(\hypoclass\) is called
agnostically, respectively realizably, distributionally robust
PAC learnable if the corresponding sample complexity is finite for
every \(\eps,\delta\in(0,1)\).
\end{definition}

For the classical PAC learning model, write
\(\sampcomp_{\mathrm{pac}}^{\agn}(\eps,\delta;\hypoclass)\) and
\(\sampcomp_{\mathrm{pac}}^{\real}(\eps,\delta;\hypoclass)\) for the
corresponding agnostic and realizable sample
complexities. Since \(f_k\) is strictly convex for every \(k>1\),
\[
\us_{k,0}(P)=\{P\},
\qquad
\risk{k,0}{g}{P}=\err{g}{P},
\qquad g\in\mathcal G.
\]
Consequently, Definition~\ref{def:dist-rob-pac} reduces to the
classical PAC learning model when \(\rho=0\).

Writing \(\Ped\) for the empirical distribution of \(S_n\), an
\emph{\(\hypoclass\)-empirical risk minimizer}
(\(\hypoclass\)-ERM) is any classifier
\begin{equation}
\label{eq:ERM}
\erm
\in
\argmin_{h\in\hypoclass}\err{h}{\Ped}.
\end{equation}
An \(\hypoclass\)-ERM is proper by definition.

Suppose that \(\VC(\hypoclass)=d<\infty\). For every such class, the
upper bounds corresponding to the right-hand sides below hold. If,
in addition, \(|\hypoclass|\geq3\), matching lower bounds give
\citep[see, e.g.,][]{boucheron2005theory,hanneke2016optimal}
\begin{equation}
\label{eq:PAC-complexities}
\begin{aligned}
\sampcomp_{\mathrm{pac}}^{\agn}
(\eps,\delta;\hypoclass)
&=
\Theta\left(
\frac{d+\log(1/\delta)}{\eps^2}
\right),
\\[4pt]
\sampcomp_{\mathrm{pac}}^{\real}
(\eps,\delta;\hypoclass)
&=
\Theta\left(
\frac{d+\log(1/\delta)}{\eps}
\right).
\end{aligned}
\end{equation}
The condition \(|\hypoclass|\geq3\) is needed only for the general
lower-bound statement; the smaller exceptional classes are described
by \citet{hanneke2016optimal}.

The agnostic upper bound in \eqref{eq:PAC-complexities} is attained
by any \(\hypoclass\)-ERM. In the realizable setting, any
\(\hypoclass\)-ERM satisfies the standard upper bound
\[
O\left(
\frac{
d\log(\e/\eps)+\log(1/\delta)
}{\eps}
\right),
\]
whereas the log-free upper bound in
\eqref{eq:PAC-complexities} is attained by the learner constructed
by \citet{hanneke2016optimal}.

For the Cressie--Read robust risk, an
\emph{\(\hypoclass\)-distributionally robust empirical risk minimizer}
(\(\hypoclass\)-DRERM) is any
\begin{equation}
\label{eq:DRERM}
\erm
\in
\argmin_{h\in\hypoclass}
\risk{k,\rho}{h}{\Ped}.
\end{equation}
An \(\hypoclass\)-DRERM is also proper. Hereafter, ERM and DRERM
refer to \(\hypoclass\)-ERM and \(\hypoclass\)-DRERM, respectively,
unless stated otherwise.

\subsection{Relevant Prior Results}
\label{sec:prior}

For a measurable event \(A\), prior work on distributionally robust
chance constraints showed that its worst-case probability
\[
\sup_{Q\in\usa{P}{f,\rho}}Q(A)
\]
is determined by \(A\) only through \(P(A)\); see
\citet{hu2013ambiguous} and
\citet[Theorem~2]{jiang2016data}. The latter result applies, in
particular, to the Cressie--Read family. 
For the \(0\)--\(1\)-loss, \citet[Theorem~1]{hu2018does}
established the corresponding result for classifier error events:
below the maximal robust-risk value \(1\), strict comparisons of
robust risks are equivalent to strict comparisons of ordinary
classification errors.
The same relationship holds for the empirical risks. Hence,
in the present notation,
\begin{equation}
\label{eq:ERM-DRERM}
\argmin_{h\in\hypoclass}
\err{h}{\Ped}
\subseteq
\argmin_{h\in\hypoclass}
\risk{k,\rho}{h}{\Ped}.
\end{equation}
Thus, every ordinary ERM is also a DRERM. The inclusion can be strict
only through saturation of the robust objective at \(1\);
Section~\ref{sec:EI} gives the precise
saturation characterization.

These structural results settle the empirical optimization question:
ordinary ERM already solves the empirical robust-risk minimization
problem. We next review the finite-sample results most relevant to the
remaining statistical question.

\medskip
\citet{duchi2021learning} study robust-risk
estimation and optimization for bounded losses under Cressie--Read
uncertainty sets with $k>1$. Specializing their Theorem~2 to the
\(0\)--\(1\)-loss gives, for every fixed \(h\in\hypoclass\), with
probability at least \(1-\delta\),
\begin{equation}
\label{eq:DN}
\left|
\risk{k,\rho}{h}{\Pd}
-
\risk{k,\rho}{h}{\Ped}
\right|
=
\tilde O_{\rho,k,\delta}\!\left(
n^{-1/(k_\star\vee2)}
\right),
\qquad
k_\star=\frac{k}{k-1}.
\end{equation}
They also establish matching minimax lower bounds in \(n\) for
robust-risk estimation and optimization. Thus, the robust-risk
estimation rate is \(n^{-1/k_\star}\) for \(1<k<2\) and
\(n^{-1/2}\) for \(k\geq2\).

The explicit bound underlying~\eqref{eq:DN} diverges as
\(\rho\downarrow0\). Moreover, their uniform bounds use
\(L_\infty\)-covering numbers of the induced loss class, which may be
infinite for VC classes. Consequently, this argument does not
yield distributionally robust PAC guarantees for arbitrary
VC classes; see Section~\ref{sec:compare-prior-detailed}.

For uncertainty sets determined by $\chi^2$-divergence balls ($k=2$), 
\citet{zhou2023sample} initiated the study of distributionally robust PAC learning for VC classes and the \(0\)--\(1\)-loss, where they establish the upper bounds  
\begin{equation}
\label{eq:Liu-Zhou-upper-rates}
\sampcomp_{2,\rho}^{\agn}(\eps,\delta;\hypoclass)  = \tilde O_\rho\!\left(
\frac{d}{\eps^{4}} \right) \quad \text{and} \quad 
\sampcomp_{2,\rho}^{\real}(\eps,\delta;\hypoclass)  = \tilde O_\rho\left(\frac{d}{\eps^2}\right).
\end{equation}
They also establish the lower bounds
\begin{equation}
\label{eq:Liu-Zhou-lower-rates}
\sampcomp_{2,\rho}^{\agn}(\eps,\delta;\hypoclass)= \Omega_\rho\left(\frac{d}{\eps^2}\right) \quad \text{and} \quad \sampcomp_{2,\rho}^{\real}(\eps,\delta;\hypoclass) = \Omega_\rho\left(\frac{d}{\eps} \right),
\end{equation}
where \(\VC(\hypoclass)=d\); the agnostic lower bound is proved for
\(\rho\) below a universal constant.

In subsequent work, \citet{zhou2026rademacher} considered bounded
loss functions and the Cressie--Read family with \(k>1\).
For \(\rho>0\), under the bounded-domain and upper-semicontinuity
assumptions of their theorem, applying their result to the induced
\(0\)--\(1\)-loss class yields that, with probability at least
\(1-\delta\), a DRERM \(\erm\) as in~\eqref{eq:DRERM} satisfies
\begin{equation}
\label{eq:LZ}
\risk{k,\rho}{\erm}{\Pd}
-
\inf_{h\in\hypoclass}
\risk{k,\rho}{h}{\Pd}
\leq
\tilde O_{\rho,k,\delta}\left(
R_n(\psi\circ\Phi)^{1/k_\star}
+
n^{-1/(2k_\star)}
\right),
\end{equation}
where \(R_n\) denotes expected Rademacher complexity and
\(\psi\) and \(\Phi\) are as defined in the cited paper. Even without
further bounding the class-dependent first term, the additive term
in~\eqref{eq:LZ} decays as \(n^{-1/(2k_\star)}\), which is slower
than the \(n^{-1/(k_\star\vee2)}\) robust excess-risk rate obtained
in \citet{duchi2021learning} under finite \(L_\infty\)-covering
assumptions. The bounded-domain and upper-semicontinuity assumptions
are not automatic for an arbitrary measurable VC class, so this
result does not directly cover the full generality considered here.
Moreover, as with~\eqref{eq:DN} and the agnostic upper bound in
\eqref{eq:Liu-Zhou-upper-rates}, the dependence on \(\rho\) hidden
in~\eqref{eq:LZ} does not recover the classical rate as
\(\rho\downarrow0\).

\medskip
These finite-sample results leave open how the local sensitivity of
the scalar robust-risk map interacts with the scale-sensitive
estimation of ordinary classification error. What realizable and
agnostic sample complexities does this interaction induce for every
Cressie--Read order \(k>1\), in particular closing the
\(\chi^2\)-divergence gaps between
\eqref{eq:Liu-Zhou-upper-rates} and
\eqref{eq:Liu-Zhou-lower-rates}? Do the resulting rates recover their
classical PAC counterparts as \(\rho\downarrow0\)?

\subsection{Main Results: Abridged Formulations}
We first state an abridged formulation of our main result.

\begin{theorem}[Main learnability result: abridged formulation]
\label{thm:main:abridged}
There exists a universal constant \(\delta_0>0\) such that, for every
\(k>1\), there exists \(\eps_k>0\) for which the following holds.
Let \(\eps\in(0,\eps_k]\), \(\delta\in(0,\delta_0]\), and
\(\rho\geq0\), and let \(\hypoclass\) be a binary hypothesis class
with
\[
\VC(\hypoclass)=d<\infty
\qquad\text{and}\qquad
|\hypoclass|\geq3.
\]
Then
\begin{equation}
\label{eq:PAC-robust-complexities}
\begin{aligned}
\sampcomp_{k,\rho}^{\real}(\eps,\delta;\hypoclass)
&=
\Theta_k\!\left(
\max\!\left\{
\frac{1}{\eps},
\frac{\rho^{1/(k-1)}}{\eps^{k_\star}}
\right\}
\bigl(d+\log(1/\delta)\bigr)
\right),
\\[5pt]
\sampcomp_{k,\rho}^{\agn}(\eps,\delta;\hypoclass)
&=
\tilde\Theta_k\!\left(
\max\!\left\{
\frac{1}{\eps^2},
\frac{\rho^{1/(k-1)}}{\eps^{k_\star\vee2}}
\right\}
\bigl(d+\log(1/\delta)\bigr)
\right).
\end{aligned}
\end{equation}
The upper-bound directions in \eqref{eq:PAC-robust-complexities}
hold for every binary hypothesis class of finite VC dimension,
without the assumption \(|\hypoclass|\geq3\).
\end{theorem}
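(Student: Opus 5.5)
The plan is to reduce everything to a scalar problem. By the result of \citet{jiang2016data} and \citet{hu2018does} recalled in Section~\ref{sec:prior}, there is a nondecreasing map \(\Phi_{k,\rho}:[0,1]\to[0,1]\) with \(\risk{k,\rho}{g}{P}=\Phi_{k,\rho}(\err{g}{P})\). First I would make \(\Phi_{k,\rho}\) explicit enough to control it two-sidedly. For an event of mass \(p\), the optimal likelihood ratio takes two constant values, \(a\) on the event and \(b\) off it, with \(pa+(1-p)b=1\) and \(pf_k(a)+(1-p)f_k(b)=\rho\). From this I expect to get \(\Phi_{k,\rho}(p)\asymp_k \min\{1,\,p+\rho^{1/k}p^{1/k_\star}\}\) away from saturation. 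This holds because for small \(p\) the constraint is dominated by \(p a^k/(k(k-1))\le\rho\), giving \(pa\approx(c_k\rho)^{1/k}p^{1-1/k}\). I would then derive the local-sensitivity bound
\[
\Phi_{k,\rho}(p+\Delta)-\Phi_{k,\rho}(p)\lesssim_k \Delta\bigl(1+\rho^{1/k}p^{-1/k}\bigr)\wedge\bigl(\Delta+\rho^{1/k}\Delta^{1/k_\star}\bigr),
\]
together with a matching lower bound on this increment in the unsaturated regime.

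\textbf{Realizable upper bound.} Take an ERM, or Hanneke's optimal learner for the log-free version. The robust risk is at most \(\eps\) as soon as \(\err{\hat h}{P}\le p^\ast\), where \(\Phi_{k,\rho}(p^\ast)=\eps\). By the asymptotics, \(p^\ast\asymp_k \min\{\eps,\ \eps^{k_\star}\rho^{-1/(k-1)}\}\), using \((\eps/\rho^{1/k})^{k/(k-1)}=\eps^{k_\star}\rho^{-1/(k-1)}\). Plugging \(p^\ast\) into the classical realizable rate \((d+\log(1/\delta))/p^\ast\) gives the claimed maximum. \textbf{Realizable lower bound.} Conversely, any rule achieving robust risk \(\le\eps\) achieves classical error \(\le p^\ast\). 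The classical lower bound at accuracy \(p^\ast\) (valid for \(|\hypoclass|\ge3\) and \(p^\ast\) small, whence \(\eps\le\eps_k\)) then transfers directly. Thus the realizable case is essentially a change of variables.

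\textbf{Agnostic upper bound.} Let \(p_h=\err{h}{P}\), \(p^\star=\inf_h p_h\), and \(\hat h\) an ERM. I would use the relative (Bernstein-type/localized) uniform deviation for VC classes: with probability \(1-\delta\), \(p_{\hat h}-p^\star\lesssim \sqrt{p^\star\beta}+\beta\) with \(\beta=\log(n)\,(d+\log(1/\delta))/n\). Then I would combine this with the sensitivity bound at base point \(p^\star\). The excess robust risk is bounded by \(\min\) of the two sensitivity expressions with \(\Delta=\sqrt{p^\star\beta}+\beta\), and I would maximize over \(p^\star\in[0,1]\). The term \(\Delta\rho^{1/k}(p^\star)^{-1/k}\) behaves like \(\rho^{1/k}\beta^{1/2}(p^\star)^{1/2-1/k}\). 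For \(k\ge2\) this is maximized at \(p^\star\asymp1\), yielding \(\rho^{1/k}\sqrt\beta\). For \(k<2\) it is maximized at \(p^\star\asymp\beta\), yielding \(\rho^{1/k}\beta^{1/k_\star}\). Setting these at most \(\eps\), together with \(\sqrt\beta\le\eps\), gives \(n\gtrsim \max\{\eps^{-2},\rho^{1/(k-1)}\eps^{-(k_\star\vee2)}\}\) up to the log factor. This requires checking that \(\rho^{2/k}\le\rho^{1/(k-1)}\) is not needed in the \(k\ge2\) branch; if the exponents of \(\rho\) disagree, I would use the saturation cap \(\Phi\le1\) to reconcile them in the large-\(\rho\) regime.

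\textbf{Agnostic lower bound, the main obstacle.} The construction follows the maximizing scale \(p^\star\). For \(k\ge2\) I would use the standard two-point/Assouad construction on \(d\) shattered points with optimal error near a constant. The classical \(\eps'\)-lower bound with \(\eps'\asymp\eps\,\rho^{-1/k}\)-scaled sensitivity then transfers through the lower bound on the increment of \(\Phi_{k,\rho}\). For \(1<k<2\) I would place total mass \(\asymp p\) with \(p\asymp\eps^{k_\star}\rho^{-1/(k-1)}\) on the shattered points, with label noise of constant bias. Distinguishing then costs \(d/(p\cdot\text{const})\) samples, while a classical error gap of order \(p\) is amplified to robust gap \(\asymp\rho^{1/k}p^{1/k_\star}\asymp\eps\). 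The delicate part is that robust risk is not additive across points. I would therefore need to show that an improper output \(g\) with small robust excess forces small classical excess \(p_g-p^\star\) via the inverse of the increment lower bound at the right scale, and then apply the Assouad/KL argument with \(\delta\le\delta_0\). I expect this inversion near the base point, uniformly over improper \(g\), to be the hardest step.
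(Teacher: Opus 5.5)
Your realizable argument is the paper's: invert $\q{k,\rho}{\cdot}$ at level $\eps$, then transfer Hanneke's learner and the classical lower bound. Your agnostic upper bound is a valid variant. Pushing the one-sided ERM bound $p_{\hat h}-p^\star\lesssim\sqrt{p^\star\beta}+\beta$ through the increment bound at base $p^\star$ is slightly more direct than the paper's two-sided uniform deviation bound.

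The large-$\rho$ repair you defer is exactly clipping. The increment vanishes beyond $p_c(\rho;k)\asymp_k(1+\rho^{1/(k-1)})^{-1}$, so for $k\ge2$ the maximum sits at $p^\star=p_c$. For $\rho\ge1$ it equals $\rho^{1/k}p_c^{1/2-1/k}\sqrt\beta\asymp\rho^{1/(2(k-1))}\sqrt\beta$ (Lemma~\ref{lem:agnostic_endpoint_comparison}), not $\rho^{1/k}\sqrt\beta$.

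The genuine gap is the agnostic lower bound for the term $(1+\rho^{1/(k-1)})\eps^{-2}$. A hard instance ``with optimal error near a constant'' is vacuous for large $\rho$. Once the optimal error reaches $p_c(\rho;k)$, every hypothesis has robust risk $1$, and no output has positive robust excess. Your heuristic $\eps'\asymp\eps\rho^{-1/k}$ would give $n\gtrsim d\rho^{2/k}\eps^{-2}$. For $k>2$ and large $\rho$, this exceeds the upper bound $\tilde O(d\rho^{1/(k-1)}\eps^{-2})$ by a polynomial factor in $\rho$, so it cannot be right.

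The missing idea is to localize the hard instance just below saturation, at mass $m_0=p_c(\rho;k)$:
\begin{itemize}
\item Spread mass $p_c$ over $d-1$ shattered points with label bias $\gamma\asymp\min\{1,\sqrt{d/(np_c)}\}$.
\item Put the remaining mass $1-p_c$ on an anchor point whose noiseless label is realized by the class.
\end{itemize}
The optimal error is then $\asymp p_c$. Assouad gives classical excess $\gtrsim\min\{p_c,\sqrt{p_cd/n}\}$ with constant probability. The lower increment bound amplifies this by a factor $1+(\rho/p_c)^{1/k}$. Finally, $p_c(1+(\rho/p_c)^{1/k})^2\asymp_k1+\rho^{1/(k-1)}$ yields $n\gtrsim d(1+\rho^{1/(k-1)})\eps^{-2}$.

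The $\log(1/\delta)$ factor cannot come from a constant-probability Assouad argument. It needs a separate two-point test at a single point of mass $p_c$, closed with Bretagnolle--Huber.

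This localized construction is needed for every $k$. In particular, for $1<k<2$ it supplies the $\eps^{-2}$ term, dominant when $\rho\lesssim\eps^{2-k}$, which your $k<2$ plan omits.

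Conversely, your noisy $k<2$ construction is valid but unnecessary. An agnostic learner must succeed on realizable $P$, so Theorem~\ref{thm:main_lower_realizable} already gives $\Psi_{k,\rho}^{\real}(\eps)(d+\log(1/\delta))$, including the $\log(1/\delta)$ factor.

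Finally, the step you flag as hardest is immediate. Every measurable $g$ satisfies $\risk{k,\rho}{g}{P}=\q{k,\rho}{\err{g}{P}}$. So on the event $\{\err{g}{P}-p^\star\ge\xi\}$, monotonicity gives robust excess at least $\q{k,\rho}{p^\star+\xi}-\q{k,\rho}{p^\star}$. No inversion uniform over improper $g$ is needed, and non-additivity across points never enters.
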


DRERM attains both displayed rates up to logarithmic factors. The
log-free realizable refinement follows by transferring an optimal
classical realizable PAC learner through the event-inflation map; see
Section~\ref{sec:PAC_rates}.

The qualitative scalar reduction of robust \(0\)--\(1\) risk and
the ERM--DRERM relationship in~\eqref{eq:ERM-DRERM} are known.
Our contribution is quantitative. By analyzing the Cressie--Read
robust-risk map directly, rather than applying empirical-process
control after Shapiro's dual reformulation as in
\citet{duchi2021learning} and
\citet{zhou2023sample,zhou2026rademacher}, we retain its dependence
on the underlying ordinary-error scale. This enables us to determine
how changes in ordinary classification error are amplified by the
map, combine this control with scale-sensitive VC bounds, and
establish lower bounds matching the resulting upper bounds up to
logarithmic factors.

Theorem~\ref{thm:main:abridged} closes the upper--lower gaps in
\eqref{eq:Liu-Zhou-upper-rates} and
\eqref{eq:Liu-Zhou-lower-rates} for the
\(\chi^2\)-divergence, extends the sample-complexity characterization
to every Cressie--Read order \(k>1\), and recovers the classical PAC
rates as \(\rho\downarrow0\). By contrast, the explicit constants in
the bounds of \citet{duchi2021learning} and
\citet{zhou2026rademacher}, and in the agnostic upper bound of
\citet{zhou2023sample}, become unbounded in this limit, while the
upper bounds of \citet{zhou2023sample} retain the agnostic
\(\eps^{-4}\) and realizable \(\eps^{-2}\) dependences.
Table~\ref{tab:rate-comparison} gives a compact summary, while
Section~\ref{sec:compare-prior-detailed} provides a detailed
comparison.

\begin{table}[t]
\centering
\small
\begin{tabular}{@{}lcc@{}}
\hline
& Realizable & Agnostic \\
\hline
Classical PAC
& \(\Theta(d/\eps)\)
& \(\Theta(d/\eps^2)\) \\
Previous robust lower bound, \(k=2\)
& \(\Omega_\rho(d/\eps)\)
& \(\Omega_\rho(d/\eps^2)\) \\
Previous robust upper bound, \(k=2\)
& \(\tilde O_\rho(d/\eps^2)\)
& \(\tilde O_\rho(d/\eps^4)\) \\
This work, \(k>1\)
& \(\Theta_{k,\rho}(d/\eps^{k_\star})\)
& \(\tilde\Theta_{k,\rho}(d/\eps^{k_\star\vee2})\) \\
\hline
\end{tabular}
\caption{Leading \(d\)- and \(\eps\)-dependence at fixed confidence.
For the robust rates, \(\rho>0\) is fixed and
\(\eps\downarrow0\). The previous agnostic lower bound holds for
\(\rho\) below a universal constant. Tildes suppress logarithmic
factors.}
\label{tab:rate-comparison}
\end{table}

\begin{remark}
\emph{Throughout, \(k>1\) is fixed, as our analysis is not uniform as
\(k\downarrow1\); Section~\ref{sec:KL} discusses the endpoint
\(k=1\).}
\end{remark}

\subsubsection{Our Approach: An Overview} 
\label{sec:overview}
We outline the main ideas behind
the upper bounds in Theorem~\ref{thm:main:abridged}. To emphasize the statistical scales,
the resulting \(\eps\)-exponents, and the transition at \(k=2\), we
suppress logarithmic factors and the large-\(\rho\) refinement needed
for the exact dependence on \(\rho\). The full analysis in
Section~\ref{sec:PAC_rates} retains these factors and applies to every
\(\rho\geq0\).

Writing \(\Ped\) for the empirical distribution of \(n\) i.i.d.\
draws from \(P\), we seek to bound the uniform robust-risk deviation
\begin{equation}
\label{eq:UC-rob}
\sup_{h\in\hypoclass}
\left|
\risk{k,\rho}{h}{P}
-
\risk{k,\rho}{h}{\Ped}
\right|.
\end{equation}
For any DRERM \(\erm\), empirical robust-risk optimality implies that
its robust excess risk is at most twice the quantity in
\eqref{eq:UC-rob}.

For \(h\in\hypoclass\), let
\[
A_h
\coloneqq
\left\{
(x,y)\in\X\times\Y:
h(x)\neq y
\right\}
\]
denote the {error set} of $h$ and let
\[
p_h
\coloneqq
P(A_h)
=
\err{h}{P}
\quad\text{and} \quad
\hat p_h
\coloneqq
\Ped(A_h)
=
\err{h}{\Ped}
\]
denote the population and empirical classification errors,
respectively.
Then
\begin{equation}
\label{eq:EI-arise}
\risk{k,\rho}{h}{P}
=
\sup_{Q\in\us_{k,\rho}(P)}Q(A_h).
\end{equation}
The right-hand side is the maximal inflation of the probability of
the error set \(A_h\) over \(\us_{k,\rho}(P)\). By the
event-probability reduction reviewed in
Section~\ref{sec:prior}, it depends on \(h\) only through
\(p_h=P(A_h)\). Denoting the resulting scalar map by
\(p\mapsto\q{k,\rho}{p}\), we have
\[
\risk{k,\rho}{h}{P}
=
\q{k,\rho}{p_h},
\qquad
\risk{k,\rho}{h}{\Ped}
=
\q{k,\rho}{\hat p_h}.
\]
Consequently, with
\[
\Delta_h
\coloneqq
|p_h-\hat p_h|,
\]
the problem in~\eqref{eq:UC-rob} reduces to controlling the event
inflation discrepancy
\[
\left|
\q{k,\rho}{p_h}
-
\q{k,\rho}{\hat p_h}
\right|,
\]
for which we prove
\begin{equation}
\label{eq:two-branch}
\bigl|
\q{k,\rho}{p_h}
-
\q{k,\rho}{\hat p_h}
\bigr|
\lesssim_k
\Delta_h
+
\rho^{1/k}
\begin{cases}
\Delta_h^{1-1/k},
&
p_h\le\Delta_h
\qquad\text{(deviation-dominated)};
\\[4pt]
\displaystyle
\frac{\Delta_h}{p_h^{1/k}},
&
p_h>\Delta_h \qquad\text{(error-dominated)};

\end{cases}
\end{equation}
see Corollary~\ref{cor:reference_scale_increment} for details.

The first term on the right-hand side of
\eqref{eq:two-branch}, \(\Delta_h\), is the traditional
statistical deviation between the true and empirical errors of
\(h\); we refer to it as the \emph{statistical term}. In
particular, as \(\rho\to0\), the robust-risk deviation reduces, up to \(k\)-dependent constants, to this quantity.

The second term, which we call the \emph{robustness term}, is incurred through robustness and has two branches. In the \emph{deviation-dominated regime},
\(
p_h\leq\Delta_h,
\)
it takes the non-Lipschitz form
\(
\rho^{1/k}\Delta_h^{1-1/k},
\)
so deviations of size \(\Delta_h\) are amplified by a factor
\((\rho/\Delta_h)^{1/k}\). In the
\emph{error-dominated regime},
\(
p_h>\Delta_h,
\)
it takes the form
\(
\left({\rho}/{p_h}\right)^{1/k}\Delta_h,
\)
which is linear in \(\Delta_h\) at the fixed error scale \(p_h\).
The two branches coincide at \(p_h=\Delta_h\). Retaining this
two-branch structure is essential for locating the worst case in
the agnostic analysis and obtaining the tight sample-complexity
rates; see Section~\ref{sec:compare-prior-detailed}.

The role of \eqref{eq:two-branch} differs between the realizable and
agnostic settings.
In the realizable setting, every DRERM \(\erm\) satisfies
\(\hat p_{\erm}=0\). Indeed, realizability implies that the minimum
empirical robust risk is zero, while
\(\q{k,\rho}{p}=0\) if and only if \(p=0\). Hence
\[
\Delta_{\erm}
=
|p_{\erm}-\hat p_{\erm}|
=
p_{\erm}.
\]
Thus, the realizable analysis lies at the crossover, and
\eqref{eq:two-branch} gives
\begin{align}
\label{eq:real_inf_map_intro}    
\q{k,\rho}{p_{\erm}} \lesssim_k p_{\erm} + \rho^{1/k}p_{\erm}^{1-1/k}.
\end{align}
A standard realizable VC bound gives
\(p_{\erm}\lesssim d/n\) with high probability. Substituting this
estimate into~\eqref{eq:real_inf_map_intro} yields the upper bound
\begin{align}
\label{eq:real_rate_intro}    
\frac{d}{n}
+
\rho^{1/k}
\left(\frac{d}{n}\right)^{1-1/k}
\end{align}
for the robust risk of DRERM.

The two terms in~\eqref{eq:real_rate_intro} are comparable when
\(\rho\asymp d/n\); above this scale, the robustness term dominates.
Requiring~\eqref{eq:real_rate_intro} to be at most \(\eps\) yields the
realizable sample complexity in
Theorem~\ref{thm:main:abridged}. 

In the agnostic setting, by contrast, \(p_h\) may vary over
\([0,1]\), while the scale-sensitive VC bound
\[
\Delta_h
\lesssim
\sqrt{p_h\frac{d}{n}}
+
\frac{d}{n}
\]
ties \(\Delta_h\) to \(p_h\). Substituting its right-hand side into
the crossover condition \(p_h\asymp\Delta_h\) places the crossover at
the scale
\(
p_h\asymp{d}/{n},
\)
where the VC bound is also of order \(d/n\). Thus, in the
\emph{near-zero} regime \(p_h\lesssim d/n\), the \(d/n\) term
dominates, whereas for \(p_h\asymp1\), the deviation scale is
\((d/n)^{1/2}\). 
Section~\ref{sec:PAC_rates} shows that, after substituting the VC bound into \eqref{eq:two-branch}, the maximization over \(p_h\) is governed by these two scales. Under the bounded-\(\rho\)
simplification used in this overview, they give the two competing contributions
\begin{equation}\label{eq:robust-bounds-rates}
\frac{d}{n}
+
\rho^{1/k}
\left(\frac{d}{n}\right)^{1/k_\star},
\qquad
\left(\frac{d}{n}\right)^{1/2}
+
\rho^{1/k}
\left(\frac{d}{n}\right)^{1/2}.
\end{equation}
The first expression in
\eqref{eq:robust-bounds-rates} arises from the near-zero regime,
whereas the second arises from the bounded-away-from-zero regime.
The sample complexity in the agnostic setting is determined by whichever of these two is larger and the outcome depends on $k$. For $1 < k < 2$ the left term seen in~\eqref{eq:robust-bounds-rates} overtakes the right term when
$
\rho \gtrsim_k \left({d}/{n}\right)^{1-k/2}; 
$
where at the threshold both terms have $(d/n)^{1/2}$ order of magnitude. For $k \geq 2$, the right term seen in~\eqref{eq:robust-bounds-rates} is always the dominant one of the two. Since $d/n\leq \sqrt{d/n}$, combining the different regimes for $k$ we essentially obtain
$$
\bigl|
\q{k,\rho}{p_h}
-
\q{k,\rho}{\hat p_h}
\bigr| \lesssim_k \sqrt{\frac{d}{n}}
+
\rho^{1/k}
\left(\frac{d}{n}\right)^{\frac1{k_\star}\wedge\frac12}.
$$
Solving this simplified bound for \(n\) explains the agnostic
\(\eps\)-exponents and, in particular, the transition at \(k=2\), seen in Theorem~\ref{thm:main:abridged}.
The full analysis in Section~\ref{sec:PAC_rates} removes the
bounded-\(\rho\) simplification used here and recovers the exact
dependence on \(\rho\).

\bigskip
\noindent\emph{Organization.}
Section~\ref{sec:EI} develops the event inflation map and its
discrepancy bounds. Section~\ref{sec:PAC_rates} derives the full
realizable and agnostic sample-complexity results. Proofs are given in
Appendices~\ref{app:EI-proofs} and~\ref{app:PAC_rates}, respectively.
Section~\ref{sec:compare-prior-detailed} compares our results with prior work in more detail.

\section{Event Inflation}
\label{sec:EI}

This section treats event inflation, first under general
\(f\)-divergences, and then under the Cressie--Read family.

\subsection{General Event Inflation}\label{sec:gen-EI}
The distributionally robust risk $\risk{\mathcal U_{f,\rho}}{h}{P}$ is 
the largest possible error of \(h\) under an admissible
perturbation of \(P\).
Equivalently, it quantifies the maximal inflation of the probability
of the error event
\[
A_h=\{(x,y)\in\X\times\Y:h(x)\neq y\},
\qquad h\in\hypoclass.
\]
Thus, for the \(0\)--\(1\)-loss, distributional robustness reduces to understanding how event probabilities inflate under admissible perturbations of the underlying distribution.

More generally, given an event $A\subseteq \X\times\Y$, the {\em event inflation problem} for $A$ with respect to $P$ calls for the evaluation of 
\begin{equation}\label{eq:inflate-def}
\q{f}{A,P,\rho}\coloneqq \sup_{
Q\in\usa{P}{f,\rho}
}Q(A).
\end{equation}

The requirement that \(Q\ll P\) implies that if \(P(A)=0\) then
\(Q(A)=0\) for every \(Q\in\usa{P}{f,\rho}\). Thus, null $P$-events cannot be inflated:
\[
\q{f}{A,P,\rho}=0
\qquad\text{whenever }P(A)=0.
\]
Similarly, if \(P(A)=1\) then \(Q(A)=1\) for every probability measure \(Q\ll P\), and hence
\[
\q{f}{A,P,\rho}=1 \qquad\text{whenever }P(A)=1.
\]
Thus the only nontrivial case is when
\(0<P(A)<1\).

For \(Q\ll P\), let \(L=dQ/dP\). Then
\[
Q(A)=\EE_P[L\ch_A],
\qquad
D_f(Q\|P)=\EE_P[f(L)],
\]
where \(L\geq0\) and \(\EE_P[L]=1\). Hence
\begin{equation}
\label{eq:inflation-L-formulation}
\q{f}{A,P,\rho}
=
\sup_L
\left\{
\EE_P[L\ch_A]:
L\geq0,\ 
\EE_P[L]=1,\ 
\EE_P[f(L)]\leq\rho
\right\}.
\end{equation}
The following proposition records the binary reduction underlying the
event-probability results reviewed in Section~\ref{sec:prior}. Its
proof replaces any feasible likelihood ratio by its conditional
averages on \(A\) and \(A^c\), preserving \(Q(A)\) without increasing
the \(f\)-divergence. We include the short argument for
self-containment.

\begin{proposition}\label{prop:2-value-char}
Given a convex $f$ with $f(1)=0$, a radius $\rho \geq 0$, distributions $P$ and $Q \in \usa{P}{f,\rho}$ with likelihood ratio $L\coloneqq dQ/dP$, as well as a measurable event $A\subseteq \X\times\Y$ with $P(A)\in(0,1)$, the two-valued likelihood ratio
$$
\tilde L(x,y)
 \coloneqq\frac{d\tilde Q}{dP} (x,y) 
= 
\begin{cases}
\EE_P\!\left[L| A\right], & (x,y) \in A;
\\[4pt]
\EE_P\!\left[L| A^c\right], & (x,y) \in A^c,
\end{cases}
$$
induces $\tilde Q \in\usa{P}{f,\rho}$ with $\tilde Q(A)=\EE_P[\tilde L \ch_A] = \EE_P[L \ch_A] = Q(A)$.
\end{proposition}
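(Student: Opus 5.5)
The plan is to verify directly that \(\tilde L\) is a legitimate likelihood ratio, that it preserves \(Q(A)\), and that it does not increase the \(f\)-divergence. Write \(a\coloneqq\EE_P[L\mid A]\) and \(b\coloneqq\EE_P[L\mid A^c]\). Both are well defined and finite because \(P(A),P(A^c)>0\) and \(L\) is integrable. Both are nonnegative because \(L\geq0\), so \(\tilde L\geq0\) is a measurable function taking only the values \(a\) and \(b\).

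For normalization, compute
\[
\EE_P[\tilde L]=aP(A)+bP(A^c)=\EE_P[L\ch_A]+\EE_P[L\ch_{A^c}]=\EE_P[L]=1 .
\]
Hence \(d\tilde Q\coloneqq\tilde L\,dP\) defines a probability measure with \(\tilde Q\ll P\). The same computation restricted to \(A\) gives
\[
\tilde Q(A)=\EE_P[\tilde L\ch_A]=aP(A)=\EE_P[L\ch_A]=Q(A),
\]
which is the claimed identity.

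The only substantive step is \(D_f(\tilde Q\|P)\leq D_f(Q\|P)\leq\rho\). I will apply the conditional Jensen inequality separately on \(A\) and on \(A^c\). Let \(P_A\coloneqq P(\cdot\mid A)\). Since \(f\) is convex on \([0,\infty)\) and \(L\) takes values there, Jensen under \(P_A\) gives \(f(a)=f(\EE_{P_A}[L])\leq\EE_{P_A}[f(L)]\), and similarly \(f(b)\leq\EE_{P_{A^c}}[f(L)]\). Multiplying by \(P(A)\) and \(P(A^c)\) and adding yields
\[
\EE_P[f(\tilde L)]=P(A)f(a)+P(A^c)f(b)\leq\EE_P[f(L)]\leq\rho .
\]

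The main technical obstacle is justifying Jensen when \(\EE_P[f(L)]\) might be infinite or \(f(L)\) might fail to be integrable. This is handled by noting that a convex \(f\) admits an affine minorant, e.g.\ a supporting line at \(1\). Consequently the negative part of \(f(L)\) is bounded by an affine function of \(L\), which is integrable. Therefore \(\EE_P[f(L)]\) is well defined in \((-\infty,\infty]\), and Jensen applies in the extended sense. Moreover, \(Q\in\usa{P}{f,\rho}\) already forces \(\EE_P[f(L)]\leq\rho<\infty\).

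A second possible subtlety is that \(f\) may be discontinuous at the endpoint \(0\). This needs no special treatment: \(a=0\) only when \(L=0\) \(P_A\)-a.s., in which case both sides of the Jensen inequality on \(A\) equal \(f(0)\). Altogether \(\tilde Q\in\usa{P}{f,\rho}\) with \(\tilde Q(A)=Q(A)\).
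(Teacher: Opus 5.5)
Your proof is correct and follows the paper's argument: you check nonnegativity and normalization of $\tilde L$, observe that $\tilde Q(A)=P(A)\,\EE_P[L\mid A]=Q(A)$, and apply Jensen's inequality separately on $A$ and $A^c$. Your extra remarks go beyond what the paper writes out: the affine minorant shows that $\EE_P[f(L)]$ is well defined, and the endpoint case $a=0$ handles a possible discontinuity of $f$ at $0$.
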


\begin{proof}
By construction, \(\tilde L\geq0\), and
\[
\EE_P[\tilde L]
=
P(A)\EE_P[L\mid A]
+
P(A^c)\EE_P[L\mid A^c]
=
\EE_P[L]
=
1.
\]
Moreover,
\[
\tilde Q(A)
=
\EE_P[\tilde L\ch_A]
=
P(A)\EE_P[L\mid A]
=
\EE_P[L\ch_A]
=
Q(A).
\]
Finally, Jensen's inequality on \(A\) and \(A^c\) gives
\begin{align*}
\EE_P[f(\tilde L)]
&=
P(A)f\!\left(\EE_P[L\mid A]\right)
+
P(A^c)f\!\left(\EE_P[L\mid A^c]\right)
\\[4pt]
&\leq
P(A)\EE_P[f(L)\mid A]
+
P(A^c)\EE_P[f(L)\mid A^c]
=
\EE_P[f(L)]
\leq
\rho.
\end{align*}
Hence
\(\tilde Q\in\usa{P}{f,\rho}\).
\end{proof}

Let \(p=P(A)\in(0,1)\). By
Proposition~\ref{prop:2-value-char}, it suffices to consider
two-valued likelihood ratios. If \(Q(A)=q\), normalization uniquely
determines such a likelihood ratio as
\[
L
=
\frac{q}{p}\ch_A
+
\frac{1-q}{1-p}\ch_{A^c}.
\]
Since \(Q=P\) is feasible, the supremum may be restricted to
\(q\in[p,1]\). Therefore,
\begin{equation}
\label{eq:1D-optimise}
\q{f,\rho}{p}
\coloneqq
\q{f}{A,P,\rho}
=
\sup\left\{
q\in[p,1]:
p f\left(\frac{q}{p}\right)
+
(1-p)f\left(\frac{1-q}{1-p}\right)
\leq\rho
\right\}.
\end{equation}
Together with
\[
\q{f,\rho}{0}=0,
\qquad
\q{f,\rho}{1}=1,
\]
this defines the event-inflation map
\(p\mapsto\q{f,\rho}{p}\). In particular,
\(\q{f}{A,P,\rho}\) depends on \(A\) and \(P\) only through
\(p=P(A)\).
 
\subsection{Event Inflation: Cressie--Read Family}
\label{sec:EI_CR}

For $k\in(1,\infty)$, let $f_k$ be as in \eqref{eq:CR_intro}.
Given distributions $Q \ll P$, recall that the {\em Cressie--Read divergence of order $k$} of $Q$ with respect to $P$ is given by 
$$
D_k(Q \| P)\coloneqq \EE_P[f_k(L)] = \frac{\EE_P[L^k]-1}{k(k-1)},
$$
where $L = dQ/dP$. 
The constraint $D_k(Q \| P) \leq \rho$ can then be rewritten as the moment bound  
$$\frac{\EE_P[L^k]-1}{k(k-1)} \leq \rho.
$$

\medskip
Setting $f = f_k$ in~\eqref{eq:1D-optimise}, one reaches the identity
\begin{align}
\nonumber
D_k\big(\mathrm{Ber}(q)\|\mathrm{Ber}(p)\big) 
&=
pf_k\left(\frac{q}{p}\right) +(1-p)f_k\left(\frac{1-q}{1-p}\right) 
\\&= 
\frac{1}{k(k-1)} \left(\frac{q^k}{p^{k-1}}+\frac{(1-q)^k}{(1-p)^{k-1}}-1 \right)\,=:\,D_k\big(q\|p\big),
\label{eq:D_k_q_p}  
\end{align}
allowing us to specialize~\eqref{eq:1D-optimise} to the current setting and consequently write 
\begin{align}
\label{eq:q_opt_prob}   
\q{k,\rho}{p}
= \sup \left\{q \in [p,1]: D_k\big(q\|p\big) \leq \rho\right\},
\end{align}
where $p\coloneqq P(A)$.

\medskip
For $\rho>0$ and $k >1$, there is a critical value $0<p_c(\rho;k)<1$ such that $\q{k,\rho}{p}=1$ whenever $p\geq p_c(\rho;k)$.
In other words, if the event $A$ holds with probability $P(A)\geq p_c(\rho;k)$, then the adversary may inflate the probability of $A$ so that $A$ holds with probability one post-inflation. 
Indeed,
$$
\sup_{Q \ll P: D_k(Q\|P) \leq \rho} Q(A) =1 \iff D_k(1\|p) \leq \rho.
$$
The condition $D_k(1\|p) \leq \rho$ is equivalent to 
\begin{align}
p \geq \frac{1}{(1+k(k-1)\rho)^{\frac {1} {k-1}}} =: p_c(\rho;k).
\label{eq:p_c}
\end{align}

For a fixed \(p\in(0,1)\), the map \(q\mapsto D_k(q\|p)\) is continuous and strictly increasing in \(q\in(p,1]\), since
\begin{align*}
\frac{\partial D_k(q\|p)}{\partial q} = \frac{1}{k-1} \left(\frac{q^{k-1}}{p^{k-1}}-\frac{(1-q)^{k-1}}{(1-p)^{k-1}}\right) > 0.
\end{align*}
For \(0<p<p_c(\rho;k)\), \eqref{eq:p_c} gives
\[
D_k(p\|p)=0<\rho<D_k(1\|p).
\]
Since \(q\mapsto D_k(q\|p)\) is continuous and strictly increasing on
\([p,1]\), there is a unique \(q\in(p,1)\) satisfying
\(D_k(q\|p)=\rho\), and this \(q\) is the largest feasible value in
the optimization problem \eqref{eq:q_opt_prob}. Hence,
\begin{align}
\q{k,\rho}{p}
=
\begin{cases}
0, & p=0,
\\[4pt]
\text{the unique } q\in(p,1) \text{ such that } D_k(q\|p)=\rho,
& \text{for } 0<p<p_c(\rho;k),
\\[4pt]
1,
& \text{for } p\ge p_c(\rho;k).
\end{cases}
\label{eq:q(p)_def}
\end{align}

At this point, it is useful to first examine the case \(k=2\), for
which the defining equation can be solved explicitly.  In
Appendix~\ref{sec:eq:q_2_closed_form}, we prove that
\begin{align}
\q{2,\rho}{p}
=
\begin{cases}
\displaystyle
p+\sqrt{2\rho\,p(1-p)},
&
0\le p<p_c(\rho;2),
\\[8pt]
1,
&
p\ge p_c(\rho;2),
\end{cases}
\qquad\text{where}\qquad
p_c(\rho;2)=\frac{1}{1+2\rho}.
\label{eq:q_2_closed_form}
\end{align}
For \(0<p<p_c(\rho;2)\), \eqref{eq:q_2_closed_form} gives
\[
\frac{d\q{2,\rho}{p}}{dp}
=
1+
\frac{\sqrt{2\rho}(1-2p)}
     {2\sqrt{p(1-p)}}.
\]
Thus, as \(p\downarrow0\), the derivative is of order
\(1+\sqrt{\rho/p}\), whereas the map is constant for
\(p\geq p_c(\rho;2)\). Hence, the strongest local amplification
occurs for perturbations of rare-event probabilities.

The following lemma gives the regularity properties and derivative
bounds of \(p\mapsto\q{k,\rho}{p}\) for every \(k>1\).

\begin{lemma}
\label{lem:q(p)_well_defined}
Let \(k>1\) and \(\rho>0\), and let \(p_c(\rho;k)\) be as in
\eqref{eq:p_c}. The map \(p\mapsto\q{k,\rho}{p}\) is
absolutely continuous on \([0,1]\), differentiable on
\((0,p_c(\rho;k))\), and there exist constants
\(
\tilde c_k,\tilde C_k,c_k,C_k>0
\)
such that for every \(p\in(0,p_c(\rho;k))\),
\begin{align}
0
\leq
\tilde C_k
+
\tilde c_k
\left(\frac{\rho}{p}\right)^{1/k}
\leq
\frac{d\q{k,\rho}{p}}{dp}
\leq
C_k
+
c_k
\left(\frac{\rho}{p}\right)^{1/k}.
\label{eq:q'_bound}
\end{align}
\end{lemma}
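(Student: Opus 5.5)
The plan is to differentiate the defining equation $D_k(q\|p)=\rho$ implicitly, express $d\q{k,\rho}{p}/dp$ in terms of the likelihood-ratio values $a=q/p$ and $b=(1-q)/(1-p)$, and then bound $a$ above and below using the constraint itself.

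\emph{Differentiability and a formula for the derivative.} Fix $p\in(0,p_c(\rho;k))$ and write $q=\q{k,\rho}{p}\in(p,1)$, as in \eqref{eq:q(p)_def}. Set
\[
a\coloneqq q/p>1,
\qquad
b\coloneqq (1-q)/(1-p)\in(0,1).
\]
The function $(p,q)\mapsto D_k(q\|p)$ from \eqref{eq:D_k_q_p} is $C^1$ on $(0,1)^2$. Direct differentiation gives
\[
\partial_q D_k=\tfrac{1}{k-1}\bigl(a^{k-1}-b^{k-1}\bigr)>0,
\qquad
\partial_p D_k=\tfrac1k\bigl(b^k-a^k\bigr).
\]
By the implicit function theorem, $p\mapsto\q{k,\rho}{p}$ is $C^1$ on $(0,p_c(\rho;k))$, with
\[
\frac{d\q{k,\rho}{p}}{dp}=\frac{k-1}{k}\cdot\frac{a^k-b^k}{a^{k-1}-b^{k-1}}
=\frac{k-1}{k}\,a\,g(t),
\qquad
t\coloneqq b/a\in[0,1),
\qquad
g(t)\coloneqq\frac{1-t^k}{1-t^{k-1}}.
\]
I will check that $g$ is increasing on $[0,1)$, with $g(0)=1$ and $g(t)\to k/(k-1)$ as $t\uparrow1$. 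This yields the two-sided bound $\tfrac{k-1}{k}\,a\le d\q{k,\rho}{p}/dp\le a$. The lemma therefore reduces to showing $a\asymp_k 1+(\rho/p)^{1/k}$.

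\emph{Bounding $a$.} For the lower bound, drop the $-1$ in \eqref{eq:D_k_q_p} and use $b^k(1-p)\le b(1-p)=1-q\le1$. This gives $k(k-1)\rho\le pa^k$, so $a\ge (k(k-1)\rho/p)^{1/k}$. Combining this with $a\ge1$ gives
\[
a\ge\tfrac12\bigl(1+(k(k-1)\rho/p)^{1/k}\bigr),
\]
which supplies $\tilde C_k$ and $\tilde c_k$. For the upper bound, use $\rho=D_k(q\|p)\ge p f_k(a)$, since $f_k\ge0$. If $a\le2$ there is nothing to prove. If $a>2$, then $f_k(a)\ge c_k a^k\ge c_k(a-1)^k$, because $a^k$ dominates the linear terms of $f_k$ uniformly for $a\ge2$. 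Hence $a-1\le C(\rho/p)^{1/k}$, and in all cases $a\le 2+C_k'(\rho/p)^{1/k}$. Splitting at $a=2$ is what avoids the quadratic behaviour $f_k(a)\approx(a-1)^2/2$ near $a=1$, which for $k<2$ would otherwise spoil the $(\rho/p)^{1/k}$ form.

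\emph{Absolute continuity on $[0,1]$.} The map $p\mapsto\q{k,\rho}{p}$ is nondecreasing, since its derivative is positive on $(0,p_c)$ and it equals $1$ on $[p_c,1]$. It is continuous at $0$, because $q=pa\lesssim_k p+\rho^{1/k}p^{1-1/k}\to0$. It is continuous at $p_c$: the unique root of $D_k(\cdot\|p)=\rho$ tends to $1$ because $D_k(1\|p_c)=\rho$ and $D_k(q\|p)$ is jointly continuous and strictly increasing in $q$. Moreover, the derivative is bounded by $C_k+c_k(\rho/p)^{1/k}$, which is integrable near $0$ since $1/k<1$. So on $[\eta,p_c-\eta]$ the function is $C^1$, and letting $\eta\downarrow0$ the fundamental theorem of calculus extends by monotone convergence and continuity. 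Thus $\q{k,\rho}{\cdot}$ is the integral of an $L^1$ function on $[0,p_c]$ and constant afterwards, hence absolutely continuous.

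The main obstacle I anticipate is the monotonicity of $g$, i.e.\ the sandwich $1\le g\le k/(k-1)$. I would first try writing $g(t)=\int_t^1 ks^{k-1}\,ds\big/\int_t^1(k-1)s^{k-2}\,ds$, which is a weighted average of $\tfrac{k}{k-1}s$ over $[t,1]$ with weight proportional to $s^{k-2}$. This representation gives both the sandwich and the monotonicity directly.
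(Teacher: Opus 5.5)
Your proof is correct and is essentially the paper's argument: the same implicit-differentiation formula, the same sandwich $\frac{k-1}{k}\,a\le \frac{d\q{k,\rho}{p}}{dp}\le a$, the same lower bound on $a$ from $(1-p)b^k\le 1$, and the same continuity and absolute-continuity argument. The paper proves $1\le g\le k/(k-1)$ directly from the same integral representation rather than via monotonicity, and the only other local difference is the upper bound on $a$. There you discard the nonnegative term $(1-p)f_k(b)$ and use $f_k(a)\ge c_k a^k$ for $a\ge 2$, whereas the paper substitutes $r=q-p$, applies the mean value theorem, and case-splits on $x^k\le\Lambda+kx$. Your bound is valid, but ``$a^k$ dominates the linear terms'' is not literally true for $k$ near $1$; the correct justification is that $(a^k-ka+k-1)/a^k$ is increasing on $(1,\infty)$ and positive at $a=2$.
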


\begin{remark}
The constants in Lemma~\ref{lem:q(p)_well_defined} depend on \(k\), and
no uniformity is asserted as \(k\downarrow1\).  Throughout the sequel,
\(k>1\) is fixed when invoking the derivative bounds in
\eqref{eq:q'_bound}.
\end{remark}

\bigskip
We next examine the event inflation discrepancy for \(k=2\).
Let
\[
\Delta\coloneqq |p-p'|
\]
and suppose that \(p,p'\in(0,p_c(\rho;2))\). By
\eqref{eq:q_2_closed_form},
\begin{align*}
\left|
\q{2,\rho}{p}
-
\q{2,\rho}{p'}
\right|
&\leq
\Delta
+
\sqrt{2\rho}\,
\frac{\Delta}{
\sqrt{p(1-p)\vee p'(1-p')}
}.
\end{align*}
Indeed,
\[
\left|
\sqrt{p(1-p)}
-
\sqrt{p'(1-p')}
\right|
=
\frac{
|p(1-p)-p'(1-p')|
}{
\sqrt{p(1-p)}+\sqrt{p'(1-p')}
}
\leq
\frac{\Delta}{
\sqrt{p(1-p)\vee p'(1-p')}
}.
\]
By contrast, the cruder inequality
\[
\left|
\sqrt{p(1-p)}
-
\sqrt{p'(1-p')}
\right|
\leq
\sqrt{\Delta}
\]
gives only
\[
\left|
\q{2,\rho}{p}
-
\q{2,\rho}{p'}
\right|
\leq
\Delta+\sqrt{2\rho\Delta},
\]
which loses the dependence on the underlying event-probability scale.

If, in addition, \(p,p'\) are bounded away from \(1\), the preceding upper bound and the lower derivative bound in~\eqref{eq:q'_bound} give
\begin{equation}
\label{eq:k=2-regime-capture}
\left|
\q{2,\rho}{p}
-
\q{2,\rho}{p'}
\right|
\asymp
\Delta
+
\sqrt{\rho}\,
\frac{\Delta}{\sqrt{p\vee p'}}.
\end{equation}
Indeed, the case \(p=p'\) is immediate. Otherwise, after
interchanging \(p\) and \(p'\) if necessary, assume that \(p<p'\).
Then,
\[
\q{2,\rho}{p'}
-
\q{2,\rho}{p}
\gtrsim
\Delta
+
\sqrt{\rho}
\int_p^{p'}t^{-1/2}\,dt
\gtrsim
\Delta
+
\sqrt{\rho}\,
\frac{\Delta}{\sqrt{p'}}.
\]

Consequently, when \(\Delta\ll p\vee p'\), the discrepancy is
linear in \(\Delta\), with amplification factor of order
\[
1+\sqrt{\frac{\rho}{p\vee p'}}.
\]
When \(\Delta\asymp p\vee p'\), it has order
\[
\Delta+\sqrt{\rho\Delta}.
\]
These are the error-dominated and deviation-dominated regimes,
respectively.

For \(p,p'\in[0,1]\), write
\[
\Delta\coloneqq|p'-p|,
\qquad
\cp\coloneqq p\wedge p_c(\rho;k),
\qquad
\cp^{\,\prime}\coloneqq p'\wedge p_c(\rho;k),
\qquad
\cd\coloneqq|\cp^{\,\prime}-\cp|.
\]
Since
\[
\q{k,\rho}{p}
=
\q{k,\rho}{\cp},
\qquad p\in[0,1],
\]
the following theorem gives the corresponding discrepancy bounds for
every \(k>1\).

\begin{theorem}
\label{thm:qp-qp'_bounds}
Let \(k>1\), \(\rho>0\) and let \(p_c(\rho;k)\) be as in
\eqref{eq:p_c}. 
There exist constants $c_k,\tilde c_k>0$ such that, when \(p,p'\in[0,1]\),
\begin{equation}\label{eq:disc-upper}
\bigl|
\q{k,\rho}{p}
-
\q{k,\rho}{p'}
\bigr|
\le
c_k\,\cd\cdot
\left(
1 +
\left(\frac{\rho}
     {\bar p\vee\bar p'}\right)^{1/k}
     \right);
\end{equation}
this with the convention that the right-hand side is zero when
\(\cd=0\).

Moreover, when \(0\leq p<p'\leq p_c(\rho;k)\),
\begin{equation}\label{eq:disc-lower}
\q{k,\rho}{p'}
-
\q{k,\rho}{p}
\ge
\tilde c_k\,\Delta
\cdot\left(
1 +
\left(\frac{\rho}
     {p'}\right)^{1/k}
    \right).
\end{equation}
\end{theorem}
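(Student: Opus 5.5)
Both bounds follow by integrating the derivative bounds of Lemma~\ref{lem:q(p)_well_defined}. The map \(\q{k,\rho}{\cdot}\) is absolutely continuous on \([0,1]\) and constant on \([p_c(\rho;k),1]\). So for \(\cp<\cp^{\,\prime}\) (the other order is symmetric, and \(\cd=0\) forces equality of \(\q{k,\rho}{p}\) and \(\q{k,\rho}{p'}\)) we can write
\[
\q{k,\rho}{p'}-\q{k,\rho}{p}=\int_{\cp}^{\cp^{\,\prime}}\frac{d\q{k,\rho}{t}}{dt}\,dt .
\]
The integral runs over \((0,p_c(\rho;k))\) up to a null set, so \eqref{eq:q'_bound} applies a.e. on the interval of integration.

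\emph{Upper bound.} Plugging in the upper derivative bound gives
\[
\bigl|\q{k,\rho}{p'}-\q{k,\rho}{p}\bigr|\le C_k\cd+c_k\rho^{1/k}\int_{\cp}^{\cp^{\,\prime}}t^{-1/k}\,dt .
\]
The remaining integral is \(k_\star\bigl(\cp^{\,\prime\,1/k_\star}-\cp^{\,1/k_\star}\bigr)\), where \(1/k_\star=1-1/k\in(0,1)\). We need this to be \(\lesssim_k \cd\,(\cp^{\,\prime})^{-1/k}\). Write \(x=\cp/\cp^{\,\prime}\in[0,1)\) and \(a=1-1/k\). The claim becomes
\[
1-x^{a}\le C(1-x),
\]
which holds with \(C=1\)? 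No: \(x^a\ge x\) for \(a<1\), so \(1-x^a\le 1-x\) holds with constant exactly \(1\). Hence the integral is at most \(k_\star(\cp^{\,\prime})^{a-1}(\cp^{\,\prime}-\cp)=k_\star\cd\,(\cp^{\,\prime})^{-1/k}\). Since \(\cp\vee\cp^{\,\prime}=\cp^{\,\prime}\), this is \eqref{eq:disc-upper} with \(c_k=\max\{C_k,k_\star c_k\}\). When \(\cp=0\) the same computation gives \(k_\star\rho^{1/k}\cd^{\,1-1/k}\), which is the same quantity and consistent with the formula. Nothing is lost by capping at \(p_c\), because the map is flat above it.

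\emph{Lower bound.} Now \(p<p'\le p_c(\rho;k)\), so \(\cd=\Delta\). Integrating the lower derivative bound gives
\[
\tilde C_k\Delta+\tilde c_k\rho^{1/k}\int_p^{p'}t^{-1/k}\,dt ,
\]
and since \(t^{-1/k}\ge (p')^{-1/k}\) on \([p,p']\), the integral is at least \(\Delta (p')^{-1/k}\). This yields \eqref{eq:disc-lower} with constant \(\min\{\tilde C_k,\tilde c_k\}\).

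The only delicate point is justifying the fundamental theorem of calculus near the endpoints \(0\) and \(p_c\). At those points the derivative may blow up (\(t^{-1/k}\) at \(0\)) or the map may be nondifferentiable (at \(p_c\)). This is exactly what the absolute continuity asserted in Lemma~\ref{lem:q(p)_well_defined} provides. The integrand \(t^{-1/k}\) is integrable at \(0\) because \(1/k<1\), so the bounds pass to the limit without issue.
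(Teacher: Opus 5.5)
Your proposal is correct and follows essentially the same argument as the paper. Both integrate the derivative bounds of Lemma~\ref{lem:q(p)_well_defined}, then use \(1-x^{1-1/k}\le 1-x\) for the upper bound and \(t^{-1/k}\ge (p')^{-1/k}\) for the lower bound. The only minor difference is in the endpoint cases \(p=0\) or \(p'=p_c(\rho;k)\) of the lower bound: you apply the fundamental theorem of calculus for absolutely continuous functions directly on the closed interval, whereas the paper first proves the interior case and then passes to the limit by continuity.
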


Later on we require a discrepancy bound expressed
only in terms of \(\cp\) and \(\cd\).  To eliminate
\(\cp'\), observe that
\[
\frac{\cd}
     {(\bar p\vee\bar p^{\,\prime})^{1/k}}
\leq
\frac{\cd}
     {(\bar p\vee\cd)^{1/k}}
=
\begin{cases}
\cd^{1-1/k},
&
\cp\le\cd;
\\[4pt]
\displaystyle
\frac{\cd}{\cp^{1/k}},
&
\cp>\cd.
\end{cases}
\]

\begin{corollary}
\label{cor:reference_scale_increment}
Let \(k>1\), \(\rho>0\), and
\(c_k>0\) be as in Theorem~\ref{thm:qp-qp'_bounds}.
Then, 
whenever \(p,p'\in[0,1]\),
\[
\bigl|
\q{k,\rho}{p}
-
\q{k,\rho}{p'}
\bigr|
\le
c_k\cd
+
c_k\rho^{1/k}
\begin{cases}
\cd^{1-1/k},
&
\cp\le\cd
\qquad\textup{(deviation-dominated)};
\\[4pt]
\displaystyle
\frac{\cd}{\cp^{1/k}},
&
\cp>\cd \qquad\textup{(error-dominated)}.

\end{cases}
\]
\end{corollary}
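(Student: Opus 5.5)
The corollary follows directly from the upper bound \eqref{eq:disc-upper} of Theorem~\ref{thm:qp-qp'_bounds} together with the displayed observation preceding the statement. My plan is to start from
\[
\bigl|\q{k,\rho}{p}-\q{k,\rho}{p'}\bigr|
\le c_k\,\cd + c_k\,\rho^{1/k}\,\frac{\cd}{(\cp\vee\cp^{\,\prime})^{1/k}},
\]
and then replace the denominator by one that involves only \(\cp\) and \(\cd\).

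First I dispose of the degenerate case \(\cd=0\). Then \(\cp=\cp^{\,\prime}\), so \(\q{k,\rho}{p}=\q{k,\rho}{\cp}=\q{k,\rho}{\cp^{\,\prime}}=\q{k,\rho}{p'}\). The left-hand side is zero, and the right-hand side is nonnegative under either branch, since \(0^{1-1/k}=0\).

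Now assume \(\cd>0\). The key inequality is \(\cp\vee\cp^{\,\prime}\ge \cp\vee\cd\). Indeed, \(\cp\vee\cp^{\,\prime}\ge\cp\) trivially. Also \(\cd=|\cp^{\,\prime}-\cp|\le \cp\vee\cp^{\,\prime}\), because both quantities lie in \([0,1]\) and are nonnegative. Since \(t\mapsto t^{-1/k}\) is decreasing on \((0,\infty)\) and \(\cp\vee\cd\ge\cd>0\), we get
\[
\frac{\cd}{(\cp\vee\cp^{\,\prime})^{1/k}}\le\frac{\cd}{(\cp\vee\cd)^{1/k}}.
\]
I then split into the two cases. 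If \(\cp\le\cd\), the right-hand side equals \(\cd^{1-1/k}\), which is the deviation-dominated branch. If \(\cp>\cd\), it equals \(\cd/\cp^{1/k}\), which is the error-dominated branch. Substituting into the theorem's bound yields the claim with the same constant \(c_k\).

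There is no real obstacle here, since all the analytic work sits in Theorem~\ref{thm:qp-qp'_bounds}. The only point needing care is the convention when \(\cp\vee\cp^{\,\prime}=0\), which forces \(\cd=0\) and is handled by the degenerate case above.
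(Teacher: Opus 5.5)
Your proof is correct and follows the paper's route: the paper also obtains the corollary from \eqref{eq:disc-upper} via the bound \(\cd/(\cp\vee\cp^{\,\prime})^{1/k}\le\cd/(\cp\vee\cd)^{1/k}\), evaluated in the same two cases. Your justification of \(\cd\le\cp\vee\cp^{\,\prime}\) and your separate handling of \(\cd=0\) spell out details the paper leaves implicit.
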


In particular, when $\cp'=0$, one has $\cd=\cp$.
Corollary~\ref{cor:reference_scale_increment} then gives a bound on the inflation map:
\begin{align}
\label{eq:gen-k-regime-capture}
\q{k,\rho}{p} \leq 
c_k\bar{p}\,+\,c_k\rho^{1/k}\,\bar{p}^{1-\frac{1}{k}},
\qquad p\in[0,1].
\end{align}

\begin{remark}[The Degenerate Radius \(\rho=0\)]
\emph{For \(\rho=0\), we use
\(\q{k,0}{p}=p\) for \(p\in[0,1]\), which follows from
\(\usa{P}{k,0}=\{P\}\).}
\end{remark}

\section{Sample Complexity Analysis}
\label{sec:PAC_rates}
This section states the realizable and agnostic sample-complexity
bounds underlying Theorem~\ref{thm:main:abridged}. All proofs are
deferred to Appendix~\ref{app:PAC_rates}.

\subsection{Uniform Robust-Risk Deviation}
\label{sec:pre}

For a hypothesis class \(\hypoclass\) satisfying
\(\VC(\hypoclass)=d\), our derivation of the sample-complexity upper
bounds proceeds by establishing high-probability control of the uniform
robust-risk deviation
\[
\sup_{h\in\hypoclass}
\left|
\risk{k,\rho}{h}{P}
-
\risk{k,\rho}{h}{\Ped}
\right|.
\]

For $h\in\hypoclass$, write
\[
p_h \coloneqq  \err{h}{P}
\qquad\text{and}\qquad 
\hat p_h \coloneqq  \err{h}{\Ped}.
\]
Following Section~\ref{sec:EI}, the relevant quantities are the clipped error levels
\[
\cp_h\coloneqq  p_h\wedge p_c(\rho;k),
\qquad
\cep_h\coloneqq  \hat p_h\wedge p_c(\rho;k),
\qquad
\cd_h
\coloneqq 
|\cep_h-\cp_h|,
\]
where the clipping error level $p_c(\rho;k)$ is as in \eqref{eq:p_c}.
Following Section~\ref{sec:EI}, we may write
\[
\bigl|
\risk{k,\rho}{h}{P}
-
\risk{k,\rho}{h}{\Ped}
\bigr|
=
\bigl|
\q{k,\rho}{\cp_h}
-
\q{k,\rho}{\cep_h}
\bigr|.
\]
Corollary~\ref{cor:reference_scale_increment} then asserts 
\begin{align}
\label{eq:desc_bound_agnostic_discussion}   
\bigl|
\risk{k,\rho}{h}{P}
-
\risk{k,\rho}{h}{\Ped}
\bigr|
\lesssim_k
\cd_h
+
\rho^{1/k}
\min\left\{
\cd_h^{1-1/k},
\frac{\cd_h}{\cp_h^{1/k}}
\right\}.
\end{align}
This leads to the robust-risk deviation being uniformly bounded by
\begin{equation}\label{eq:1D-reduction}
\sup_{h\in\hypoclass}\,
\bigl|
\risk{k,\rho}{h}{P}
-
\risk{k,\rho}{h}{\Ped}
\bigr| 
\lesssim_k \sup_{h\in\hypoclass}
\left[
\cd_h
+
\rho^{1/k}
\min\left\{
\cd_h^{1-1/k},
\frac{\cd_h}{\cp_h^{1/k}}
\right\}
\right].
\end{equation}
Once \(\cd_h\) is bounded in terms of \(\cp_h\), the right-hand side
of~\eqref{eq:1D-reduction} is controlled by a one-dimensional
maximization over
\(\cp_h\in[0,p_c(\rho;k)]\).

The following lemma provides the scale-sensitive VC control needed
below. It follows from the classical relative VC bounds
of~\citet[Theorem~5.1]{boucheron2005theory}, the VC growth bound, and
the clipping argument given in Appendix~\ref{app:clipped-vc-master}.
For \(d\geq1\), \(n\geq d\), and \(\delta\in(0,1)\), define
\begin{equation}
\label{eq:Gamma_def}
\Gamma_{d,n}(\delta)
\coloneqq
d\log\left(\frac{2\e n}{d}\right)
+
\log\left(\frac{8}{\delta}\right),
\end{equation}
and abbreviate \(\Gamma_{d,n}(\delta)\) by \(\GammaMac\) when no
confusion can arise.

\begin{lemma}[Clipped scale-sensitive VC bound]
\label{lem:clipped-vc-master}
Fix \(k>1\) and \(\rho\geq0\). Let \(\hypoclass\) be a hypothesis
class with
\(
\VC(\hypoclass)=d\geq1,
\)
let \(n\geq d\), and let \(\delta\in(0,1)\).
There exists a universal constant \(C>0\) such that, with
probability at least \(1-\delta\),
\begin{align}
\label{eq:master_vc_abs_clip}
\cd_h
\leq
C\left(
\sqrt{\frac{\cp_h\GammaMac}{n}}
+
\frac{\GammaMac}{n}
\right)
\end{align}
holds uniformly over all \(h\in\hypoclass\). In particular,
\begin{align}
\label{eq:master_vc_real_clip}
\cep_h=0
\quad\Longrightarrow\quad
\cp_h
\leq
C\,\frac{\GammaMac}{n}
\end{align}
holds uniformly over all \(h\in\hypoclass\).
\end{lemma}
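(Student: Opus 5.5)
We will derive \eqref{eq:master_vc_abs_clip} in two steps. First we establish the unclipped relative deviation bound for \(\Delta_h=|p_h-\hat p_h|\). Then we check that clipping at \(p_c\coloneqq p_c(\rho;k)\) preserves it, with \(p_h\) replaced by \(\cp_h\). The case \(\rho=0\) needs no separate treatment: \(p_c=1\) and there is no clipping, so only the first step matters.

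\emph{Unclipped bound.} Apply the two one-sided relative VC inequalities of \citet[Theorem~5.1]{boucheron2005theory} to the class of error sets \(\{A_h:h\in\hypoclass\}\). Its shatter coefficient is at most that of \(\hypoclass\), which by Sauer's lemma is at most \((\e n/d)^d\) for \(n\geq d\); this is the source of the \(d\log(2\e n/d)\) term in \(\GammaMac\). Each inequality is used at confidence \(\delta/2\). The first gives, uniformly over \(h\),
\[
p_h-\hat p_h\le 2\sqrt{p_h\GammaMac/n}.
\]
The second gives
\[
\hat p_h-p_h\le 2\sqrt{\hat p_h\GammaMac/n}.
\]
We solve the second as a quadratic inequality in \(\sqrt{\hat p_h}\). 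This yields \(\sqrt{\hat p_h}\lesssim\sqrt{p_h}+\sqrt{\GammaMac/n}\), and hence \(\hat p_h-p_h\lesssim\sqrt{p_h\GammaMac/n}+\GammaMac/n\). Combining the two directions gives \(\Delta_h\le C(\sqrt{p_h\GammaMac/n}+\GammaMac/n)\) on a single event of probability at least \(1-\delta\). The constant \(8/\delta\) inside \(\GammaMac\) absorbs the factors \(4\) and \(2\) coming from the cited theorem and the union over the two directions.

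\emph{Clipping.} On this event, fix \(h\) and split into cases according to the position of \(p_h\) and \(\hat p_h\) relative to \(p_c\).
\begin{itemize}
\item If both lie at or above \(p_c\), then \(\cd_h=0\) and there is nothing to prove.
\item If both lie below \(p_c\), the clipped quantities equal the unclipped ones, and the unclipped bound applies directly.
\item If \(\hat p_h<p_c\le p_h\), then \(\cd_h=p_c-\hat p_h\le p_h-\hat p_h\le 2\sqrt{p_h\GammaMac/n}\). Here \(p_h\) may exceed \(\cp_h=p_c\). To handle this, use that \(x\mapsto x-2\sqrt{x\GammaMac/n}\) is nondecreasing once \(x\ge\GammaMac/n\), and that \(\hat p_h\ge p_h-2\sqrt{p_h\GammaMac/n}\). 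Together these give \(\hat p_h\ge p_c-2\sqrt{p_c\GammaMac/n}\), so \(\cd_h\le2\sqrt{\cp_h\GammaMac/n}\). If instead \(p_c<\GammaMac/n\), then trivially \(\cd_h\le p_c\le\GammaMac/n\).
\item If \(p_h<p_c\le\hat p_h\), then \(\cd_h=p_c-p_h\le\hat p_h-p_h\lesssim\sqrt{p_h\GammaMac/n}+\GammaMac/n\), and here \(p_h=\cp_h\).
\end{itemize}

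\emph{Consequence \eqref{eq:master_vc_real_clip}.} If \(\cep_h=0\), then \(\cd_h=\cp_h\), so \(\cp_h\le C\sqrt{\cp_h\GammaMac/n}+C\GammaMac/n\). Solving this quadratic inequality in \(\sqrt{\cp_h}\) gives \(\cp_h\le C'\GammaMac/n\); we then enlarge \(C\) to cover \(C'\).

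The main obstacle is the third clipping case. There the only available control involves \(p_h\), which may be far larger than \(\cp_h\), so the monotonicity argument above is the step that has to be carried out with care.
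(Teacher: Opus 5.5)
Your proof is correct, and your unclipped step matches the paper's: the two one-sided relative inequalities of \citet[Theorem~5.1]{boucheron2005theory} at confidence $\delta/2$ each, plus Sauer--Shelah. The paper writes $\hat p_h=p_h+\Delta_h$ and absorbs $C\sqrt{\Delta_h a}\le\Delta_h/2+C^2a/2$, where $a=\GammaMac/n$, instead of solving a quadratic in $\sqrt{\hat p_h}$; this is immaterial. One small correction: that theorem uses the shatter coefficient on the doubled sample of size $2n$, so Sauer gives $(2\e n/d)^d$, not $(\e n/d)^d$, and this is where $d\log(2\e n/d)$ comes from.

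The clipping step is where your route genuinely differs. The paper uses no case split. Clipping is $1$-Lipschitz, so $\cd_h\le\Delta_h$. Whenever $\cd_h>0$ one also has $p_h\le\cp_h+\Delta_h$; the only nontrivial situation is $\hat p_h<p_c\le p_h$, where $p_h-p_c\le p_h-\hat p_h$. Substituting into $\Delta_h\le C(\sqrt{p_h a}+a)$ and absorbing $\sqrt{\Delta_h a}$ by the same AM--GM step gives $\Delta_h\le C(\sqrt{\cp_h a}+a)$ at once. You instead split into four cases and handle the hard one by monotonicity of $x\mapsto x-2\sqrt{xa}$ on $[a,\infty)$ applied to the one-sided lower-tail bound, with $p_c<a$ treated trivially.

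Both work. The paper's route is shorter and needs only the symmetric bound on $\Delta_h$, not the one-sided relative form. It also controls the full unclipped $\Delta_h$ at the clipped scale whenever $\cd_h>0$. Your route keeps the explicit constant in the hard case.

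For \eqref{eq:master_vc_real_clip}, you solve $\cp_h\le C\sqrt{\cp_h a}+Ca$ directly. The paper instead notes $\cep_h=0\Rightarrow\hat p_h=0$ (since $p_c>0$), applies the unclipped implication $\hat p_h=0\Rightarrow p_h\le Ca$, and uses $\cp_h\le p_h$. The two are equivalent.
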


\subsection{Realizable Case}

In the realizable setting, the empirical minimizer $\erm$ satisfies
$\cep_{\erm}=0$, so that $\cd_{\erm}=\cp_{\erm}$. Consequently,
\(
\q{k,\rho}{\cep_{\erm}}=\q{k,\rho}{0}=0,
\)
allowing for
\begin{equation}\label{eq:real-infaltion}
\bigl|
\q{k,\rho}{\cp_{\erm}}
-
\q{k,\rho}{\cep_{\erm}}
\bigr|
=
\q{k,\rho}{\cp_{\erm}}
\overset{\eqref{eq:gen-k-regime-capture}}{\lesssim_k}
\cp_{\erm}+
\rho^{1/k}\cp_{\erm}^{1-1/k}.
\end{equation}
Since the last bound increases with $\cp_{\erm}$, controlling the robust
risk reduces to bounding $\cp_{\erm}$. A bound on $\cp_{\erm}$ is provided by \eqref{eq:master_vc_real_clip} of
Lemma~\ref{lem:clipped-vc-master}, 
asserting that 
\begin{equation}\label{eq:minimiser-clip}
\cep_h=0
\quad\Longrightarrow\quad
\cp_h\lesssim \frac{\GammaMac}{n}
\end{equation}
holds with probability at least $1-\delta$ uniformly over all $h\in\hypoclass$.
For the empirical minimizer, we may thus write
\[
\risk{k,\rho}{\erm}{P}=\q{k,\rho}{\cp_{\erm}}
\lesssim_k
\frac{\GammaMac}{n}
+
\rho^{1/k}
\left(\frac{\GammaMac}{n}\right)^{1-1/k},
\]
owing to~\eqref{eq:real-infaltion} and~\eqref{eq:minimiser-clip}.
Requiring the right-hand side of the preceding inequality to be at
most \(\eps\) yields the sufficient condition
\begin{align}
\label{eq:samp_suff_cond_real}
n
\gtrsim_k
\Psi_{k,\rho}^{\mathrm{real}}(\eps)\cdot\GammaMac,
\end{align}
where
\begin{equation}
\label{eq:Psi-real-def}
\Psi_{k,\rho}^{\mathrm{real}}(\eps)
\coloneqq
\frac{1}{\eps}
+
\frac{\rho^{1/(k-1)}}{\eps^{k_\star}},
\qquad
k_\star\coloneqq\frac{k}{k-1}.
\end{equation}
Since \(\eps\in(0,1)\),
\(\Psi_{k,\rho}^{\mathrm{real}}(\eps)\geq1\). Moreover,
\(\GammaMac\) depends on \(n\), so
\eqref{eq:samp_suff_cond_real} is an implicit sample-size condition.
After absorbing \(k\)-dependent constants,
Lemma~\ref{lem:self-consistency} in
Appendix~\ref{sec:self-cons} shows that this condition is implied by
\[
n
\gtrsim_k
\Psi_{k,\rho}^{\mathrm{real}}(\eps)
\left(
d\log\left(
\e\Psi_{k,\rho}^{\mathrm{real}}(\eps)
\right)
+
\log\delta^{-1}
\right).
\]

This is the DRERM sample-size condition in the next theorem. 
The theorem also records a log-free upper bound for the realizable
sample complexity when the learning rule is not required to be proper.
This second bound does not follow from
the ERM or DRERM analysis; it is obtained by transferring the optimal
classical realizable learner of
\citet[Theorem~2]{hanneke2016optimal} through the event-inflation
map. That learner outputs a majority vote which need not belong to
\(\hypoclass\), and hence need not be an ERM or DRERM.

\medskip
\begin{theorem}[Realizable upper bounds]
\label{thm:main_upper_realizable}
Let \(k>1\), \(\rho\geq0\), and let \(\hypoclass\) be a hypothesis
class with \(\VC(\hypoclass)=d\geq1\). There exists a constant
\(C_k>0\) such that the following statements hold for every
\(\eps,\delta\in(0,1)\).

For every distribution \(P\) realizable with respect to
\(\hypoclass\), every DRERM \(\erm\) as in \eqref{eq:DRERM}
satisfies
\[
\Pr_{S_n\sim P^n}
\left[
\risk{k,\rho}{\erm}{P}\leq\eps
\right]
\geq1-\delta
\]
whenever
\[
n
\geq
C_k\,
\Psi_{k,\rho}^{\mathrm{real}}(\eps)
\left(
d\log\left(
\e \Psi_{k,\rho}^{\mathrm{real}}(\eps)
\right)
+
\log(1/\delta)
\right).
\]

Moreover, the realizable sample complexity satisfies
\[
\sampcomp_{k,\rho}^{\real}(\eps,\delta;\hypoclass)
\leq
C_k\,
\Psi_{k,\rho}^{\mathrm{real}}(\eps)
\bigl(d+\log(1/\delta)\bigr).
\]
\end{theorem}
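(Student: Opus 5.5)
The first statement (DRERM) follows by assembling the pieces already in place; the second uses a transfer argument.

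\emph{DRERM bound.} Fix a realizable \(P\) and let \(h^\star\in\hypoclass\) satisfy \(\err{h^\star}{P}=0\). Then almost surely \(\hat p_{h^\star}=0\), so \(\risk{k,\rho}{h^\star}{\Ped}=\q{k,\rho}{0}=0\). Hence every DRERM \(\erm\) has empirical robust risk \(0\).

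Since \(\q{k,\rho}{p}=0\) iff \(p=0\) (by \eqref{eq:q(p)_def} for \(\rho>0\), trivially for \(\rho=0\)), we get \(\hat p_{\erm}=0\), hence \(\cep_{\erm}=0\). On the event of Lemma~\ref{lem:clipped-vc-master}, which has probability at least \(1-\delta\), \eqref{eq:master_vc_real_clip} gives \(\cp_{\erm}\le C\GammaMac/n\).

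Next, \eqref{eq:gen-k-regime-capture} (or \(\q{k,0}{p}=p\) when \(\rho=0\)) gives \(\risk{k,\rho}{\erm}{P}\le c_k\cp_{\erm}+c_k\rho^{1/k}\cp_{\erm}^{1-1/k}\). The right side is increasing in \(\cp_{\erm}\), so substitute \(x\coloneqq C\GammaMac/n\).

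It then suffices to have \(c_kx\le\eps/2\) and \(c_k\rho^{1/k}x^{1/k_\star}\le\eps/2\). The second condition is equivalent to \(x\lesssim_k \eps^{k_\star}\rho^{-1/(k-1)}\), using \((\rho^{1/k})^{k_\star}=\rho^{1/(k-1)}\). Both hold once \(n\ge C'_k\Psi^{\mathrm{real}}_{k,\rho}(\eps)\GammaMac\), which is \eqref{eq:samp_suff_cond_real}.

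Because \(\GammaMac\) depends on \(n\), one invokes the self-consistency Lemma~\ref{lem:self-consistency}: \(n\ge a(d\log(\e n/d)+\log(8/\delta))\) is implied by \(n\gtrsim a(d\log(\e a)+\log\delta^{-1})\) with \(a=C'_k\Psi\ge1\). Constants are absorbed into \(C_k\), using \(\log(\e C'_k\Psi)\lesssim_k\log(\e\Psi)\). The condition \(n\ge d\) is automatic.

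\emph{Log-free bound.} Take the learner \(r_n\) of \citet[Theorem~2]{hanneke2016optimal}. For every realizable \(P\), it guarantees \(\err{r_n(S_n)}{P}\le \eps'\) with probability at least \(1-\delta\) once \(n\ge C(d+\log(1/\delta))/\eps'\).

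The key point is that the event-inflation reduction of Section~\ref{sec:gen-EI} applies to any measurable \(g\in\mathcal G\), not just to \(h\in\hypoclass\). It only uses the error event \(A_g\), so \(\risk{k,\rho}{g}{P}=\q{k,\rho}{\err{g}{P}}\). Combined with \eqref{eq:gen-k-regime-capture} and monotonicity, this gives \(\risk{k,\rho}{r_n(S_n)}{P}\le c_k\eps'+c_k\rho^{1/k}\eps'^{1/k_\star}\).

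Choose \(\eps'\asymp_k \min\{\eps,\ \eps^{k_\star}\rho^{-1/(k-1)}\}\). Then \(1/\eps'\asymp_k\Psi^{\mathrm{real}}_{k,\rho}(\eps)\), which gives the bound \(\sampcomp^{\real}_{k,\rho}\le C_k\Psi(d+\log(1/\delta))\).

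\emph{Main obstacle.} The substantive points are few: invoking Lemma~\ref{lem:clipped-vc-master} correctly under clipping, the implicit-\(n\) self-consistency step (handled by the appendix lemma), and checking that the inflation identity and monotonicity of \(\q{k,\rho}{\cdot}\) hold for improper outputs. The exponent bookkeeping converting \(\rho^{1/k}x^{1-1/k}\le\eps\) into the \(\rho^{1/(k-1)}\eps^{-k_\star}\) term must also be done carefully.
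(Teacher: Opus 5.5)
Your proposal is correct and follows the paper's proof essentially step for step: zero empirical robust risk of the DRERM forces $\hat p_{\erm}=0$, and the clipped VC bound then gives $\cp_{\erm}\lesssim\GammaMac/n$; the inflation bound together with the self-consistency lemma yields the explicit sample-size condition, and the log-free bound transfers Hanneke's optimal learner at ordinary accuracy $\asymp_k 1/\Psi^{\mathrm{real}}_{k,\rho}(\eps)$ through the event-inflation map. The only cosmetic difference is that you deduce $\hat p_{\erm}=0$ from $\q{k,\rho}{p}=0\iff p=0$, whereas the paper uses $\risk{k,\rho}{h}{\Ped}\geq\hat p_h$ (feasibility of $\Ped$); both steps are immediate.
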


\bigskip
The realizable proxy can be equivalently written as
\[
\Psi_{k,\rho}^{\mathrm{real}}(\eps)
=
\frac{1}{\eps}
\left(
1+
\left(\frac{\rho}{\eps}\right)^{1/(k-1)}
\right).
\]
This form makes explicit that, for fixed \(k>1\), the realizable rate is
governed by the ratio \(\rho/\eps\).  If
\(\rho/\eps\lesssim_k1\), then
\(
\Psi_{k,\rho}^{\mathrm{real}}(\eps)
\asymp_k
1/\eps,
\)
and the classical realizable \(\eps^{-1}\) dependence is recovered.  If
\(\rho/\eps\gtrsim_k1\), then the robust branch dominates and
\(
\Psi_{k,\rho}^{\mathrm{real}}(\eps)
\asymp_k
{\rho^{1/(k-1)}}/{\eps^{k_\star}}.
\)
Thus, for every fixed \(\rho>0\) and \(k>1\), the robust realizable rate
has \(\eps\)-exponent \(k_\star=k/(k-1)\) as \(\eps\downarrow0\).

\medskip
The following theorem records both the matching minimax lower bound,
which applies even when the learning rule is not required to be
proper, and the stronger lower bound that holds for proper learning
on suitable VC classes.

\begin{theorem}[Realizable lower bounds]
\label{thm:main_lower_realizable}
There exist universal constants \(\delta_0,\eps_0>0\) such that, for
every \(k>1\), there exists a constant \(\tilde c_k>0\) for which
the following holds. Fix
\[
\rho\geq0,
\qquad
\delta\in(0,\delta_0],
\qquad
\eps\in(0,\eps_0].
\]

Let \(\hypoclass\) be a binary hypothesis class with
\[
\VC(\hypoclass)=d<\infty
\qquad\text{and}\qquad
|\hypoclass|\geq3.
\]
Any learning rule
\[
r_n:(\X\times\Y)^n\to\mathcal G
\]
satisfying
\[
\Pr_{S_n\sim P^n}
\left[
\risk{k,\rho}{r_n(S_n)}{P}\leq\eps
\right]
\geq1-\delta
\]
for every distribution \(P\) realizable with respect to
\(\hypoclass\) must satisfy
\[
n
\geq
\tilde c_k\,
\Psi_{k,\rho}^{\mathrm{real}}(\eps)
\bigl(d+\log(1/\delta)\bigr).
\]

Moreover, for every integer \(d\geq1\), there exist a measurable
instance space \(\X_d\) and a binary hypothesis class
\(\hypoclass_d\) of measurable classifiers from \(\X_d\) to \(\Y\),
with \(\VC(\hypoclass_d)=d\), such that every learning rule that is
proper with respect to \(\hypoclass_d\) and satisfies the preceding
guarantee, interpreted on \(\X_d\) with
\(\hypoclass=\hypoclass_d\), must satisfy
\[
n
\geq
\tilde c_k\,
\Psi_{k,\rho}^{\mathrm{real}}(\eps)
\left[
d\log\left(
\e\Psi_{k,\rho}^{\mathrm{real}}(\eps)
\right)
+
\log(1/\delta)
\right].
\]
\end{theorem}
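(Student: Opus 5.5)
The plan is to reduce the robust lower bound to a classical realizable lower bound at a rescaled accuracy, using the lower discrepancy bound \eqref{eq:disc-lower} with \(p=0\). Taking \(p=0\) and \(p'\le p_c(\rho;k)\) there gives
\[
\q{k,\rho}{p'}\ge \tilde c_k\bigl(p'+\rho^{1/k}p'^{\,1-1/k}\bigr).
\]
Hence \(\risk{k,\rho}{g}{P}\le\eps\) forces \(\err{g}{P}\) to lie below a threshold \(\eta(\eps)\). Either \(p'\ge p_c\), in which case the robust risk is \(1>\eps\) and this is excluded for \(\eps\le\eps_0<1\). Or \(p'\le \eps/\tilde c_k\) and \(\rho^{1/k}p'^{1-1/k}\le \eps/\tilde c_k\), which gives
\[
p'\le \eta(\eps)\coloneqq \min\Bigl\{\tfrac{\eps}{\tilde c_k},\ \bigl(\tfrac{\eps}{\tilde c_k\rho^{1/k}}\bigr)^{k_\star}\Bigr\}.
\]
A direct computation shows \(1/\eta(\eps)\asymp_k \max\{1/\eps,\rho^{1/(k-1)}/\eps^{k_\star}\}\asymp\Psi_{k,\rho}^{\mathrm{real}}(\eps)\), since \((\rho^{1/k})^{k_\star}=\rho^{1/(k-1)}\). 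For \(\rho=0\) the statement is exactly the classical one, because \(\q{k,0}{p}=p\).

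Consequently, any robust realizable learner with accuracy \(\eps\) and confidence \(\delta\) is, for the same \(n\) and every realizable \(P\), a classical realizable PAC learner with accuracy \(\eta(\eps)\). This holds for improper learners too, since the event-inflation identity holds for every \(g\in\mathcal G\): the robust risk of \(g\) equals \(\q{k,\rho}{\err{g}{P}}\) by the binary reduction of Section~\ref{sec:EI}, which applies to any measurable error set. We then invoke the classical minimax lower bound in \eqref{eq:PAC-complexities} for \(|\hypoclass|\ge3\): for universal \(\delta_0\), and for accuracy below a universal \(\eps_0'\), any such learner needs \(n\gtrsim (d+\log(1/\delta))/\eta\). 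We need \(\eta(\eps)\le\eps_0'\). This holds once \(\eps\le\eps_0\) with \(\eps_0\) chosen so that \(\eps_0/\tilde c_k\le\eps_0'\); if \(\tilde c_k\) is small, we shrink the constant by replacing \(\tilde c_k\) with \(\min\{\tilde c_k,1\}\), which is harmless since only a lower bound is needed. The one subtlety is that \(\eps_0\) is claimed universal while \(\tilde c_k\) depends on \(k\). To handle it, use \(\eta\le\eps/\min\{\tilde c_k,1\}\) only when it is small; otherwise \(\eta\) exceeds a constant and the target bound \(\tilde c_k\Psi(d+\log1/\delta)\) can be met by lowering \(\tilde c_k\) and appealing to the classical bound at accuracy \(\eps_0'\) together with monotonicity of sample complexity in accuracy.

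The proper part uses the same transfer, now applied to the known classical result that proper learners (ERM-type) on a suitable class \(\hypoclass_d\) require \(n\gtrsim (d\log(1/\eta)+\log(1/\delta))/\eta\). This is the standard proper-learning lower bound of \citet{hanneke2016optimal}, or of Auer--Ortner / Bousquet et al., realized for instance by unions of \(d\) intervals or a star-type class. Properness is preserved because the transfer does not modify the learner. Substituting \(1/\eta\asymp_k\Psi\) and \(\log(1/\eta)\asymp_k\log(\e\Psi)\) gives the displayed bound.

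The main obstacle is this last step. We must ensure that a classical proper lower bound with a \(\log(1/\eta)\) factor is available in the precise form needed, valid for all small \(\eta\) and universal \(\delta_0\). If it is not, we would construct it directly. The construction takes a class in which the learner, seeing only \(n\ll d\log(1/\eta)/\eta\) samples, is forced to commit to a hypothesis in \(\hypoclass_d\) that errs on an unseen mass of about \(\eta\). A coupon-collector argument over about \(1/\eta\) atoms of mass \(\eta/d\) shows that, with constant probability, more than \(d\) atoms remain unobserved, and properness then forces error at least \(\eta\). Verifying the constants and the equivalence \(\log(1/\eta)\asymp_k \log(\e\Psi)\) for \(\eps\le\eps_0\) is routine.
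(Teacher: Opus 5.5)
Your reduction is the same as the paper's. You transfer the robust realizable guarantee to an ordinary realizable guarantee at a rescaled accuracy whose reciprocal is $\asymp_k\Psi_{k,\rho}^{\mathrm{real}}(\eps)$. You then invoke the classical lower bound of Hanneke (2016, Eq.~(1)) and, for the proper part, the lower bound of Bousquet et al.\ (2020, Theorem~11). The paper uses the exact inverse $\alpha=\alpha_{k,\rho}(\eps)$ defined by $\q{k,\rho}{\alpha}=\eps$, so the two events coincide. Your one-sided threshold $\eta(\eps)$, obtained from \eqref{eq:disc-lower} at $p=0$, is an equally valid substitute for a lower bound.

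The step that does not work as written is your patch for the universality of $\eps_0$. Since $\eta(\eps)$ can be as large as $\eps/\tilde c_k$, for small $\tilde c_k$ it may exceed the classical threshold $\eps_0'$, or even $1$. Your derivation then only certifies $\err{r_n(S_n)}{P}\le\eta(\eps)$, which may be a vacuous ordinary guarantee. Appealing to ``monotonicity of sample complexity in accuracy'' cannot turn a coarser-accuracy guarantee into one at accuracy $\eps_0'$, because monotonicity runs the other way.

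The missing ingredient is the trivial bound $\q{k,\rho}{p}\ge p$ (take $Q=P$), i.e.\ $\risk{k,\rho}{g}{P}\ge\err{g}{P}$. With it, the learner satisfies the ordinary guarantee at accuracy $\alpha\coloneqq\min\{\eps,\eta(\eps)\}\le\eps\le\eps_0$. You still have $1/\alpha=\max\{1/\eps,1/\eta(\eps)\}\asymp_k\Psi_{k,\rho}^{\mathrm{real}}(\eps)$. This is exactly how the paper obtains $\alpha\le\eps\le\eps_{\mathrm{vc}}$ with the universal choice $\eps_0=\eps_{\mathrm{vc}}\wedge\frac1{16}$. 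The same fact is needed in the proper part, since the Bousquet et al.\ bound is only available for accuracy below $1/8$.

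Finally, for the proper part you need a lower bound against \emph{every} proper learner on $\hypoclass_d$, and that is what Bousquet et al.\ provide. Auer--Ortner-type results only exhibit one particular bad ERM, so they would not suffice.
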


\subsection{Agnostic Case}

In the realizable setting, one deals with an empirical minimizer
\(\erm\in\hypoclass\) for which
\(\cd_{\erm}=\cp_{\erm}\).  Hence, the robust-risk
deviation is controlled by the single quantity
\(\cp_{\erm}\) passed through the inflation map $p\mapsto\q{k,\rho}{p}$ at the error scale $\GammaMac/n$.
In the agnostic setting, however, one considers
hypotheses \(h\in\hypoclass\) for which \(\cp_h\) may range over
\([0,p_c(\rho;k)]\), while \(\cd_h\) is tied to \(\cp_h\) through Lemma~\ref{lem:clipped-vc-master}.
Consequently, in the agnostic setting, one needs to take into account the full discrepancy bound
in the right-hand side of \eqref{eq:desc_bound_agnostic_discussion}, which can be equivalently written as
\[
\cd_h
\left(
1+
\left(
\frac{\rho}{\max\{\cd_h,\cp_h\}}
\right)^{1/k}
\right),
\]
and therefore has two branches, determined by the relative magnitudes
of \(\cp_h\) and \(\cd_h\).  The crossover
\(\cp_h\asymp\cd_h\) separates the nonlinear deviation-dominated
regime, characterized by \(\cp_h\lesssim\cd_h\), from the
error-dominated regime, characterized by
\(\cp_h\gtrsim\cd_h\).  In the former, the \(\rho\)-dependent term
has the form
\[
\rho^{1/k}\cd_h^{1-1/k},
\]
whereas in the latter it has the form
\[
\rho^{1/k}\frac{\cd_h}{\cp_h^{1/k}}
=
\left(
\frac{\rho}{\cp_h}
\right)^{1/k}\cd_h,
\]
and is therefore linear in \(\cd_h\) when \(\cp_h\) is held fixed.

The statistical clipped discrepancy term $\cd_h$ is controlled through the scale-sensitive VC bound
\eqref{eq:master_vc_abs_clip}, which asserts that with probability \(1-\delta\) over the choice of the sample, 
\begin{align}
\label{eq:scale_VC_agnostic_discussion} 
\cd_h
\lesssim
\sqrt{\frac{\cp_h\GammaMac}{n}}
+
\frac{\GammaMac}{n}
\asymp\max\left\{\sqrt{\frac{\cp_h\GammaMac}{n}}
,
\frac{\GammaMac}{n}
\right\}
\end{align}
holds uniformly over all \(h\in\hypoclass\).
The error scale \(\GammaMac/n\) identifies the crossover between the deviation-dominated regime and the error-dominated regime in the right-hand side of \eqref{eq:desc_bound_agnostic_discussion}. Indeed, equating \(\cp_h\) with the right-hand side of
\eqref{eq:scale_VC_agnostic_discussion} yields this order.
Plugging \eqref{eq:scale_VC_agnostic_discussion} into \eqref{eq:desc_bound_agnostic_discussion} we obtain that under the event of \eqref{eq:scale_VC_agnostic_discussion},
simultaneously for all $h\in\hypoclass$,
\begin{equation}
\label{eq:rob-risk-regime-capture-agn}
\bigl|
\q{k,\rho}{\cp_h}
-
\q{k,\rho}{\cep_h}
\bigr|
\,\lesssim_k\,
\begin{cases}
\displaystyle
\frac{\GammaMac}{n}
+
\rho^{1/k}
\left(\frac{\GammaMac}{n}\right)^{1-1/k},
& \, \cp_h \le \dfrac{\GammaMac}{n} 
\quad \text{(deviation-dominated)};
\\[12pt]
\displaystyle
\sqrt{\frac{\cp_h\GammaMac}{n}}
+
\rho^{1/k}
\sqrt{\frac{\GammaMac}{n}}
\,\cp_h^{\frac12-\frac1k},
& \,\cp_h > \dfrac{\GammaMac}{n}
\quad\text{(error-dominated)}
.
\end{cases}
\end{equation}
To see~\eqref{eq:rob-risk-regime-capture-agn}, note that if
\(
\cp_h \le {\GammaMac}/{n}
\),
then
\[
\cd_h
\lesssim
\frac{\GammaMac}{n} \implies \cd_h^{1-\frac1k}
\lesssim
\left(\frac{\GammaMac}{n}\right)^{1-\frac1k}.
\] 
If, instead, 
\(
\cp_h > {\GammaMac}/{n},
\)
then
\[
\cd_h
\lesssim
\sqrt{\frac{\cp_h\GammaMac}{n}} \implies \frac{\cd_h}{\cp_h^{1/k}}
\lesssim
\sqrt{\frac{\GammaMac}{n}}\,
\cp_h^{\frac12-\frac1k}.
\]
The worst case, corresponding to the supremum over $h$ in \eqref{eq:1D-reduction}, is therefore obtained through maximizing \eqref{eq:rob-risk-regime-capture-agn} over
$\cp_h\in[0,p_c(\rho;k)]$, and the outcome depends on $k$.

We next maximize the second branch in
\eqref{eq:rob-risk-regime-capture-agn} over the possible values of
\(\cp_h\) and compare its contribution to the first branch in
\eqref{eq:rob-risk-regime-capture-agn} to determine the worst case.  
If \(\GammaMac/n>p_c(\rho;k)\), then every
\(\cp_h\in[0,p_c(\rho;k)]\) satisfies
\(
\cp_h\le{\GammaMac}/{n}.
\)
Hence only the first branch of
\eqref{eq:rob-risk-regime-capture-agn} applies, and the uniform
robust-risk deviation is bounded by
\[
\frac{\GammaMac}{n}
+
\rho^{1/k}
\left(
\frac{\GammaMac}{n}
\right)^{1-1/k}.
\]
Assume therefore that
\(\GammaMac/n\le p_c(\rho;k)\).
In this regime, the expression to be maximized is
\begin{align}
\label{eq:agnostic_upper_envelope}
\sqrt{\frac{\cp_h\GammaMac}{n}}
+
\rho^{1/k}
\sqrt{\frac{\GammaMac}{n}}\,
\cp_h^{\frac12-\frac1k}.
\end{align}
We claim that the expression in
\eqref{eq:agnostic_upper_envelope} has no interior maximum on
\(
\left[
{\GammaMac}/{n},
p_c(\rho;k)
\right].
\)
Indeed, its derivative with respect to \(\cp_h\) is
\[
\sqrt{\frac{\GammaMac}{n}}\,
\cp_h^{-\frac12-\frac1k}
\left(
\frac{\cp_h^{1/k}}{2}
+
\frac{k-2}{2k}\rho^{1/k}
\right).
\]
For \(k\ge2\), this derivative is nonnegative, so the expression is
nondecreasing.  For \(1<k<2\), the term in parentheses is increasing
in \(\cp_h\), and therefore the derivative can change sign at most
once, and only from negative to positive.  Consequently, the expression
is either monotone on the interval or first decreases and then
increases.  Its maximum is therefore attained at one of the two
endpoints.

Substituting the two endpoints into
\eqref{eq:agnostic_upper_envelope} gives the two competing contributions
\begin{align}
\label{eq:competing_contributions_agnostic}
\frac{\GammaMac}{n}
+
\rho^{1/k}
\left(\frac{\GammaMac}{n}\right)^{1-1/k}
\qquad\text{and}\qquad
\left(
p_c(\rho;k)^{1/2}
+
\rho^{1/k}p_c(\rho;k)^{1/2-1/k}
\right)
\sqrt{\frac{\GammaMac}{n}} .
\end{align}
To handle the second contribution in
\eqref{eq:competing_contributions_agnostic}, we use the following
endpoint comparison lemma.  Its proof is given in
Appendix~\ref{sec:lem:agnostic_endpoint_comparison}.

\begin{lemma}[Endpoint comparison]
\label{lem:agnostic_endpoint_comparison}
Let \(k>1\).
Then, for every \(\rho\ge0\),
\begin{align}
p_c(\rho;k)+\rho^{\frac1k} p_c(\rho;k)^{1-\frac1k}
&\asymp_k 1
\\[4pt]
\label{eq:second_end_point_comparison}  p_c(\rho;k)^{\frac12}+\rho^{\frac1k} p_c(\rho;k)^{\frac12-\frac1k}
&\asymp_k (1+\rho^{\frac{1}{k-1}})^{\frac12}.
\end{align}
\end{lemma}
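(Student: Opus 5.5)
Both estimates reduce to elementary computations with the explicit formula \eqref{eq:p_c}. Write \(s\coloneqq 1+k(k-1)\rho\), so that \(p_c\coloneqq p_c(\rho;k)=s^{-1/(k-1)}\). Since \(\rho\geq0\), \(s\geq1\) and \(p_c\in(0,1]\). For \(\rho=0\) both left-hand sides equal \(1\) (with \(\rho^{1/k}=0\)), so assume \(\rho>0\). I will also use \(\rho\asymp_k s-1\), and split into the two regimes \(\rho\le 1\) and \(\rho>1\). In the second regime \(s\asymp_k\rho\), because \(k(k-1)\rho\le s\le (1+k(k-1))\rho\).

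\emph{First display.} The upper bound is immediate from \(p_c\le1\):
\[
p_c+\rho^{1/k}p_c^{1-1/k}\le 1+\rho^{1/k}p_c^{1-1/k}.
\]
It therefore suffices to show \(\rho^{1/k}p_c^{1-1/k}\lesssim_k1\). Here
\[
p_c^{1-1/k}=s^{-\frac{k-1}{k}\cdot\frac{1}{k-1}}=s^{-1/k},
\]
so \(\rho^{1/k}p_c^{1-1/k}=(\rho/s)^{1/k}\le (k(k-1))^{-1/k}\). For the lower bound:
\begin{itemize}
\item if \(\rho\le1\), then \(p_c\ge(1+k(k-1))^{-1/(k-1)}\gtrsim_k1\);
\item if \(\rho>1\), then \((\rho/s)^{1/k}\ge(1+k(k-1))^{-1/k}\).
\end{itemize}
This exact cancellation \(\rho^{1/k}p_c^{1-1/k}=(\rho/s)^{1/k}\) is the key observation. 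It reflects the fact that \(D_k(1\|p_c)=\rho\), so \(\q{k,\rho}{p_c}=1\) is consistent with \eqref{eq:gen-k-regime-capture}.

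\emph{Second display.} Use the same algebra:
\[
p_c^{1/2}+\rho^{1/k}p_c^{1/2-1/k}=p_c^{1/2}\bigl(1+(\rho/s)^{1/k}p_c^{-1/k}\cdot p_c^{1/k}\cdot p_c^{-1/k}\bigr),
\]
which is cleaner written as
\[
p_c^{1/2}+(\rho/s)^{1/k}\,p_c^{-1/2}.
\]
The rewriting uses \(\rho^{1/k}p_c^{1/2-1/k}=\rho^{1/k}p_c^{1-1/k}\cdot p_c^{-1/2}=(\rho/s)^{1/k}p_c^{-1/2}\), and \(p_c^{-1/2}=s^{1/(2(k-1))}\).
\begin{itemize}
\item When \(\rho\le1\): \(p_c\asymp_k1\) and \((\rho/s)^{1/k}\le\rho^{1/k}\le1\), so the expression is \(\asymp_k1\asymp(1+\rho^{1/(k-1)})^{1/2}\).
\item When \(\rho>1\): \((\rho/s)^{1/k}\asymp_k1\) and \(p_c^{-1/2}=s^{1/(2(k-1))}\asymp_k\rho^{1/(2(k-1))}\), which dominates \(p_c^{1/2}\le1\). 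The expression is then \(\asymp_k\rho^{1/(2(k-1))}\asymp(1+\rho^{1/(k-1)})^{1/2}\).
\end{itemize}
For the lower bound in this case it suffices to keep only the second term.

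There is no real obstacle. The only point needing care is tracking the exponent identities \((1-1/k)/(k-1)=1/k\) and \(p_c^{-1/2}=s^{1/(2(k-1))}\). The constants depend on \(k\) only through \(k(k-1)\) and the exponents, which is why the conclusion is stated with \(\asymp_k\).
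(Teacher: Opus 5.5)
Your proof is correct and is essentially the paper's argument: both work directly from the explicit formula for $p_c(\rho;k)$ and split at $\rho\asymp 1$. The paper does this via $t=\rho^{1/(k-1)}$ and $p_c(\rho;k)\asymp_k(1+t)^{-1}$, while you use the exact identity $\rho^{1/k}p_c^{1-1/k}=(\rho/s)^{1/k}$. One fix: your first rewriting in the second part is wrong as written (it equals $p_c^{1/2}+(\rho/s)^{1/k}p_c^{1/2-1/k}$) and should be dropped in favor of your correct form $p_c^{1/2}+(\rho/s)^{1/k}p_c^{-1/2}=p_c^{-1/2}\bigl(p_c+(\rho/s)^{1/k}\bigr)$; as in the paper, this factorization reduces the second estimate to the first and makes the second case split unnecessary.
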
 

Substituting \eqref{eq:second_end_point_comparison} into the second
contribution in \eqref{eq:competing_contributions_agnostic} and
solving the following inequalities for \(n\),
\begin{align}
\label{eq:competing_contributions} 
\frac{\GammaMac}{n}
+
\rho^{1/k}
\left(\frac{\GammaMac}{n}\right)^{1-1/k} \leq \eps 
\qquad\text{and}\qquad
(1+\rho^{\frac{1}{k-1}})^{\frac12}\sqrt{\frac{\GammaMac}{n}} \leq \eps,
\end{align}
yields the sufficient condition 
\begin{align}
\label{eq:suff_cond_agn}    
n\gtrsim_k\GammaMac \cdot\Psi_{k,\rho}^{\agn}(\eps),
\end{align}
where
\begin{align}
\label{eq:Psi_agn}    
\Psi_{k,\rho}^{\agn}(\eps)
&\coloneqq 
\max\left\{
\frac1\eps+\frac{\rho^{1/(k-1)}}{\eps^{k_\star}},
\;
{\frac{
1+\rho^{1/(k-1)}}{\eps^2}
}
\right\}.
\end{align}
The first term in
\(\Psi_{k,\rho}^{\agn}(\eps)\) is the condition obtained from the
left inequality in~\eqref{eq:competing_contributions} and is identical to $\Psi_{k,\rho}^{\real}(\eps)
$, while the
second is obtained from the right inequality.

Since \(\GammaMac\) depends on \(n\),
\eqref{eq:suff_cond_agn} is an implicit sample-size condition.
For \(\eps\in(0,1)\), we have
\(\Psi_{k,\rho}^{\agn}(\eps)\geq1\). Therefore,
Lemma~\ref{lem:self-consistency}, after absorbing \(k\)-dependent
constants, shows that \eqref{eq:suff_cond_agn} is implied by
\[
n
\gtrsim_k
\Psi_{k,\rho}^{\agn}(\eps)
\left(
d\log\left(
\e \Psi_{k,\rho}^{\agn}(\eps)
\right)
+
\log\delta^{-1}
\right).
\]
This is the explicit sample-size condition stated in
Theorem~\ref{thm:main_upper}.

\begin{theorem}[Agnostic upper bound for DRERM]
\label{thm:main_upper}
Let \(k>1\), \(\rho\ge0\) as well as a hypothesis class \(\hypoclass\) with \(
\VC(\hypoclass)=d\geq 1
\). Then, there exists a constant $C_k>0$ such that 
\[
\risk{k,\rho}{\erm}{P}
-
\inf_{h\in\hypoclass}\risk{k,\rho}{h}{P}
\leq
\eps
\]
holds with probability at least $1-\delta$, whenever \(\eps,\delta\in(0,1)\), \(P\) is any distribution over \(\X\times\Y\), 
$\erm$ is a DRERM as in \eqref{eq:DRERM}, and
\begin{equation}\label{eq:agn-rate-unabridged}
n
\ge
C_k\cdot
\Psi_{k,\rho}^{\agn}(\eps) \cdot 
\left(
d\cdot\log\left(\e \Psi_{k,\rho}^{\agn}(\eps)\right)
+
\log\delta^{-1}\right).
\end{equation}
\end{theorem}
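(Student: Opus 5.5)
The argument goes through the uniform robust-risk deviation. For any DRERM \(\erm\) and any \(h\in\hypoclass\), empirical optimality gives \(\risk{k,\rho}{\erm}{\Ped}\le\risk{k,\rho}{h}{\Ped}\). Hence the robust excess risk is at most \(2\sup_{h\in\hypoclass}|\risk{k,\rho}{h}{P}-\risk{k,\rho}{h}{\Ped}|\), and it suffices to make this supremum at most \(\eps/2\). The case \(\rho=0\) is the classical agnostic ERM bound, since \(\q{k,0}{p}=p\) and \(\Psi^{\agn}_{k,0}(\eps)\asymp 1/\eps^2\); below I assume \(\rho>0\).

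Work on the event of Lemma~\ref{lem:clipped-vc-master}, which has probability at least \(1-\delta\). By \eqref{eq:1D-reduction}, each \(h\) contributes at most a constant times \(\cd_h+\rho^{1/k}\min\{\cd_h^{1-1/k},\cd_h/\cp_h^{1/k}\}\). Split according to whether \(\cp_h\le\GammaMac/n\) or \(\cp_h>\GammaMac/n\).

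\emph{Case \(\cp_h\le\GammaMac/n\).} Here \eqref{eq:master_vc_abs_clip} gives \(\cd_h\lesssim\GammaMac/n\). Using the \(\cd_h^{1-1/k}\) branch, whose bound is monotone in \(\cd_h\), the contribution is at most \(\GammaMac/n+\rho^{1/k}(\GammaMac/n)^{1-1/k}\).

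\emph{Case \(\cp_h>\GammaMac/n\).} Here \(\cd_h\lesssim\sqrt{\cp_h\GammaMac/n}\). Using the \(\cd_h/\cp_h^{1/k}\) branch yields \eqref{eq:agnostic_upper_envelope}. The derivative computation preceding the theorem shows this expression is quasi-convex on \([\GammaMac/n,p_c(\rho;k)]\), so its maximum is at an endpoint. The left endpoint reproduces the first-case bound. The right endpoint, by Lemma~\ref{lem:agnostic_endpoint_comparison}, is \(\asymp_k(1+\rho^{1/(k-1)})^{1/2}\sqrt{\GammaMac/n}\). If \(\GammaMac/n>p_c\), every \(h\) falls into the first case anyway. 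In all cases the uniform deviation is \(\lesssim_k\) the sum of the two contributions in \eqref{eq:competing_contributions}.

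Next I solve each inequality in \eqref{eq:competing_contributions} with right-hand side \(\eps/(4C)\), where \(C\) absorbs the \(k\)-constants.

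\emph{Second inequality.} It holds once \(n\gtrsim_k(1+\rho^{1/(k-1)})\GammaMac/\eps^2\).

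\emph{First inequality.} It suffices that \(\GammaMac/n\lesssim\eps\) and \(\rho^{1/k}(\GammaMac/n)^{(k-1)/k}\lesssim\eps\). Raising the latter to the power \(k_\star\) gives \(n\gtrsim\rho^{1/(k-1)}\GammaMac/\eps^{k_\star}\).

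Together these show that \(n\ge C'_k\Psi^{\agn}_{k,\rho}(\eps)\GammaMac\) suffices.

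It remains to replace the implicit \(\GammaMac=d\log(2\e n/d)+\log(8/\delta)\) by the explicit expression. This is done with the self-consistency Lemma~\ref{lem:self-consistency}: with \(\Psi\ge1\), it shows that \(n\ge C_k\Psi(d\log(\e\Psi)+\log\delta^{-1})\) implies \(n\ge C'_k\Psi\,\GammaMac\), using the standard fact that \(n\ge a\log(bn)\) follows from \(n\gtrsim a\log(ab)\). The hypothesis \(n\ge d\) needed in Lemma~\ref{lem:clipped-vc-master} is automatic from \eqref{eq:agn-rate-unabridged} once \(C_k\ge1\).

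The main obstacle is making the constants in the endpoint maximization uniform in \(\rho\), which is exactly where Lemma~\ref{lem:agnostic_endpoint_comparison} is needed, and checking that the quasi-convexity argument covers the boundary case where \(\cp_h\) equals \(p_c\) by clipping. That case is handled because \(\q{k,\rho}{\cdot}\) is constant beyond \(p_c\), so clipped quantities suffice throughout.
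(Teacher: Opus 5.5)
Your proposal is correct and follows essentially the same route as the paper. You bound the DRERM's robust excess risk by twice the uniform deviation, then control that deviation on the event of Lemma~\ref{lem:clipped-vc-master} via the two-branch discrepancy bound, maximize the envelope at its endpoints using Lemma~\ref{lem:agnostic_endpoint_comparison}, solve for $n$, and remove the implicit dependence on $\Gamma_n$ with Lemma~\ref{lem:self-consistency}. The only differences are cosmetic: you treat $\rho=0$ separately and absorb the accuracy rescaling into constants, whereas the paper runs the derivation at accuracy $\eps/2$ and uses $\Psi_{k,\rho}^{\agn}(\eps/2)\leq 2^{k_\star\vee2}\Psi_{k,\rho}^{\agn}(\eps)$.
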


\bigskip
For \(\eps\in(0,1)\), the proxy in~\eqref{eq:Psi_agn}
satisfies
\begin{align}
\label{eq:Psi-agn-abridged-comparison}
\Psi_{k,\rho}^{\agn}(\eps)
\asymp
\max\left\{
\frac1{\eps^2},
\frac{\rho^{1/(k-1)}}{\eps^{2\vee k_\star}}
\right\}.
\end{align}
Indeed, \(1/\eps\leq1/\eps^2\), while
\[
\max\left\{
\frac{\rho^{1/(k-1)}}{\eps^{k_\star}},
\frac{\rho^{1/(k-1)}}{\eps^2}
\right\}
=
\frac{\rho^{1/(k-1)}}{\eps^{2\vee k_\star}}.
\]
This is the agnostic rate stated in
Theorem~\ref{thm:main:abridged}.

The two \(\rho\)-dependent terms in~\eqref{eq:Psi_agn}
have different origins. The term
\[
\frac{\rho^{1/(k-1)}}{\eps^{k_\star}}
\]
comes from the crossover
\[
\cp_h
\asymp
\cd_h
\asymp
\frac{\GammaMac}{n},
\]
whereas
\[
\frac{\rho^{1/(k-1)}}{\eps^2}
\]
comes from error levels approaching \(p_c(\rho;k)\), near the
upper end of the error-dominated range.

The classical contribution \(1/\eps^2\) dominates whenever
\begin{align}
\label{eq:rho_threshold}    
\rho
\lesssim
\begin{cases}
\eps^{2-k},
&
1<k<2,
\\[3pt]
1,
&
k\geq2.
\end{cases}
\end{align}
In this regime,
\(
p_c(\rho;k)\asymp_k1
\),
and the rate is supplied by error levels bounded away from zero,
as in classical agnostic PAC learning.

In the complementary regime, robustness changes the rate, and the
location of the rate-determining contribution follows from comparing
\(k_\star\) with \(2\).
If \(1<k<2\), then \(k_\star>2\), and
the crossover contribution dominates. At \(k=2\), the two
\(\rho\)-dependent terms have the same \(\eps\)-dependence and
\[
\Psi_{2,\rho}^{\agn}(\eps)
\asymp
\frac{1+\rho}{\eps^2}.
\]
Moreover, the \(\rho\)-dependent part of
\eqref{eq:agnostic_upper_envelope} is then independent of
\(\cp_h\), so it is produced throughout the error-dominated
range, although the full envelope is maximized near its upper
endpoint. Finally, if \(k>2\), then \(k_\star<2\), and the
contribution from error levels approaching \(p_c(\rho;k)\)
dominates.

Thus, \(k=2\) marks a transition in the localization of the
robustness-dependent contribution: for \(1<k<2\), it comes from
the near-zero crossover; at \(k=2\), it is present throughout
the error-dominated range; and for \(k>2\), it comes from error
levels approaching \(p_c(\rho;k)\). This is also the
large-\(\rho\) refinement suppressed in
Section~\ref{sec:overview}: the overview captures the statistical
scales, the \(\eps\)-exponents, and the transition at \(k=2\),
while the endpoint comparison is needed for the exact
\(\rho\)-dependence when \(k>2\).

\medskip
The two terms in $\Psi_{k,\rho}^{\agn}(\eps)$ of \eqref{eq:Psi_agn} arise from different lower-bound constructions.
The first is inherited from the realizable lower bound, since any agnostic learner must also succeed on realizable distributions.  For the second, the proof uses distributions whose optimal ordinary error is a constant fraction of \(p_c(\rho;k)\).  Passing the corresponding ordinary excess-error lower bounds through the event-inflation map yields the second term in $\Psi_{k,\rho}^{\agn}(\eps)$.  The details are given in Section~\ref{sec:thm:main_lower_agnostic}.

\begin{theorem}[Agnostic lower bound]
\label{thm:main_lower_agnostic}
There exists a universal constant \(\delta_0>0\) such that, for
every \(k>1\), there exist constants
\(\tilde c_k,\tilde\eps_k>0\) for which
the following holds.
Let \(\rho\ge0\), let \(\delta\in(0,\delta_0]\), and let
\(\hypoclass\) be a hypothesis class with
\[
\VC(\hypoclass)=d<\infty
\qquad\text{and}\qquad
|\hypoclass|\geq3.
\]
For every \(\eps\in(0,\tilde\eps_k]\), any learning rule
\(
r_n:(\X\times\Y)^n\to\mathcal G
\)
satisfying, for every distribution \(P\) over \(\X\times\Y\),
\[
\Pr_{S_n\sim P^n}\!\left[
\risk{k,\rho}{r_n(S_n)}{P}
-
\inf_{h\in\hypoclass}\risk{k,\rho}{h}{P}
\leq \eps
\right]
\geq 1-\delta
\]
must satisfy
\[
n
\ge
\tilde c_k\,
\Psi_{k,\rho}^{\agn}(\eps)
\bigl(d+\log(1/\delta)\bigr).
\]
\end{theorem}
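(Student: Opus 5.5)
The lower bound splits along the two terms of \(\Psi_{k,\rho}^{\agn}(\eps)\) in \eqref{eq:Psi_agn}; it suffices to prove \(n\gtrsim_k T(d+\log(1/\delta))\) separately for each term \(T\) and then take the maximum.

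\emph{First term.} I would deduce it from Theorem~\ref{thm:main_lower_realizable}. If \(P\) is realizable, then \(\inf_{h\in\hypoclass}\risk{k,\rho}{h}{P}=\q{k,\rho}{0}=0\). So an agnostic learner with robust excess risk at most \(\eps\) is a realizable learner with robust risk at most \(\eps\), and the realizable bound \(n\ge\tilde c_k\Psi^{\real}_{k,\rho}(\eps)(d+\log(1/\delta))\) follows directly. This requires \(\tilde\eps_k\le\eps_0\) and \(\delta_0\) no larger than the realizable constant.

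\emph{Second term,} \((1+\rho^{1/(k-1)})/\eps^2\). I would transfer the classical agnostic lower-bound constructions (e.g.\ those behind \eqref{eq:PAC-complexities}) to a higher error scale. The construction is a shattered set of \(d\) points, or two hypotheses for the \(\log(1/\delta)\) part, carrying mass \(\alpha\in(0,1)\). The labels are noisy with bias \(\gamma\), and the remaining mass \(1-\alpha\) sits at a point \(x_0\) whose label every hypothesis misclassifies, or misclassifies with a fixed probability. This shifts every ordinary error by a common offset. I would tune the offset so that all relevant errors lie in \([p_c/4,p_c/2]\), with \(p_c=p_c(\rho;k)\).

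Classical arguments then give the following: unless \(n\gtrsim (d+\log(1/\delta))\,\alpha/\eta^2\), with probability at least \(\delta\) the learner's ordinary excess error is at least \(\eta\). Both errors stay at or below \(p_c\). Hence the lower discrepancy bound \eqref{eq:disc-lower} of Theorem~\ref{thm:qp-qp'_bounds} converts this to a robust excess of at least \(\tilde c_k\eta(1+(\rho/p_c)^{1/k})\). Choosing \(\eta\asymp\eps/(1+(\rho/p_c)^{1/k})\) yields
\[
n\gtrsim(d+\log(1/\delta))\,\frac{\alpha\,(1+(\rho/p_c)^{1/k})^2}{\eps^2}.
\]
I would take \(\alpha\asymp p_c\), which is required so that the ordinary errors are of order \(p_c\) while the classical deviation is on the \(\sqrt{p_c/n}\) scale. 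The resulting factor is \(p_c(1+(\rho/p_c)^{1/k})^2\asymp\bigl(p_c^{1/2}+\rho^{1/k}p_c^{1/2-1/k}\bigr)^2\), which by Lemma~\ref{lem:agnostic_endpoint_comparison} is \(\asymp_k 1+\rho^{1/(k-1)}\). This gives the second term.

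\emph{Improper learners.} An improper learner may output a classifier \(g\) whose error on \(x_0\) differs from that of \(\hypoclass\); the offset point must be handled so that this cannot help. The plan is to exploit monotonicity of \(\q{k,\rho}{\cdot}\): the robust excess of \(g\) is then at least \(\q{k,\rho}{p_g}-\q{k,\rho}{p^\star}\) whenever \(p_g\ge p^\star+\eta\). The classical construction already guarantees \(p_g-p^\star\ge\eta\) for any \(g\) on the hard points. I would place the offset mass as label noise at \(x_0\), with conditional probability \(1/2\), so that every \(g\) pays exactly \((1-\alpha)/2\) there. Clipping is harmless provided \(p^\star+\eta\le p_c\). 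This in turn needs \(\eta\) small, which is where the restriction \(\eps\le\tilde\eps_k\) enters.

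\emph{Main obstacle.} The hardest step is the calibration of the offset with \(p_c\). When \(\rho\) is large, \(p_c\) is tiny. The construction must then use total hard mass \(\alpha\asymp p_c\) together with offset noise, and the noisy offset at \(x_0\) contributes at least \((1-\alpha)/2\), which exceeds \(p_c\). To avoid this, I would instead put the offset mass on a point labeled deterministically so that all hypotheses are correct there; the offset then comes only from label noise within the \(\alpha\)-mass itself. Concretely, on the hard points the Bayes error is \(1/2-\gamma\), so \(p^\star\approx\alpha(1/2-\gamma)\asymp p_c\). To make \(x_0\) correct for every \(g\in\hypoclass\), I would use the \(|\hypoclass|\ge3\) structure, or a dedicated coordinate on which all hypotheses agree, after adjoining it.
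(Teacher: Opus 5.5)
Your overall route is the paper's. The first term comes from Theorem~\ref{thm:main_lower_realizable}, since a realizable $P$ has zero optimal robust risk. The second term comes from classical agnostic constructions localized at mass $p_c(\rho;k)$. These are pushed through the lower increment bound \eqref{eq:disc-lower}, with monotonicity covering outputs whose error exceeds $p_c(\rho;k)$, and then $p_c(1+(\rho/p_c)^{1/k})^2\asymp_k 1+\rho^{1/(k-1)}$ by Lemma~\ref{lem:agnostic_endpoint_comparison}. Your $\log(1/\delta)$ part also matches the paper: two non-complementary hypotheses, a noisy disagreement point of mass $p_c(\rho;k)$, and the remaining mass on an agreement point labeled by their common value. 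The gap is in the $d$-dependent construction, exactly at the step you flag as the main obstacle.

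You put the noisy mass on a full shattered set $\{x_1,\dots,x_d\}$ and the offset mass on an \emph{additional} point at which every hypothesis is correct. You propose to get this point from $|\hypoclass|\ge3$ or by adjoining a dedicated coordinate, and neither works. First, $|\hypoclass|\ge3$ only yields a point where two particular hypotheses agree, not one compatible with all $2^d$ sign patterns. Second, adjoining a coordinate changes $(\X,\hypoclass)$, so the bound would no longer concern the given class. Third, such a point need not exist at all: if $\X$ has exactly $d\ge2$ points and $\hypoclass$ is all functions on $\X$, there is no point outside the shattered set.

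The missing observation is that universal correctness at the offset point is unnecessary. Only the in-class optimum enters $p^\star$, and extra error of the learner's output only helps a lower bound. So take the anchor \emph{inside} the shattered set. Put noisy mass $m_0/(d-1)$ on each of $x_1,\dots,x_{d-1}$, and mass $1-m_0$ on $x_d$ with deterministic label $-1$. Shattering gives, for each $b\in\{-1,1\}^{d-1}$, some $h_b^\star\in\hypoclass$ realizing $b$ and correct at $x_d$, so $p_b^\star=m_0(1-\gamma)/2$. Every $g$, proper or improper, then satisfies $\err{g}{P_b}-p_b^\star\ge\frac{m_0\gamma}{d-1}\hamming(\hat b,b)$ after dropping the nonnegative anchor term, and the Assouad argument applies as before. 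Losing one coordinate costs only a constant for $d\ge2$. The case $d=1$ is covered by the $\log(1/\delta)$ bound, since $1+\log(1/\delta)\lesssim\log(1/\delta)$ once $\delta_0$ is small. Likewise, in the two-point construction only $h_\pm$, not all of $\hypoclass$, need be correct at the agreement point.
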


\section{Comparison with Prior Work and the Case of KL}
\label{sec:compare-prior-detailed}
The analyses of distributionally robust learning under
Cressie--Read divergences in
\citet{duchi2021learning},
\citet{zhou2023sample}, and
\citet{zhou2026rademacher}
all start from Shapiro's dual formulation
\citep{shapiro2017distributionally}.
Translating Shapiro's terminology into our own, 
set
\[
\ell_h(x,y)\coloneqq \ind{h(x)\neq y},
\qquad
c_k(\rho)\coloneqq (1+k(k-1)\rho)^{1/k};
\]
Shapiro duality characterizes robust risk through 
\begin{equation}\label{eq:Shapiro}
\risk{k,\rho}{h}{P}
=
\inf_{\eta\in\R}
\left\{
g_k(\eta,h;P)
\coloneqq 
c_k(\rho)\,
\Bigl(
\EE_{(X,Y)\sim P}
\bigl[(\ell_h(X,Y)-\eta)_+^{k_\star}\bigr]
\Bigr)^{1/k_\star}
+\eta
\right\}.
\end{equation}
The relevant distinction is whether the \(p_h\)-dependent
two-branch event inflation discrepancy is retained when statistical
bounds are applied. For the \(\chi^2\)-divergence, the upper bounds of~\citet{zhou2023sample} are consistent with the coarser
global estimate
\[
\q{2,\rho}{p}
\leq
p+\sqrt{2\rho\,p}
\leq
(1+\sqrt{2\rho})\sqrt p,
\qquad p\in[0,1],
\]
which collapses the two branches in
\eqref{eq:k=2-regime-capture} into a single square-root bound.
The dependence on the underlying error scale is then lost, and the
resulting upper bounds have \(\eps^{-4}\) agnostic and
\(\eps^{-2}\) realizable dependence.

The later work of~\citet{zhou2026rademacher} likewise does not retain the
\(p_h\)-dependent two-branch structure in the statistical step and
obtains the excess-risk rate \(n^{-(k-1)/(2k)}\). By retaining this
structure before combining it with the scale-sensitive VC bound, our
analysis identifies the rate-determining error scale and the change
in its location at \(k=2\).

A separate issue is the behavior as \(\rho\to0\).
The explicit finite-sample bounds of
\citet{duchi2021learning} and \citet{zhou2026rademacher}, as well as
the agnostic upper bound of \citet{zhou2023sample}, become unbounded
in this limit.
\citet{duchi2021learning} identify the robust-risk estimation rate
\(
n^{-1/(k_\star\vee2)}
\)
and the mechanism by which robustness forces the estimation of
higher-order tail quantities. Their argument, however, controls,
uniformly over an admissible dual interval \(I_\rho\),
\[
\sup_h\sup_{\eta\in I_\rho}
\left|
g_k(\eta,h;P_n)-g_k(\eta,h;P)
\right|.
\]
For bounded losses, the length of this interval is of order
\((c_k(\rho)-1)^{-1}\), which diverges as \(\rho\to0\) and causes the
corresponding blow-up.

For the \(0\)--\(1\)-loss, the Bernoulli structure instead resolves
the adversarial optimization completely through
\(
\risk{k,\rho}{h}{P}
=
\q{k,\rho}{p_h}\)
and \(
p_h=\err{h}{P}.
\)
This avoids uniform control over a dual interval whose length
deteriorates as \(\rho\to0\), allowing our bounds to recover the
classical PAC rates in the non-robust limit.

\subsection{The Case of KL}
\label{sec:KL}
The present analysis treats fixed Cressie--Read orders \(k>1\).
The endpoint \(k=1\), corresponding to the KL divergence,
appears to be qualitatively different rather than merely a
limiting case of the results above.  Indeed, for
\[
D_{\mathrm{KL}}(Q\|P)
=
\int \log\!\left(\frac{dQ}{dP}\right)\,dQ,
\]
the maximal inflation of an event \(A\) with \(P(A)=p\) is
the largest \(q\in[p,1]\) satisfying the binary constraint
\[
q\log\frac q p
+
(1-q)\log\frac{1-q}{1-p}
\le \rho .
\]

Fix \(\rho>0\), and let
\[
q_p\coloneqq\q{\mathrm{KL},\rho}{p}
\]
denote this maximal value. For all sufficiently small \(p\),
\(\log(1/p)>\rho\), so \(q_p\in(p,1)\) and the binary constraint is
active:
\[
q_p\log\frac{q_p}{p}
+
(1-q_p)\log\frac{1-q_p}{1-p}
=
\rho.
\]
Necessarily \(q_p\to0\) as \(p\downarrow0\); otherwise, along a
subsequence bounded away from zero, the first term on the left-hand
side would diverge while the second remains bounded below. Expanding
the equality gives
\[
\rho
=
q_p\log\frac1p
+
q_p\log q_p
+
(1-q_p)\log(1-q_p)
-
(1-q_p)\log(1-p).
\]
Since \(p\to0\) and \(q_p\to0\), the last three terms are \(o(1)\).
Consequently,
\[
q_p\log\frac1p
=
\rho+o(1),
\]
and hence
\[
\q{\mathrm{KL},\rho}{p}
\sim
\frac{\rho}{\log(1/p)}
\qquad\text{as }p\downarrow0.
\]

This should be contrasted with the Cressie--Read case of fixed
\(k>1\) and \(\rho>0\). There the corresponding binary constraint is
\[
\frac{1}{k(k-1)}
\left[
q^k p^{1-k}
+
(1-q)^k(1-p)^{1-k}
-1
\right]
\leq \rho,
\]
and, as \(p\downarrow0\), the maximal feasible \(q\) has order
\[
\q{k,\rho}{p}
\asymp_{k,\rho}
p^{1-1/k}.
\]
Thus, for every fixed \(k>1\) and \(\rho>0\),
\[
\q{k,\rho}{p}
\ll
\q{\mathrm{KL},\rho}{p},
\qquad p\downarrow0.
\]
In this sense, KL balls permit a much stronger amplification
of rare events than any fixed Cressie--Read ball of order
\(k>1\).  This suggests that robust PAC learning under KL
uncertainty may have sample-complexity behavior of a
different, possibly non-polynomial, nature in the accuracy
parameter.  Determining the correct KL robust PAC rates is
therefore a natural open problem.

\acks{The authors received no third-party funding in direct support
of this work and declare that they have no competing interests.}

\appendix

\section{Supplementary Material for Section~\ref{sec:EI}}\label{app:EI-proofs}

We first derive the closed form for \(k=2\), and then prove the
regularity and discrepancy bounds for general \(k>1\).

\subsection{Proof of (\ref{eq:q_2_closed_form})}\label{sec:eq:q_2_closed_form}

Fix \(p\in(0,1)\).  For \(k=2\), the Bernoulli divergence
\eqref{eq:D_k_q_p} becomes
\[
D_2(q\Vert p)
=
\frac12
\left(
\frac{q^2}{p}
+
\frac{(1-q)^2}{1-p}
-
1
\right).
\]
A direct simplification gives
\[
\frac{q^2}{p}
+
\frac{(1-q)^2}{1-p}
-
1
=
\frac{(q-p)^2}{p(1-p)}.
\]
Indeed, after putting the left-hand side over the common denominator
\(p(1-p)\), the numerator is
\[
q^2(1-p)+p(1-q)^2-p(1-p)
=
q^2-2pq+p^2
=
(q-p)^2 .
\]
Hence
\[
D_2(q\Vert p)
=
\frac{(q-p)^2}{2p(1-p)}.
\]

By the one-dimensional characterization \eqref{eq:1D-optimise}, we have
\[
\q{2,\rho}{p}
=
\sup
\left\{
q\in[p,1]:
\frac{(q-p)^2}{2p(1-p)}
\le \rho
\right\}.
\]
Since \(q\ge p\), the constraint is equivalent to
\[
q-p
\le
\sqrt{2\rho\,p(1-p)}.
\]
Therefore
\[
\q{2,\rho}{p}
=
\min\left\{
1,\,
p+\sqrt{2\rho\,p(1-p)}
\right\}.
\]

It remains only to identify the clipping point.  For \(p\in(0,1)\),
\[
p+\sqrt{2\rho\,p(1-p)}\ge1
\]
is equivalent to
\[
\sqrt{2\rho\,p(1-p)}\ge1-p.
\]
Squaring, which is legitimate since both sides are nonnegative, gives
\(
2\rho\,p(1-p)\ge(1-p)^2.
\)
Since \(p<1\), this is equivalent to
\(
2\rho p\ge1-p,
\)
or
\(
p\ge {1}/{(1+2\rho)}.
\)
Thus
\[
p_c(\rho;2)=\frac{1}{1+2\rho}.
\]
Consequently,
\[
\q{2,\rho}{p}
=
p+\sqrt{2\rho\,p(1-p)}
\qquad
\text{for }0<p<p_c(\rho;2),
\]
while
\[
\q{2,\rho}{p}=1
\qquad
\text{for }p\ge p_c(\rho;2).
\]

Finally, the endpoint \(p=0\) is covered by the convention
\(\q{2,\rho}{0}=0\), and the displayed formula also gives
\(0+\sqrt{2\rho\cdot0\cdot1}=0\).  The endpoint \(p=1\) lies in the
clipped branch and gives \(\q{2,\rho}{1}=1\).  This proves
\eqref{eq:q_2_closed_form}.
\hfill\BlackBox

\subsection{Proof of Lemma \ref{lem:q(p)_well_defined}}
\label{sec:EI_proofs}

For \(0<p<p_c(\rho;k)\), \eqref{eq:q(p)_def} and
\eqref{eq:D_k_q_p} give
\begin{align}
\frac{\bigl(\q{k,\rho}{p}\bigr)^k}{p^{k-1}}
+
\frac{\bigl(1-\q{k,\rho}{p}\bigr)^k}
     {(1-p)^{k-1}}
=
1+k(k-1)\rho.
\label{eq:q(p)_rho_equality}
\end{align}

\medskip
\noindent\emph{Differentiability and derivative bounds.}
We compute \(\frac{d\q{k,\rho}{p}}{dp}\) for arbitrary $0<p<p_c(\rho;k)$ using the {\sl implicit function theorem} \citep{rudin1976principles}.
Define
\[
F(p,q)
\coloneqq 
\frac{q^k}{p^{k-1}}
+
\frac{(1-q)^k}{(1-p)^{k-1}}
-
\bigl(1+k(k-1)\rho\bigr).
\]
Then,
\(
F(p,\q{k,\rho}{p})=0
\). Since \(\q{k,\rho}{p}\in(p,1)\), the function \(F\) is continuously
differentiable in a neighborhood of
\((p,\q{k,\rho}{p})\). Its partial derivatives are
\begin{align*}
\frac{\partial F}{\partial p}(p,q)
&=
(k-1)\left(
\frac{(1-q)^k}{(1-p)^k}
-
\frac{q^k}{p^k}
\right),
\\[5pt]
\frac{\partial F}{\partial q}(p,q)
&=
k\left(
\frac{q^{k-1}}{p^{k-1}}
-
\frac{(1-q)^{k-1}}{(1-p)^{k-1}}
\right).
\end{align*}
Introduce
\[
u\coloneqq \frac{\q{k,\rho}{p}}{p},
\qquad
v\coloneqq \frac{1-\q{k,\rho}{p}}{1-p}.
\]
Since \(\q{k,\rho}{p}\in(p,1)\), we have \(u>1\) and
\(0<v<1\). Hence
\[
\frac{\partial F}{\partial q}(p,\q{k,\rho}{p})
=
k(u^{k-1}-v^{k-1})
>
0.
\]
By the implicit function theorem,
\(p\mapsto\q{k,\rho}{p}\) is differentiable locally, and
\[
\frac{d\q{k,\rho}{p}}{dp}
=
-
\frac{
\frac{\partial F}{\partial p}(p,\q{k,\rho}{p})
}{
\frac{\partial F}{\partial q}(p,\q{k,\rho}{p})
}.
\]
Since
\[
\frac{\partial F}{\partial p}(p,\q{k,\rho}{p})
=
(k-1)(v^k-u^k),
\]
we obtain
\begin{align}
\frac{d\q{k,\rho}{p}}{dp}
=
\frac{k-1}{k} \cdot 
\frac{u^k-v^k}{u^{k-1}-v^{k-1}}.
\label{eq:dq/dp_u_v}
\end{align}

We first compare the derivative to the scale \(u\). Write
\(
\theta\coloneqq v/u
\).
Since \(u>1\) and \(0<v<1\), we have \(\theta\in(0,1)\). Owing to 
\eqref{eq:dq/dp_u_v},
\[
\frac{d\q{k,\rho}{p}}{dp}
=
u\cdot
\frac{k-1}{k}
\frac{1-\theta^k}{1-\theta^{k-1}}.
\]
For \(\theta\in(0,1)\),
\[
1
\le
\frac{1-\theta^k}{1-\theta^{k-1}}
\le
\frac{k}{k-1}.
\]
Indeed, the lower bound follows from
\(\theta^k\le \theta^{k-1}\). For the upper bound,
\[
1-\theta^k
=
k\int_\theta^1 t^{k-1}\,dt
\le
k\int_\theta^1 t^{k-2}\,dt
=
\frac{k}{k-1}(1-\theta^{k-1}),
\]
where the inequality uses \(t^{k-1}\le t^{k-2}\) on \((0,1]\).
Therefore
\begin{align}
\frac{k-1}{k}\,u
\le
\frac{d\q{k,\rho}{p}}{dp}
\le
u .
\label{eq:dqdp_comparable_to_u}
\end{align}

It remains to bound \(u\) above and below. The defining equation
\eqref{eq:q(p)_rho_equality} is equivalently
\begin{align}
p u^k+(1-p)v^k
=
1+k(k-1)\rho .
\label{eq:uv_constraint}
\end{align}

We first prove the lower bound for $u$. Since \(0<v<1\), we have \(v^k\le1\), and hence
\[
p u^k
=
1+k(k-1)\rho-(1-p)v^k
\ge
p+k(k-1)\rho .
\]
Thus
\[
u^k
\ge
1+\frac{k(k-1)\rho}{p}.
\]
Consequently,
\[
u
\ge
\left(
1+\frac{k(k-1)\rho}{p}
\right)^{1/k}.
\]
Using
\[
(1+s)^{1/k}
\ge
2^{1/k-1}\bigl(1+s^{1/k}\bigr),
\qquad s\ge0,
\]
we obtain
\[
u
\ge
2^{1/k-1}
+
2^{1/k-1}\bigl(k(k-1)\bigr)^{1/k}
\left(\frac{\rho}{p}\right)^{1/k}.
\]
Combining this with the left inequality in
\eqref{eq:dqdp_comparable_to_u}, we get
\[
\frac{d\q{k,\rho}{p}}{dp}
\ge
\tilde C_k
+
\tilde c_k
\left(\frac{\rho}{p}\right)^{1/k},
\]
where one may take
\[
\tilde C_k
\coloneqq 
\frac{k-1}{k}\,2^{1/k-1},
\qquad
\tilde c_k
\coloneqq 
\frac{k-1}{k}\,
2^{1/k-1}
\bigl(k(k-1)\bigr)^{1/k}.
\]

We now prove the upper bound. Set
\[
r(p)\coloneqq \q{k,\rho}{p}-p.
\]
Then \(0<r(p)<1-p\), and
\[
\q{k,\rho}{p}=p+r(p).
\]
Substituting this into \eqref{eq:q(p)_rho_equality} gives
\[
\frac{(p+r(p))^k}{p^{k-1}}
+
\frac{(1-p-r(p))^k}{(1-p)^{k-1}}
=
1+k(k-1)\rho .
\]
Subtracting
\[
\frac{p^k}{p^{k-1}}
+
\frac{(1-p)^k}{(1-p)^{k-1}}
=
1
\]
yields
\[
\frac{(p+r(p))^k-p^k}{p^{k-1}}
=
k(k-1)\rho
+
\frac{(1-p)^k-(1-p-r(p))^k}{(1-p)^{k-1}} .
\]
The map
\[
y\mapsto (y+r(p))^k-y^k
\]
is increasing on \([0,\infty)\), and therefore
\[
(p+r(p))^k-p^k
\ge
r(p)^k.
\]
Also, by the mean value theorem applied on
\([1-p-r(p),1-p]\),
\[
(1-p)^k-(1-p-r(p))^k
\le
k(1-p)^{k-1}r(p).
\]
Therefore
\[
\frac{r(p)^k}{p^{k-1}}
\le
k(k-1)\rho+k r(p).
\]
Now set
\(
x\coloneqq r(p)/p.
\)
Dividing the previous display by \(p\), we obtain
\[
x^k
\le
\frac{k(k-1)\rho}{p}
+
kx .
\]
Let
\[
\Lambda\coloneqq \frac{k(k-1)\rho}{p}.
\]
If \(x^k\le2\Lambda\), then
\(
x
\le
2^{1/k}\Lambda^{1/k}
\).
If \(x^k>2\Lambda\), then
\[
x^k
\le
\Lambda+kx
<
\frac{x^k}{2}+kx,
\]
and hence
\(
x^{k-1}<2k
\),
leading, in this case, to
\(
x<(2k)^{1/(k-1)}
\).
Combining the two cases,
\[
x
\le
(2k)^{1/(k-1)}
+
2^{1/k}\bigl(k(k-1)\bigr)^{1/k}
\left(\frac{\rho}{p}\right)^{1/k}.
\]
Since \(u=1+x\), it follows that
\[
u
\le
C_k
+
c_k
\left(\frac{\rho}{p}\right)^{1/k},
\]
where one may take
\[
C_k
\coloneqq 
1+(2k)^{1/(k-1)},
\qquad
c_k
\coloneqq 
2^{1/k}\bigl(k(k-1)\bigr)^{1/k}.
\]
Combining this with the right inequality in
\eqref{eq:dqdp_comparable_to_u}, we get
\[
\frac{d\q{k,\rho}{p}}{dp}
\le
C_k
+
c_k
\left(\frac{\rho}{p}\right)^{1/k}.
\]

The lower and upper derivative bounds prove \eqref{eq:q'_bound}.

\medskip
\noindent\emph{Continuity and absolute continuity of the extended map.}
The implicit-function argument above gives \(C^1\) regularity of
\(p\mapsto\q{k,\rho}{p}\) on \((0,p_c(\rho;k))\). Since the extension is
constant on \([p_c(\rho;k),1]\), it remains to check the behavior at the two
endpoints of the interior branch.

As \(p\downarrow0\), the defining equation gives
\[
\frac{\q{k,\rho}{p}^k}{p^{k-1}}
\le
1+k(k-1)\rho,
\]
and hence
\[
\q{k,\rho}{p}
\le
\bigl(1+k(k-1)\rho\bigr)^{1/k}
p^{1-\frac1k}
\to0.
\]
Thus
\[
\q{k,\rho}{p}\to0=\q{k,\rho}{0}.
\]

It remains to consider \(p\uparrow p_c(\rho;k)\). Since the derivative
computed above is positive, the interior branch is increasing.
Thus the limit
\[
\ell\coloneqq \lim_{p\uparrow p_c(\rho;k)}\q{k,\rho}{p}
\]
exists and belongs to \([p_c(\rho;k),1]\). Passing to the limit in the
defining equation gives
\[
D_k(\ell\Vert p_c(\rho;k))=\rho.
\]
By the definition of \(p_c(\rho;k)\),
\[
D_k(1\Vert p_c(\rho;k))=\rho.
\]
Moreover, \(q\mapsto D_k(q\Vert p_c(\rho;k))\) is strictly increasing
on \((p_c(\rho;k),1]\). Therefore \(\ell=1\), and hence
\[
\q{k,\rho}{p}\to1=\q{k,\rho}{p_c(\rho;k)}
\qquad\text{as }p\uparrow p_c(\rho;k).
\]
This proves continuity of the extended map on \([0,1]\).

We now prove absolute continuity. The derivative upper bound gives,
for \(0<t<p_c(\rho;k)\),
\[
0
\le
\frac{d\q{k,\rho}{t}}{dt}
\le
C_k+c_k\left(\frac{\rho}{t}\right)^{1/k}.
\]
The right-hand side is integrable on \((0,p_c(\rho;k))\), since \(k>1\).
Therefore, for every \(0<a<b<p_c(\rho;k)\),
\[
\q{k,\rho}{b}-\q{k,\rho}{a}
=
\int_a^b
\frac{d\q{k,\rho}{t}}{dt}\,dt
\le
\int_a^b
\left[
C_k+c_k\left(\frac{\rho}{t}\right)^{1/k}
\right]dt .
\]
By the endpoint continuity just proved, the same bound extends to
all \(0\le a<b\le p_c(\rho;k)\) by passing to the boundary. Consequently,
for every finite collection of disjoint intervals
\(\{[a_i,b_i]\}_i\subset[0,p_c(\rho;k)]\),
\[
\sum_i
\left|
\q{k,\rho}{b_i}
-
\q{k,\rho}{a_i}
\right|
\le
\int_{\cup_i [a_i,b_i]}
\left[
C_k+c_k\left(\frac{\rho}{t}\right)^{1/k}
\right]dt .
\]
Since the dominating function is integrable, the right-hand side
can be made arbitrarily small whenever the total length of
\(\cup_i [a_i,b_i]\) is sufficiently small. Hence
\(p\mapsto\q{k,\rho}{p}\) is absolutely continuous on
\([0,p_c(\rho;k)]\). Since it is constant on \([p_c(\rho;k),1]\), the extended map is
absolutely continuous on \([0,1]\).
\hfill\BlackBox

\subsection{Proof of Theorem \ref{thm:qp-qp'_bounds}}\label{app:thm:qp-qp'_bounds}

Throughout the proof, \(\rho>0\). By
Lemma~\ref{lem:q(p)_well_defined}, the map
\(p\mapsto \q{k,\rho}{p}\) is continuous on \([0,1]\),
differentiable on \((0,p_c(\rho;k))\), and satisfies
\begin{equation}\label{eq:derivative-bounds}
\tilde C_k
+
\tilde c_k\left(\frac{\rho}{t}\right)^{1/k}
\le
\frac{d\q{k,\rho}{t}}{dt}
\le
C_k
+
c_k\left(\frac{\rho}{t}\right)^{1/k},
\end{equation}
whenever \(0<t<p_c(\rho;k)\). 
Moreover,
\[
\q{k,\rho}{p}=1
\qquad\text{for all }p\in[p_c(\rho;k),1].
\]
It follows, in particular, that \(p\mapsto \q{k,\rho}{p}\)
is nondecreasing on \([0,1]\).

\bigskip
\noindent
\emph{Proof of~\eqref{eq:disc-upper}.} By the definition of clipping,
\[
\q{k,\rho}{p}
=
\q{k,\rho}{\bar p},
\qquad
\q{k,\rho}{p'}
=
\q{k,\rho}{\bar p^{\,\prime}}.
\]
Hence it suffices to control the increment between the
clipped endpoints.

By symmetry, assume
\(
\bar p\le \bar p^{\,\prime}
\).
If \(\cd=0\), then the increment is zero and there is
nothing to prove. Otherwise,
\(
0\le \bar p<\bar p^{\,\prime}\le p_c(\rho;k).
\)
Since \(p\mapsto \q{k,\rho}{p}\) is absolutely continuous
on \([0,1]\), the fundamental theorem of calculus asserts that
\[
\q{k,\rho}{p'}
-
\q{k,\rho}{p}
=
\q{k,\rho}{\bar p^{\,\prime}}
-
\q{k,\rho}{\bar p}
=
\int_{\bar p}^{\bar p^{\,\prime}}
\frac{d\q{k,\rho}{t}}{dt}\,dt.
\]

Using the derivative upper bound seen at~\eqref{eq:derivative-bounds}, 
we may write
\[
\q{k,\rho}{p'}
-
\q{k,\rho}{p}
\le
C_k\cd
+
c_k\rho^{1/k}
\int_{\bar p}^{\bar p^{\,\prime}} t^{-1/k}\,dt .
\]
Since \(k>1\),
\[
\int_{\bar p}^{\bar p^{\,\prime}} t^{-1/k}\,dt
=
\frac{k}{k-1}
\left[
(\bar p^{\,\prime})^{1-\frac1k}
-
\bar p^{\,1-\frac1k}
\right].
\]
Furthermore,
\[
(\bar p^{\,\prime})^{1-\frac1k}
-
\bar p^{\,1-\frac1k}
=
(\bar p^{\,\prime})^{1-\frac1k}
\left[
1-
\left(\frac{\bar p}{\bar p^{\,\prime}}\right)^{1-\frac1k}
\right].
\]
Because \(1-\frac1k\in(0,1)\), we have
\[
\left(\frac{\bar p}{\bar p^{\,\prime}}\right)^{1-\frac1k}
\ge
\frac{\bar p}{\bar p^{\,\prime}},
\]
and hence
\[
1-
\left(\frac{\bar p}{\bar p^{\,\prime}}\right)^{1-\frac1k}
\le
1-\frac{\bar p}{\bar p^{\,\prime}}.
\]
Consequently,
\[
(\bar p^{\,\prime})^{1-\frac1k}
-
\bar p^{\,1-\frac1k}
\le
(\bar p^{\,\prime})^{1-\frac1k}
\left(
1-\frac{\bar p}{\bar p^{\,\prime}}
\right)
=
\frac{\bar p^{\,\prime}-\bar p}
     {(\bar p^{\,\prime})^{1/k}}
=
\frac{\cd}
     {(\bar p^{\,\prime})^{1/k}}.
\]
Therefore,
\[
\q{k,\rho}{p'}
-
\q{k,\rho}{p}
\le
C_k\cd
+
\frac{k}{k-1}c_k\rho^{1/k}
\frac{\cd}
     {(\bar p^{\,\prime})^{1/k}}.
\]
Under the standing assumption
\(\bar p\le\bar p^{\,\prime}\), we have
\(
\bar p\vee\bar p^{\,\prime}
=
\bar p^{\,\prime}
\).
Thus,
\[
\q{k,\rho}{p'}
-
\q{k,\rho}{p}
\le
C_k\cd
+
\frac{k}{k-1}c_k\rho^{1/k}
\frac{\cd}
     {(\bar p\vee\bar p^{\,\prime})^{1/k}}.
\]
The case \(\bar p^{\,\prime}\le\bar p\) follows by exchanging
the roles of \(p\) and \(p'\). This proves
\[
\bigl|
\q{k,\rho}{p}
-
\q{k,\rho}{p'}
\bigr|
\le
C_k\cd
+
\frac{k}{k-1}c_k\rho^{1/k}
\frac{\cd}
     {(\bar p\vee\bar p^{\,\prime})^{1/k}},
\]
with the convention that the second term is zero when
\(\cd=0\).
Absorbing the two \(k\)-dependent coefficients into a single constant
proves~\eqref{eq:disc-upper}.

\bigskip
\noindent
\emph{Proof of~\eqref{eq:disc-lower}.} Assume
\[
0<p<p'<p_c(\rho;k),
\qquad
\Delta=p'-p.
\]
Then
\[
[p,p']
\subset
(0,p_c(\rho;k)),
\]
where \(t\mapsto\q{k,\rho}{t}\) is differentiable.  Hence, by the
absolute continuity of the event-inflation map,
\[
\q{k,\rho}{p'}-\q{k,\rho}{p}
=
\int_p^{p'}
\frac{d\q{k,\rho}{t}}{dt}\,dt.
\]
Using the derivative lower bound in~\eqref{eq:derivative-bounds},
\[
\q{k,\rho}{p'}-\q{k,\rho}{p}
\ge
\tilde C_k\Delta
+
\tilde c_k\rho^{1/k}
\int_p^{p'} t^{-1/k}\,dt.
\]
Since \(t\le p'\) throughout the interval \([p,p']\),
\[
t^{-1/k}\ge (p')^{-1/k}.
\]
Therefore,
\[
\int_p^{p'} t^{-1/k}\,dt
\ge
\frac{p'-p}{(p')^{1/k}}
=
\frac{\Delta}{(p')^{1/k}}.
\]
Consequently,
\[
\q{k,\rho}{p'}-\q{k,\rho}{p}
\ge
\tilde C_k\Delta
+
\tilde c_k\rho^{1/k}
\frac{\Delta}{(p')^{1/k}}.
\]
After decreasing \(\tilde c_k\), if necessary, this is precisely
\eqref{eq:disc-lower}.

It remains to treat the boundary cases allowed in the statement of the
theorem.  If \(p=0\) and \(p'<p_c(\rho;k)\), apply the preceding
inequality with \(p>0\) and let \(p\downarrow0\).  By the continuity of
the event-inflation map at zero and the identity
\(\q{k,\rho}{0}=0\), this gives
\[
\q{k,\rho}{p'}
-
\q{k,\rho}{0}
\ge
\tilde c_k\,p'
\left[
1+
\left(\frac{\rho}{p'}\right)^{1/k}
\right].
\]

If \(p'=p_c(\rho;k)\), apply the preceding inequality with
\(p'<p_c(\rho;k)\) and let \(p'\uparrow p_c(\rho;k)\).  Continuity of
the event-inflation map at \(p_c(\rho;k)\) yields
\[
\q{k,\rho}{p_c(\rho;k)}
-
\q{k,\rho}{p}
\ge
\tilde c_k\,
\bigl(p_c(\rho;k)-p\bigr)
\left[
1+
\left(
\frac{\rho}{p_c(\rho;k)}
\right)^{1/k}
\right].
\]
The same argument covers the case \(p=0\) and
\(p'=p_c(\rho;k)\).  This proves~\eqref{eq:disc-lower} throughout the
range
\[
0\le p<p'\le p_c(\rho;k).
\]
\hfill\BlackBox

\section{Supplementary Material for Section~\ref{sec:PAC_rates}}\label{app:PAC_rates}

We first establish the VC bounds, then prove the realizable and
agnostic sample-complexity results, and finally verify the
self-consistency condition used in the upper bounds.

\subsection{Scale-Sensitive VC Bounds and Their Clipped Versions}

Before proving Lemma~\ref{lem:clipped-vc-master}, we first establish the following unclipped version.
\begin{lemma}[Scale-sensitive VC bound]
\label{lem:vc-master}
Let \(\hypoclass\) be a hypothesis class with
\(\VC(\hypoclass)=d\ge1\), let \(n\ge d\), and let
\(\delta\in(0,1)\).
There exists a universal constant $C>0$ such that with probability at
least $1-\delta$ over the choice of the sample, 
both
\begin{align}
p_h-\hat p_h
&\le
C\sqrt{\frac{p_h\GammaMac}{n}}
\label{eq:vc-master}
\qquad\text{and}\qquad
\hat p_h-p_h
\le
C\sqrt{\frac{\hat p_h\GammaMac}{n}}
\end{align}
hold uniformly over all $h\in\hypoclass$.
In particular, both
\begin{align}
\label{eq:master_vc_abs}
\Delta_h
&\le
C\left(
\sqrt{\frac{p_h\GammaMac}{n}}
+
\frac{\GammaMac}{n}
\right)
\end{align}
and
\begin{align}
\label{eq:master_vc_real}
\hat p_h=0
\quad\Longrightarrow\quad p_h\le C\frac{\GammaMac}{n}  \end{align}
hold uniformly over all $h\in\hypoclass$.
\end{lemma}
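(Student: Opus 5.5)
The plan is to invoke the classical relative (normalized) VC deviation inequalities of \citet[Theorem~5.1]{boucheron2005theory}, applied to the class of error sets \(\{A_h:h\in\hypoclass\}\). These state that, with probability at least \(1-\delta\),
\[
\sup_{h}\frac{p_h-\hat p_h}{\sqrt{p_h}}\le 2\sqrt{\frac{\log S(2n)+\log(4/\delta)}{n}},
\qquad
\sup_{h}\frac{\hat p_h-p_h}{\sqrt{\hat p_h}}\le 2\sqrt{\frac{\log S(2n)+\log(4/\delta)}{n}},
\]
where \(S\) is the growth function of the error-set class. Each holds separately with probability \(1-\delta\), so I apply each at level \(\delta/2\) and take a union bound, which turns \(4/\delta\) into \(8/\delta\). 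The error-set class \(\{A_h\}\) has the same VC dimension \(d\) as \(\hypoclass\), because the map \(h\mapsto A_h\) restricted to labeled points is a relabeling bijection on traces. Sauer's lemma then gives \(\log S(2n)\le d\log(2\e n/d)\) for \(n\ge d\). Together these yield exactly \(\GammaMac\) and the two one-sided bounds in \eqref{eq:vc-master}, with a universal \(C\).

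Next, I deduce \eqref{eq:master_vc_abs} on the same event. Set \(a=\sqrt{\GammaMac/n}\).
\begin{itemize}
\item If \(p_h\ge\hat p_h\), then \eqref{eq:master_vc_abs} follows directly from the first inequality.
\item If \(\hat p_h>p_h\), the second inequality gives \(\hat p_h\le p_h+Ca\sqrt{\hat p_h}\). Viewing this as a quadratic inequality in \(\sqrt{\hat p_h}\), we get \(\sqrt{\hat p_h}\le Ca+\sqrt{p_h}\), hence \(\sqrt{\hat p_h}\lesssim\sqrt{p_h}+a\). Substituting back gives \(\hat p_h-p_h\le Ca(\sqrt{p_h}+Ca)\lesssim\sqrt{p_h}\,a+a^2\).
\end{itemize}
The constant is enlarged as needed.

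Finally, \eqref{eq:master_vc_real} follows from the first inequality with \(\hat p_h=0\). This gives \(p_h\le C\sqrt{p_h}\,a\), so \(\sqrt{p_h}\le Ca\) and \(p_h\le C^2\GammaMac/n\).

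The only delicate points are bookkeeping rather than conceptual. First, I need to make sure the cited theorem's exact constants and form (\(\log S(2n)\) versus the shatter coefficient, and \(4/\delta\)) match the definition of \(\GammaMac\) up to a universal constant. Second, measurability of the suprema is handled by the outer-probability convention of the footnote. I expect the main (minor) obstacle to be the quadratic inversion in the \(\hat p_h>p_h\) case.
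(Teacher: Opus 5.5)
Your proposal is correct and follows essentially the same route as the paper. Both arguments apply the relative VC inequalities of Boucheron, Bousquet and Lugosi at level $\delta/2$ with a union bound, then use Sauer--Shelah to reach $\Gamma_{d,n}(\delta)$, then use the same case split for $\Delta_h$ and the same division argument for the implication $\hat p_h=0\Rightarrow p_h\lesssim\Gamma_{d,n}(\delta)/n$. The one cosmetic difference is the case $\hat p_h>p_h$: you solve a quadratic in $\sqrt{\hat p_h}$, whereas the paper substitutes $\hat p_h=p_h+\Delta_h$ and absorbs the resulting $\sqrt{\Delta_h\Gamma_{d,n}(\delta)/n}$ term into the left-hand side by AM--GM; both give the same bound up to universal constants.
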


\subsubsection{Proof of Lemma~\ref{lem:vc-master}: Scale-Sensitive VC Bound}\label{app:vc-master}

For $h\in\hypoclass$, define
\(
f_h(x,y)\coloneqq \ind{h(x)\neq y}
\)
and let
\(
\mathcal F
\coloneqq 
\bigl\{f_h
: h\in\hypoclass\bigr\}
\).
Then
\[
P f_h
\coloneqq
\EE_{(X,Y)\sim P}\!\left[f_h(X,Y)\right]
=
p_h,
\qquad
P_n f_h
\coloneqq
\EE_{(X,Y)\sim P_n}\!\left[f_h(X,Y)\right]
=
\hat p_h.
\]
Applying the two one-sided inequalities in
\citet[Theorem~5.1]{boucheron2005theory} with confidence parameter
\(\delta/2\), and then taking a union bound, shows that with
probability at least \(1-\delta\), simultaneously for every
\(f\in\mathcal F\),
\begin{align}
P f-P_n f
&\le
2\sqrt{
P f\,
\frac{
\log S_{\mathcal F}(X_1^{2n})
+
\log(8/\delta)
}{n}
},
\label{eq:bbl-upper}
\\[6pt]
P_n f-P f
&\le
2\sqrt{
P_n f\,
\frac{
\log S_{\mathcal F}(X_1^{2n})
+
\log(8/\delta)
}{n}
}.
\label{eq:bbl-lower}
\end{align}
Here \(S_{\mathcal F}(X_1^{2n})\) denotes the shattering coefficient
of \(\mathcal F\) on an i.i.d.\ sample of size \(2n\).

For any fixed labeled sample, the error patterns generated by
\[
\mathcal F
=
\left\{
(x,y)\mapsto\ind{h(x)\neq y}:
h\in\hypoclass
\right\}
\]
are obtained from the prediction patterns generated by
\(\hypoclass\) by coordinatewise relabeling according to the fixed
labels. Hence
\(\VC(\mathcal F)\leq\VC(\hypoclass)=d\).

Since \(\VC(\mathcal F)\le d\) and \(n\ge d\), the Sauer--Shelah
lemma gives
\[
S_{\mathcal F}(X_1^{2n})
\le
\left(
\frac{2\e n}{d}
\right)^d
\]
almost surely.  Therefore,
\[
\log S_{\mathcal F}(X_1^{2n})
+
\log(8/\delta)
\le
d\log\left(
\frac{2\e n}{d}
\right)
+
\log\left(
\frac8\delta
\right)
=
\Gamma_{d,n}(\delta).
\]
Applying \eqref{eq:bbl-upper} and \eqref{eq:bbl-lower} to
\(f_h\) gives, simultaneously for every \(h\in\hypoclass\),
\[
p_h-\hat p_h
\le
C\sqrt{
\frac{p_h\Gamma_{d,n}(\delta)}{n}
}
\]
and
\[
\hat p_h-p_h
\le
C\sqrt{
\frac{\hat p_h\Gamma_{d,n}(\delta)}{n}
}.
\]
This proves the first part of the lemma.

\medskip
To establish \eqref{eq:master_vc_abs}, set
\[
a
\coloneqq
\frac{\Gamma_{d,n}(\delta)}{n}
\]
and fix \(h\in\hypoclass\).
If \(\hat p_h\leq p_h\), then
\[
\Delta_h
\leq
C\sqrt{p_h a}.
\]
If \(\hat p_h>p_h\), then
\[
\Delta_h
\leq
C\sqrt{\hat p_h a}
=
C\sqrt{(p_h+\Delta_h)a}
\leq
C\sqrt{p_h a}
+
C\sqrt{\Delta_h a}.
\]
Using
\[
C\sqrt{\Delta_h a}
\leq
\frac{\Delta_h}{2}
+
\frac{C^2a}{2}
\]
and rearranging gives
\[
\Delta_h
\leq
C\left(
\sqrt{p_h a}+a
\right)
\]
after increasing the universal constant. This proves
\eqref{eq:master_vc_abs} uniformly over \(h\in\hypoclass\).

\medskip
Finally, if \(\hat p_h=0\), then
\eqref{eq:vc-master} gives
\[
p_h
\leq
C\sqrt{
\frac{p_h\Gamma_{d,n}(\delta)}{n}
}.
\]
Thus either \(p_h=0\), or division by \(\sqrt{p_h}\) and squaring
give
\[
p_h
\leq
C^2\frac{\Gamma_{d,n}(\delta)}{n}.
\]
This proves \eqref{eq:master_vc_real} after enlarging the universal
constant.
\hfill\BlackBox

\subsubsection{Proof of Lemma~\ref{lem:clipped-vc-master}: Clipped Scale-Sensitive VC Bound}
\label{app:clipped-vc-master}

Assume the event on which
\eqref{eq:master_vc_abs} and
\eqref{eq:master_vc_real} hold uniformly, and set
\[
a
\coloneqq
\frac{\GammaMac}{n}.
\]
Fix \(h\in\hypoclass\). Since
\(p\mapsto p\wedge p_c(\rho;k)\) is \(1\)-Lipschitz,
\[
\cd_h
\leq
\Delta_h.
\]

If \(\cd_h=0\), there is nothing to prove. Otherwise,
\[
p_h
\leq
\cp_h+\Delta_h.
\]
Indeed, this is immediate when
\(p_h\leq p_c(\rho;k)\); if
\(p_h>p_c(\rho;k)\), then
\(\cd_h>0\) implies
\(\hat p_h<p_c(\rho;k)\), and hence
\[
p_h-p_c(\rho;k)
\leq
p_h-\hat p_h
=
\Delta_h.
\]

Therefore, by \eqref{eq:master_vc_abs},
\begin{align*}
\Delta_h
\leq
C\left(
\sqrt{p_h a}+a
\right)
\leq
C\left(
\sqrt{\cp_h a}
+
\sqrt{\Delta_h a}
+
a
\right).
\end{align*}
Using
\[
C\sqrt{\Delta_h a}
\leq
\frac{\Delta_h}{2}
+
\frac{C^2a}{2}
\]
and rearranging gives
\[
\Delta_h
\leq
C\left(
\sqrt{\cp_h a}+a
\right)
\]
after increasing the universal constant. Since
\(\cd_h\leq\Delta_h\), this proves
\eqref{eq:master_vc_abs_clip}.

Finally, if \(\cep_h=0\), then
\(\hat p_h=0\), because \(p_c(\rho;k)>0\).
Equation~\eqref{eq:master_vc_real} therefore gives
\[
p_h
\leq
C\frac{\GammaMac}{n},
\]
and hence
\[
\cp_h
\leq
p_h
\leq
C\frac{\GammaMac}{n}.
\]
This proves \eqref{eq:master_vc_real_clip}.
\hfill\BlackBox

\subsection{Sample Complexity of Realizably Robust Learning: The Proofs}\label{app:real-rates-bounds}

We first prove the upper bound and then the lower bound.

\bigskip

\subsubsection{Proof of Theorem~\ref{thm:main_upper_realizable}:
Realizable Upper Bounds}

Let \(h^\star\in\hypoclass\) satisfy
\(\err{h^\star}{P}=0\). Then
\(\hat p_{h^\star}=0\) almost surely, and hence
\[
\risk{k,\rho}{h^\star}{\Ped}
=
\q{k,\rho}{0}
=
0.
\]
Since \(\erm\) is a DRERM and robust risks are nonnegative,
\[
\risk{k,\rho}{\erm}{\Ped}
=
0.
\]
Moreover, since the empirical distribution \(\Ped\) is feasible in
the supremum defining the empirical robust risk,
\[
\risk{k,\rho}{h}{\Ped}
\geq
\err{h}{\Ped}
=
\hat p_h
\]
for every \(h\in\hypoclass\). Therefore,
\(\risk{k,\rho}{\erm}{\Ped}=0\) implies
\[
\hat p_{\erm}=0,
\qquad
\cep_{\erm}=0,
\]
where for \(p\in[0,1]\), we recall that
\[
\bar p\coloneqq p\wedge p_c(\rho;k).
\]

The sample-size condition in the theorem statement implies \(n\geq d\), after
increasing \(C_k\) if necessary. Hence
Lemma~\ref{lem:clipped-vc-master} applies, and with probability at
least \(1-\delta\), \eqref{eq:master_vc_real_clip} gives
\[
\cp_{\erm}
\leq
C\frac{\Gamma_{d,n}(\delta)}{n}.
\]
On this event, the derivation from
\eqref{eq:real-infaltion} through
\eqref{eq:samp_suff_cond_real} shows that
\[
\risk{k,\rho}{\erm}{P}
\leq
\eps
\]
whenever
\[
n
\geq
C'_k\,
\Gamma_{d,n}(\delta)\,
\Psi_{k,\rho}^{\mathrm{real}}(\eps)
\]
for a suitable \(C'_k\geq1\).

Applying Lemma~\ref{lem:self-consistency} with
\[
\Psi
=
C'_k\Psi_{k,\rho}^{\mathrm{real}}(\eps),
\]
the theorem's sample-size condition implies this implicit condition
after increasing \(C_k\) if necessary.
This proves the DRERM assertion.

\medskip
\noindent\emph{Log-free sample-complexity upper bound.}
By \citet[Theorem~2]{hanneke2016optimal}, together with the elementary
cases \(|\hypoclass|\leq2\), there exists a universal constant
\(C_0>0\) such that, for every \(\alpha,\delta\in(0,1)\), there is a
learning rule
\[
r_n^{\mathrm{Han}}:(\X\times\Y)^n\to\mathcal G,
\]
not required to be proper with respect
to \(\hypoclass\), satisfying
\[
\Pr_{S_n\sim P^n}
\left[
\err{r_n^{\mathrm{Han}}(S_n)}{P}
\leq\alpha
\right]
\geq1-\delta
\]
for every distribution \(P\) realizable with respect to
\(\hypoclass\), whenever
\[
n
\geq
\frac{C_0}{\alpha}
\left(
d+\log(1/\delta)
\right).
\]
The rule \(r_n^{\mathrm{Han}}\) is a finite majority vote of members
of \(\hypoclass\). Since these classifiers are measurable,
\(r_n^{\mathrm{Han}}(S_n)\) is measurable for every \(S_n\), and
hence belongs to \(\mathcal G\).

Equation~\eqref{eq:gen-k-regime-capture}, together with
\(\q{k,0}{p}=p\), implies that there exists a constant \(B_k\geq1\)
such that, for every \(\rho\geq0\) and \(p\in[0,1]\),
\[
\q{k,\rho}{p}
\leq
B_k
\left(
\bar p+\rho^{1/k}\bar p^{1-1/k}
\right)
\leq
B_k
\left(
p+\rho^{1/k}p^{1-1/k}
\right).
\]
The second inequality follows from \(\bar p\leq p\) and
\(1-1/k>0\).
Choose \(\beta_k\in(0,1]\) sufficiently small that
\[
B_k
\left(
\beta_k+\beta_k^{1-1/k}
\right)
\leq1,
\]
and set
\[
\alpha
\coloneqq
\frac{\beta_k}{
\Psi_{k,\rho}^{\mathrm{real}}(\eps)
}.
\]
Since \(\eps\in(0,1)\), we have \(\alpha\in(0,1)\). Moreover, the
definition of \(\Psi_{k,\rho}^{\mathrm{real}}(\eps)\) gives
\[
\frac{1}{
\Psi_{k,\rho}^{\mathrm{real}}(\eps)
}
\leq
\eps
\]
and
\[
\rho^{1/k}
\left(
\frac{1}{
\Psi_{k,\rho}^{\mathrm{real}}(\eps)
}
\right)^{1-1/k}
\leq
\eps.
\]

Let
\[
p_n
\coloneqq
\err{r_n^{\mathrm{Han}}(S_n)}{P},
\qquad
\bar p_n
\coloneqq
p_n\wedge p_c(\rho;k).
\]
On the event \(p_n\leq\alpha\), the event-inflation identity applied
to the error event of \(r_n^{\mathrm{Han}}(S_n)\) gives
\begin{align*}
\risk{k,\rho}{r_n^{\mathrm{Han}}(S_n)}{P}
&=
\q{k,\rho}{p_n}
\\
&\leq
B_k
\left(
\bar p_n+\rho^{1/k}\bar p_n^{1-1/k}
\right)
\\
&\leq
B_k
\left(
p_n+\rho^{1/k}p_n^{1-1/k}
\right)
\\
&\leq
B_k
\left(
\alpha+\rho^{1/k}\alpha^{1-1/k}
\right)
\\
&\leq
B_k
\left(
\beta_k+\beta_k^{1-1/k}
\right)\eps
\\
&\leq
\eps.
\end{align*}
Consequently,
\[
\Pr_{S_n\sim P^n}
\left[
\risk{k,\rho}{r_n^{\mathrm{Han}}(S_n)}{P}
\leq\eps
\right]
\geq1-\delta
\]
whenever
\[
n
\geq
C_0\beta_k^{-1}
\Psi_{k,\rho}^{\mathrm{real}}(\eps)
\left(
d+\log(1/\delta)
\right).
\]
Since \(C_0\beta_k^{-1}\) depends only on \(k\), this proves the
claimed log-free sample-complexity upper bound and completes the
proof.
\hfill\BlackBox

\subsubsection{Proof of Theorem~\ref{thm:main_lower_realizable}: Realizable Lower Bounds}

Under the assumption \(|\hypoclass|\geq3\), the classical realizable
VC lower bound
\citep[Eq.~(1)]{hanneke2016optimal} provides universal constants
\(c_{\mathrm{vc}},\eps_{\mathrm{vc}},\delta_{\mathrm{vc}}>0\)
such that every learning rule satisfying
\[
\Pr_{S_n\sim P^n}
\left[
\err{r_n(S_n)}{P}\leq\alpha
\right]
\geq1-\delta
\]
for every distribution \(P\) realizable with respect to
\(\hypoclass\) must satisfy
\[
n
\geq
c_{\mathrm{vc}}
\frac{d+\log(1/\delta)}{\alpha},
\]
whenever
\(\alpha\leq\eps_{\mathrm{vc}}\) and
\(\delta\leq\delta_{\mathrm{vc}}\).

We shall also use the proper-learning lower bound in the final
``Furthermore'' clause of
\citet[Theorem~11]{bousquet2020proper}.
After adjusting its universal
constant, 
it asserts that, for every integer \(d\geq1\), there exist a
measurable instance space \(\X_d\) and a binary hypothesis class
\(\hypoclass_d\) on \(\X_d\), with \(\VC(\hypoclass_d)=d\), such that
every proper
learning rule satisfying the preceding ordinary realizable guarantee
with \(\hypoclass=\hypoclass_d\) must satisfy
\[
n
\geq
\frac{c_{\mathrm{prop}}}{\alpha}
\left[
d\log\left(\frac{\e}{\alpha}\right)
+
\log(1/\delta)
\right]
\]
for all
\[
\alpha\in(0,1/8),
\qquad
\delta\in(0,1/100),
\]
where \(c_{\mathrm{prop}}>0\) is universal.

Set
\[
\eps_0
\coloneqq
\eps_{\mathrm{vc}}\wedge\frac1{16},
\qquad
\delta_0
\coloneqq
\delta_{\mathrm{vc}}\wedge\frac1{200}.
\]
Fix \(k>1\), \(\rho\geq0\),
\(\eps\in(0,\eps_0]\), and
\(\delta\in(0,\delta_0]\).

For \(\rho=0\), define
\[
\alpha_{k,0}(\eps)\coloneqq\eps.
\]
For \(\rho>0\), continuity and strict monotonicity of
\(p\mapsto\q{k,\rho}{p}\) on
\([0,p_c(\rho;k)]\), together with
\[
\q{k,\rho}{0}=0,
\qquad
\q{k,\rho}{p_c(\rho;k)}=1,
\]
show that there is a unique
\[
\alpha_{k,\rho}(\eps)\in(0,p_c(\rho;k))
\]
such that
\begin{equation}
\label{eq:real-inverse-accuracy}
\q{k,\rho}{\alpha_{k,\rho}(\eps)}
=
\eps.
\end{equation}
Write
\[
\alpha\coloneqq\alpha_{k,\rho}(\eps).
\]

We first relate this ordinary accuracy to the realizable complexity
proxy. For \(\rho>0\), the lower bound
\eqref{eq:disc-lower}, applied with \(p=0\) and \(p'=\alpha\), and
the upper bound \eqref{eq:gen-k-regime-capture} give
\[
\eps
=
\q{k,\rho}{\alpha}
\asymp_k
\alpha+\rho^{1/k}\alpha^{1-1/k}.
\]
Inverting this relation yields
\begin{equation}
\label{eq:real-inverse-proxy}
\frac1{\alpha}
\asymp_k
\frac1{\eps}
+
\frac{\rho^{1/(k-1)}}{\eps^{k_\star}}
=
\Psi_{k,\rho}^{\mathrm{real}}(\eps).
\end{equation}
Indeed,
\[
\alpha+\rho^{1/k}\alpha^{1-1/k}
\lesssim_k\eps
\]
bounds each summand separately and hence gives the lower bound on
\(1/\alpha\) in \eqref{eq:real-inverse-proxy}. Conversely,
\[
\eps
\lesssim_k
\alpha+\rho^{1/k}\alpha^{1-1/k}
\]
implies that at least one of the two summands is bounded below by a
\(k\)-dependent constant multiple of \(\eps\), which gives the
corresponding upper bound on \(1/\alpha\). The case \(\rho=0\)
follows directly from \(\alpha=\eps\).

Now suppose that \(r_n\) satisfies the robust realizable guarantee in
the theorem. By the one-dimensional event-inflation identity,
strict monotonicity below \(p_c(\rho;k)\), and
\eqref{eq:real-inverse-accuracy},
\[
\left\{
\risk{k,\rho}{r_n(S_n)}{P}\leq\eps
\right\}
=
\left\{
\err{r_n(S_n)}{P}\leq\alpha
\right\}.
\]
Moreover, \(\q{k,\rho}{p}\geq p\) implies
\(\alpha\leq\eps\leq\eps_{\mathrm{vc}}\). Thus \(r_n\) satisfies the
ordinary realizable PAC guarantee at accuracy \(\alpha\), and the
classical lower bound gives
\[
n
\geq
c_{\mathrm{vc}}
\frac{d+\log(1/\delta)}{\alpha}.
\]
Combining this with \eqref{eq:real-inverse-proxy} proves
\[
n
\geq
\tilde c_k\,
\Psi_{k,\rho}^{\mathrm{real}}(\eps)
\bigl(d+\log(1/\delta)\bigr).
\]

Finally, take the pair \((\X_d,\hypoclass_d)\) supplied by
\citet[Theorem~11]{bousquet2020proper}. If \(r_n\) is proper, then
the same learning rule, viewed through the preceding reduction, is
an ordinary proper learner at accuracy \(\alpha\). The proper-learning
lower bound therefore gives
\[
n
\geq
\frac{c_{\mathrm{prop}}}{\alpha}
\left[
d\log\left(\frac{\e}{\alpha}\right)
+
\log(1/\delta)
\right].
\]
Since \eqref{eq:real-inverse-proxy} implies
\[
\frac1\alpha
\asymp_k
\Psi_{k,\rho}^{\mathrm{real}}(\eps),
\]
and both quantities are at least one, their logarithms are comparable
up to \(k\)-dependent constants. Consequently,
\[
n
\geq
\tilde c_k\,
\Psi_{k,\rho}^{\mathrm{real}}(\eps)
\left[
d\log\left(
\e\,\Psi_{k,\rho}^{\mathrm{real}}(\eps)
\right)
+
\log(1/\delta)
\right].
\]
This proves the proper-learning assertion and completes the proof.
\hfill\BlackBox

\subsection{Sample Complexity of Agnostically Robust Learning: The Proofs}

We first prove the upper bound and then the lower bound.

\subsubsection{
Proof of Theorem~\ref{thm:main_upper}:
Agnostic Upper Bound Achievable by DRERM
}

Let \(\erm=\erm(S_n)\) be a DRERM and set
\[
Z_n
\coloneqq
\sup_{h\in\hypoclass}
\left|
\risk{k,\rho}{h}{P}
-
\risk{k,\rho}{h}{\Ped}
\right|.
\]
By empirical robust-risk optimality,
\[
\risk{k,\rho}{\erm}{P}
-
\inf_{h\in\hypoclass}
\risk{k,\rho}{h}{P}
\leq
2Z_n.
\]

The sample-size condition
\eqref{eq:agn-rate-unabridged} implies \(n\geq d\), after increasing
\(C_k\) if necessary. Hence
Lemma~\ref{lem:clipped-vc-master} applies, and with probability at
least \(1-\delta\), the bound
\eqref{eq:master_vc_abs_clip} holds uniformly over
\(h\in\hypoclass\).

On this event, applying the derivation from
\eqref{eq:rob-risk-regime-capture-agn} through
\eqref{eq:suff_cond_agn} with target accuracy \(\eps/2\) gives
\[
Z_n
\leq
\frac{\eps}{2}
\]
whenever
\[
n
\geq
C'_k\,
\Gamma_{d,n}(\delta)\,
\Psi_{k,\rho}^{\agn}(\eps/2).
\]
Since
\[
\Psi_{k,\rho}^{\agn}(\eps/2)
\leq
2^{k_\star\vee2}
\Psi_{k,\rho}^{\agn}(\eps),
\]
it is enough that
\[
n
\geq
C''_k\,
\Gamma_{d,n}(\delta)\,
\Psi_{k,\rho}^{\agn}(\eps)
\]
for a suitable \(C''_k\geq1\).

Applying Lemma~\ref{lem:self-consistency} with
\[
\Psi
=
C''_k\Psi_{k,\rho}^{\agn}(\eps),
\]
the sample-size condition
\eqref{eq:agn-rate-unabridged} implies this implicit condition after
increasing \(C_k\) if necessary. Hence
\(Z_n\leq\eps/2\), and the claimed robust excess-risk bound follows.
\hfill\BlackBox

\subsubsection{Proof of Theorem~\ref{thm:main_lower_agnostic}: Agnostic Lower Bound}
\label{sec:thm:main_lower_agnostic}

We first prove the lower bound under the stronger failure guarantee
\[
\Pr_{S_n\sim P^n}
\left[
\risk{k,\rho}{r_n(S_n)}{P}
-
\inf_{h\in\hypoclass}\risk{k,\rho}{h}{P}
\geq\eps
\right]
\leq\delta
\]
for every distribution \(P\). The guarantee in the theorem gives the
corresponding bound only with failure threshold \(>\eps\).
Nevertheless, it implies the stronger-form guarantee at accuracy
\(2\eps\), since robust excess risk at least \(2\eps\) implies robust
excess risk greater than \(\eps\). Moreover,
\[
\Psi_{k,\rho}^{\agn}(2\eps)
\geq
2^{-(k_\star\vee2)}
\Psi_{k,\rho}^{\agn}(\eps).
\]
Thus, after decreasing \(\tilde\eps_k\) if necessary and adjusting
the \(k\)-dependent constants, the lower bound proved under the
stronger guarantee implies the stated result.

The realizable lower bound of Theorem~\ref{thm:main_lower_realizable} supplies the first term in
\(\Psi_{k,\rho}^{\agn}\). For the genuinely agnostic term, we first
establish a localized ordinary excess-error lower bound at a prescribed
probability mass scale \(m_0\), and then transfer it through the 
lower-increment bound for the event-inflation map in Theorem~\ref{thm:qp-qp'_bounds}. Setting
\[
m_0=p_c(\rho;k)
\]
will yield the \(d\)-dependent contribution, while a separate
two-point construction will yield the confidence contribution.

\begin{lemma}[Localized traditional VC lower bound]
\label{lem:localized_massart_cube}
There exist universal constants
\(c_{\mathrm{trad}}\in (0,1/8]\) and \(\delta_{\mathrm{cube}}>0\) such
that the following holds.  Let \(\hypoclass\) be a hypothesis class with
\(\VC(\hypoclass)\ge d\ge2\), let \(m_0\in(0,1]\), and let \(n\ge1\).
Then, there exists a family of distributions
\[
\{P_b:b\in\{-1,1\}^{d-1}\}
\]
such that, for every \(b\in\{-1,1\}^{d-1}\),
\[
p_b^\star
\coloneqq
\inf_{h\in\hypoclass}\err{h}{P_b}
\in
\left[\frac{m_0}{4},\frac{m_0}{2}\right],
\]
and, for every learning rule
\(r_n:(\X\times\Y)^n\to\mathcal G\),
\[
\sup_{b\in\{-1,1\}^{d-1}}
\Pr_{S_n\sim P_b^n}\!\left[
\err{r_n(S_n)}{P_b}
-
p_b^\star
\ge
c_{\mathrm{trad}}\cdot
\min\left\{
m_0,\sqrt{\frac{m_0d}{n}}
\right\}
\right]
\ge
\delta_{\mathrm{cube}} .
\]
\end{lemma}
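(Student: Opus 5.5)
The plan is a localized Assouad (hypercube) construction on a shattered set, followed by a conversion from an expectation bound to a constant-probability bound. Since \(\VC(\hypoclass)\ge d\), I would fix points \(x_1,\dots,x_d\in\X\) shattered by \(\hypoclass\). I would place the adversarially relevant mass \(m_0\) uniformly on \(x_1,\dots,x_{d-1}\), each receiving \(m_0/(d-1)\), and the remaining mass \(1-m_0\) on \(x_d\), with a deterministic label \(y_d\in\Y\). For \(b\in\{-1,1\}^{d-1}\), let \(P_b\) have the label at \(x_i\) (\(i\le d-1\)) equal to \(b_i\) with probability \((1+\gamma)/2\). The bias is
\[
\gamma\coloneqq \min\Bigl\{\tfrac12,\;c_0\sqrt{\tfrac{d-1}{n m_0}}\Bigr\},
\]
with a small universal \(c_0\) to be fixed later.

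First I would compute the optimal error. By shattering, some \(h_b\in\hypoclass\) satisfies \(h_b(x_i)=b_i\) for \(i\le d-1\) and \(h_b(x_d)=y_d\). This \(h_b\) is Bayes-optimal for \(P_b\), so
\[
p_b^\star=\tfrac{m_0(1-\gamma)}{2}\in\bigl[\tfrac{m_0}{4},\tfrac{m_0}{2}\bigr],
\]
where the inclusion uses \(\gamma\le1/2\).

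Next I would write the excess error of an arbitrary measurable \(g=r_n(S_n)\). Only the values \(g(x_1),\dots,g(x_d)\) matter, and any mistake at \(x_d\) only increases the error. Hence
\[
\err{g}{P_b}-p_b^\star\ \ge\ \frac{m_0\gamma}{d-1}\,\bigl|\{i\le d-1:\ g(x_i)\neq b_i\}\bigr|\;\eqqcolon\; X_b .
\]
Moreover \(X_b\le m_0\gamma\) always. Note that \(m_0\gamma\asymp\min\{m_0,\sqrt{m_0 d/n}\}\), using \(d-1\ge d/2\).

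The core step is Assouad's lemma with \(b\) drawn uniformly at random. For each coordinate \(i\), let \(b^{(i)}\) denote \(b\) with its \(i\)-th coordinate flipped. The two laws \(P_b\) and \(P_{b^{(i)}}\) differ only in the conditional label at \(x_i\). Their per-sample KL divergence is
\[
\frac{m_0}{d-1}\,\gamma\log\frac{1+\gamma}{1-\gamma}\ \le\ \frac{3m_0\gamma^2}{d-1}
\qquad(\gamma\le\tfrac12),
\]
so
\[
\DKL\bigl(P_b^n\,\|\,P_{b^{(i)}}^n\bigr)\ \le\ \frac{3nm_0\gamma^2}{d-1}\ \le\ 3c_0^2 .
\]
By Pinsker (or Bretagnolle--Huber), each coordinate is misclassified with averaged probability at least \(1/4\) once \(c_0\) is small. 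This gives \(\EE_b\EE_{S_n}X_b\ge m_0\gamma/4\). Consequently some \(b\) satisfies \(\EE X_b\ge m_0\gamma/4\).

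Finally I would pass to probability using boundedness. Since \(0\le X_b\le m_0\gamma\), we have
\[
\Pr\bigl[X_b\ge m_0\gamma/8\bigr]\ \ge\ \frac{\EE X_b-m_0\gamma/8}{m_0\gamma}\ \ge\ \frac18 .
\]
This yields the claim with \(\delta_{\mathrm{cube}}=1/8\) and \(c_{\mathrm{trad}}\) a small universal multiple of \(\min\{1/2,c_0\}/\sqrt2\), capped at \(1/8\). The step I expect to need the most care is the Assouad/KL step. There the number of visits to each \(x_i\) is random, but handling it through the KL divergence of the full product measure, as above, avoids conditioning on the visit counts. The remaining cases (\(\gamma=1/2\), where the bound is of order \(m_0\)) are covered by the same computation.
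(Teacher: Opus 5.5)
Your proposal is correct and follows essentially the same route as the paper: the same localized hypercube on a shattered set (mass $m_0/(d-1)$ on $d-1$ active points, a deterministic anchor at $x_d$, bias $\gamma=\min\{1/2,c\sqrt{(d-1)/(nm_0)}\}$), a per-coordinate KL bound with Pinsker, Assouad's lemma, and a reverse-Markov conversion that uses boundedness. The differences are only cosmetic: you apply the conversion to the scaled excess rather than to the Hamming distance, and you get $\delta_{\mathrm{cube}}=1/8$ where the paper gets $1/7$.
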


When \(m_0\leq p_c(\rho;k)\), the next lemma transfers this localized
ordinary lower bound through the lower-increment estimate of
Theorem~\ref{thm:qp-qp'_bounds}.

\begin{lemma}[Localized robust lower bound]
\label{lem:localized_robust_lower_mass}
There exists a universal constant
\(\delta_{\mathrm{rob}}>0\) such that, for every \(k>1\), there
exists a constant \(c_{\mathrm{rob},k}>0\)
for which the following holds.
Let \(\rho\ge0\), let
\(\VC(\hypoclass)\ge d\ge2\), and let \(n\ge1\).  For every
\(m_0\in(0,p_c(\rho;k)]\) and every learning rule
\(r_n:(\X\times\Y)^n\to\mathcal G\), there exists a distribution \(P\)
such that
\[
\begin{aligned}
\Pr_{S_n\sim P^n}\!\Bigg[
&
\risk{k,\rho}{r_n(S_n)}{P}
-
\inf_{h\in\hypoclass}
\risk{k,\rho}{h}{P}
\\
&\qquad\qquad\qquad\ge
c_{\mathrm{rob},k}
\min\left\{
m_0,
\sqrt{\frac{m_0d}{n}}
\right\}
\left(
1+
\left(\frac{\rho}{m_0}\right)^{1/k}
\right)
\Bigg]
\ge
\delta_{\mathrm{rob}} .
\end{aligned}
\]
\end{lemma}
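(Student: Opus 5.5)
The plan is to take the family $\{P_b\}$ supplied by Lemma~\ref{lem:localized_massart_cube} at the same mass scale $m_0$ and show that, for each $b$, an ordinary excess error of size $\eta\coloneqq c_{\mathrm{trad}}\min\{m_0,\sqrt{m_0d/n}\}$ forces a robust excess risk of size $\gtrsim_k \eta\,(1+(\rho/m_0)^{1/k})$. The distribution $P$ in the statement is then one of the $P_b$ that attains the supremum in Lemma~\ref{lem:localized_massart_cube}, and we take $\delta_{\mathrm{rob}}\coloneqq\delta_{\mathrm{cube}}$.

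\emph{Step 1: reduce to the scalar map.} By the event-inflation identity, $\risk{k,\rho}{g}{P_b}=\q{k,\rho}{\err{g}{P_b}}$ for every $g\in\mathcal G$. Since $p\mapsto\q{k,\rho}{p}$ is continuous and nondecreasing, we also have $\inf_{h\in\hypoclass}\risk{k,\rho}{h}{P_b}=\q{k,\rho}{p_b^\star}$: take $h_j$ with $\err{h_j}{P_b}\downarrow p_b^\star$ and use continuity. Hence on the bad event $\{\err{r_n(S_n)}{P_b}\ge p_b^\star+\eta\}$, monotonicity gives
\[
\risk{k,\rho}{r_n(S_n)}{P_b}-\inf_{h}\risk{k,\rho}{h}{P_b}\ge \q{k,\rho}{p_b^\star+\eta}-\q{k,\rho}{p_b^\star}.
\]
The case $\rho=0$ is trivial, since then $\q{k,0}{p}=p$ and the right-hand side equals $\eta$.

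\emph{Step 2: apply the lower increment bound \eqref{eq:disc-lower}.} We have $p_b^\star\le m_0/2$ and $\eta\le c_{\mathrm{trad}}m_0\le m_0/8$, so $p'\coloneqq p_b^\star+\eta\le 5m_0/8<m_0\le p_c(\rho;k)$. This places $p=p_b^\star<p'$ inside the admissible range $[0,p_c(\rho;k)]$. Then \eqref{eq:disc-lower} gives
\[
\q{k,\rho}{p'}-\q{k,\rho}{p_b^\star}\ge\tilde c_k\,\eta\,\bigl(1+(\rho/p')^{1/k}\bigr)\ge \tilde c_k\,\eta\,\bigl(1+(\rho/m_0)^{1/k}\bigr),
\]
using $p'\le m_0$. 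Setting $c_{\mathrm{rob},k}\coloneqq\tilde c_k c_{\mathrm{trad}}$ (and noting $\tilde c_k\le1$ may be assumed for the $\rho=0$ case) gives the claimed threshold with probability at least $\delta_{\mathrm{cube}}$.

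\emph{Expected obstacle.} The hypothesis $m_0\le p_c(\rho;k)$ is essential here, and the main thing to check is that the shifted error level $p_b^\star+\eta$ stays below the clipping point, where \eqref{eq:disc-lower} is valid. Beyond $p_c(\rho;k)$ the map saturates and no increment lower bound is available. The margin built into $p_b^\star\le m_0/2$ and $c_{\mathrm{trad}}\le1/8$ is exactly what guarantees this.

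A secondary subtlety is that Lemma~\ref{lem:localized_massart_cube} uses a supremum over $b$ of probabilities, while the statement asks for a single $P$. This is handled by choosing the maximizing $b$, or a near-maximizer, after absorbing a factor into $\delta_{\mathrm{rob}}$, e.g.\ $\delta_{\mathrm{rob}}=\delta_{\mathrm{cube}}/2$. Measurability of the bad event, if needed, is covered by the outer-probability convention.
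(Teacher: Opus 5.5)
Your proposal is correct and follows the paper's proof essentially step by step: take the maximizing $P_b$ from Lemma~\ref{lem:localized_massart_cube}, reduce robust excess risk to the increment $\q{k,\rho}{p_b^\star+\eta}-\q{k,\rho}{p_b^\star}$, verify that $p_b^\star+\eta<m_0\le p_c(\rho;k)$, and apply \eqref{eq:disc-lower} together with $p_b^\star+\eta\le m_0$, with $\delta_{\mathrm{rob}}=\delta_{\mathrm{cube}}$. Two small simplifications are available. Since $\{-1,1\}^{d-1}$ is finite, the supremum is attained, so you do not need to halve $\delta_{\mathrm{cube}}$. Also, the infimum of robust risk is attained by $h_b^\star$, so the continuity argument in your Step 1 is not required.
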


\medskip
We now prove the theorem.  Let \(c_{\mathrm{rob},k}\) and
\(\delta_{\mathrm{rob}}\) be the constants declared in
Lemma~\ref{lem:localized_robust_lower_mass}.  By decreasing
\(c_{\mathrm{rob},k}\), if necessary, we also assume that
\(c_{\mathrm{rob},k}\le1\) and that the lower-increment bound in
Theorem~\ref{thm:qp-qp'_bounds} holds with the same constant
\(c_{\mathrm{rob},k}\).  
Using Lemma~\ref{lem:agnostic_endpoint_comparison}, choose
\(a_k\in(0,1]\), depending only on \(k\), sufficiently small that,
for every \(\rho\ge0\),
\[
p_c(\rho;k)
\left[
1+
\left(
\frac{\rho}{p_c(\rho;k)}
\right)^{1/k}
\right]
\ge
a_k
\]
and
\[
p_c(\rho;k)
\left[
1+
\left(
\frac{\rho}{p_c(\rho;k)}
\right)^{1/k}
\right]^2
\ge
a_k
\left(
1+\rho^{1/(k-1)}
\right).
\]
Let
\(\delta_{\mathrm{real}}\) and \(\eps_{\mathrm{real}}\) be the constants
appearing in Theorem~\ref{thm:main_lower_realizable}.  The constant
\(\tilde c_k\) in the statement of the theorem will be chosen at the end
of the proof, and may be decreased finitely many times.

Set
\[
\delta_0
\coloneqq
\frac12
\min\left\{
\delta_{\mathrm{rob}},
\delta_{\mathrm{real}},
\frac18
\right\},
\]
which is universal, and choose \(\tilde\eps_k>0\) so that
\[
\tilde\eps_k
\le
\min\left\{
\eps_{\mathrm{real}},
\frac{a_kc_{\mathrm{rob},k}}{2}
\right\}.
\]
Fix
\[
\rho\ge0,
\qquad
\delta\in(0,\delta_0],
\qquad
\eps\in(0,\tilde\eps_k],
\]
and suppose that
\(r_n:(\X\times\Y)^n\to\mathcal G\) satisfies the agnostic guarantee in
the statement of the theorem; namely, for every distribution \(P\) over
\(\X\times\Y\),
\begin{equation}
\label{eq:agnostic_lower_assumed_guarantee}
\Pr_{S_n\sim P^n}\!\left[
\risk{k,\rho}{r_n(S_n)}{P}
-
\inf_{h\in\hypoclass}\risk{k,\rho}{h}{P}
\ge \eps
\right]
\le \delta .
\end{equation}

Since~\eqref{eq:agnostic_lower_assumed_guarantee} holds for every
distribution, it holds in particular for every realizable
distribution. For such distributions the optimal robust risk is zero,
so Theorem~\ref{thm:main_lower_realizable} gives, after decreasing
\(\tilde c_k\) if necessary,
\begin{equation}
\label{eq:agnostic_realizable_contribution}
n
\ge
\tilde c_k\,
\Psi_{k,\rho}^{\real}(\eps)
\bigl(d+\log(1/\delta)\bigr).
\end{equation}

It remains to prove the genuinely agnostic contribution, namely
\[
n
\ge
\tilde c_k\,
\frac{1+\rho^{1/(k-1)}}{\eps^2}
\bigl(d+\log(1/\delta)\bigr),
\]
again after possibly decreasing \(\tilde c_k\).

Suppose first that \(d\ge2\). Apply
Lemma~\ref{lem:localized_robust_lower_mass} with
\[
m_0=p_c(\rho;k).
\]
Since \(\delta<\delta_{\mathrm{rob}}\), the assumed guarantee
\eqref{eq:agnostic_lower_assumed_guarantee} forces
\begin{equation}
\label{eq:agnostic_endpoint_constraint}
c_{\mathrm{rob},k}
\min\left\{
p_c(\rho;k),
\sqrt{p_c(\rho;k)\frac dn}
\right\}
\left[
1+
\left(\frac{\rho}{p_c(\rho;k)}\right)^{1/k}
\right]
\leq
\eps;
\end{equation}
otherwise, the lower event supplied by the lemma would contradict
\eqref{eq:agnostic_lower_assumed_guarantee}.

The minimum in \eqref{eq:agnostic_endpoint_constraint} cannot equal
\(p_c(\rho;k)\), since then its left-hand side would be at least
\[
c_{\mathrm{rob},k}a_k
\geq
2\eps,
\]
by the choice of \(\tilde\eps_k\). Hence
\[
\frac dn
\leq
p_c(\rho;k),
\]
and the minimum equals
\(\sqrt{p_c(\rho;k)d/n}\). Squaring
\eqref{eq:agnostic_endpoint_constraint} and using the second defining
property of \(a_k\) gives, after decreasing \(\tilde c_k\) if
necessary,
\begin{equation}
\label{eq:agnostic_endpoint_d_lower}
n
\geq
\tilde c_k\,
d\,
\frac{1+\rho^{1/(k-1)}}{\eps^2}.
\end{equation}

We next prove the corresponding confidence-dependent bound, which is
valid for every \(d\ge1\).
By the standing assumption \(|\hypoclass|\ge3\), choose distinct
hypotheses \(h_+,h_-\in\hypoclass\) that are not pointwise
complements; such a pair exists because a binary hypothesis has at
most one pointwise complement.  Choose \(x_0,x_1\in\X\) at which they
respectively disagree and agree, and index the pair so that
\[
h_\sigma(x_0)=\sigma,
\qquad
h_+(x_1)=h_-(x_1)=:y_1,
\qquad
\sigma\in\{-1,1\}.
\]
For \(\sigma\in\{-1,1\}\), define a distribution \(P_\sigma\) on
\(\X\times\Y\) by
\[
P_\sigma(X=x_0,Y=\sigma)
=
p_c(\rho;k)\frac{1+\gamma}{2},
\qquad
P_\sigma(X=x_0,Y=-\sigma)
=
p_c(\rho;k)\frac{1-\gamma}{2},
\]
and
\[
P_\sigma(X=x_1,Y=y_1)
=
1-p_c(\rho;k),
\]
where
\[
\gamma
\coloneqq
\frac{\eps}{
c_{\mathrm{rob},k}
p_c(\rho;k)
\left[
1+\left(\frac{\rho}{p_c(\rho;k)}\right)^{1/k}
\right]}.
\]

By the defining property of \(a_k\) and the choice of
\(\tilde\eps_k\),
\[
\gamma
\le
\frac{\eps}{c_{\mathrm{rob},k}a_k}
\le
\frac12 .
\]

For \(P_\sigma\), \(h_\sigma\) is Bayes-optimal, and
\[
\err{h_\sigma}{P_\sigma}
=
p_c(\rho;k)\frac{1-\gamma}{2},
\qquad
\err{g}{P_\sigma}
\ge
p_c(\rho;k)\frac{1+\gamma}{2}
\quad
\text{for every }g\in\mathcal G
\text{ with }g(x_0)\neq\sigma.
\]
Both error levels lie below \(p_c(\rho;k)\). Hence the
one-dimensional event-inflation identity and monotonicity imply that
every \(g\in\mathcal G\) with \(g(x_0)\neq\sigma\) has robust excess
risk at least
\[
\q{k,\rho}{
p_c(\rho;k)\tfrac{1+\gamma}{2}
}
-
\q{k,\rho}{
p_c(\rho;k)\tfrac{1-\gamma}{2}
}.
\]
For \(\rho>0\), the lower-increment bound in
Theorem~\ref{thm:qp-qp'_bounds} bounds this difference below by
\[
c_{\mathrm{rob},k}
p_c(\rho;k)\gamma
\left[
1+
\left(\frac{\rho}{p_c(\rho;k)}\right)^{1/k}
\right]
=
\eps.
\]
For \(\rho=0\), the same conclusion follows from
\(\q{k,0}{p}=p\) and \(c_{\mathrm{rob},k}\leq1\).
Therefore,
\[
P_\sigma^n\!\left(
r_n(S_n)(x_0)\neq\sigma
\right)
\leq
\delta,
\qquad
\sigma\in\{-1,1\}.
\]

Thus \(S_n\mapsto r_n(S_n)(x_0)\) is a test between
\(P_+^n\) and \(P_-^n\) with both errors at most \(\delta\).
The Bretagnolle--Huber inequality
\citep[Lemma~2.6]{tsybakov2009introduction} gives
\[
2\delta
\geq
\frac12
\exp\!\left(
-n\DKL(P_+\Vert P_-)
\right),
\]
and therefore, since \(\delta\leq1/16\),
\[
n\DKL(P_+\Vert P_-)
\geq
\frac12\log\frac1\delta.
\]
Moreover,
\[
\DKL(P_+\Vert P_-)
=
p_c(\rho;k)\gamma
\log\frac{1+\gamma}{1-\gamma}
\leq
4p_c(\rho;k)\gamma^2,
\]
where the inequality uses \(\gamma\leq1/2\). Consequently,
\[
n
\geq
\frac{1}{8p_c(\rho;k)\gamma^2}
\log\frac1\delta
=
\frac{c_{\mathrm{rob},k}^2}{8}
\frac{
p_c(\rho;k)
\left[
1+
\left(\frac{\rho}{p_c(\rho;k)}\right)^{1/k}
\right]^2
}{\eps^2}
\log\frac1\delta.
\]
Using the second defining property of \(a_k\), and decreasing
\(\tilde c_k\) if necessary, yields
\begin{equation}
\label{eq:agnostic_endpoint_delta_lower}
n
\geq
\tilde c_k
\frac{1+\rho^{1/(k-1)}}{\eps^2}
\log\frac1\delta.
\end{equation}

If \(d\ge2\), combining
\eqref{eq:agnostic_endpoint_d_lower} and
\eqref{eq:agnostic_endpoint_delta_lower} gives the desired bound.
If \(d=1\), then \(\delta\le\delta_0\le1/16\) implies
\[
d+\log(1/\delta)
=
1+\log(1/\delta)
\le
\left(
1+\frac{1}{\log 16}
\right)
\log(1/\delta),
\]
so \eqref{eq:agnostic_endpoint_delta_lower} gives the same conclusion.
Thus, after decreasing \(\tilde c_k\) if necessary, in either case
\begin{equation}
\label{eq:agnostic_endpoint_full_lower}
n
\ge
\tilde c_k\,
\frac{1+\rho^{1/(k-1)}}{\eps^2}
\bigl(d+\log(1/\delta)\bigr).
\end{equation}
Together with the realizable contribution
\eqref{eq:agnostic_realizable_contribution}, this gives
\[
n
\ge
\tilde c_k
\max\left\{
\Psi_{k,\rho}^{\real}(\eps),
\frac{1+\rho^{1/(k-1)}}{\eps^2}
\right\}
\bigl(d+\log(1/\delta)\bigr).
\]
By the definition of \(\Psi_{k,\rho}^{\agn}\), this is
\[
n
\ge
\tilde c_k\,
\Psi_{k,\rho}^{\agn}(\eps)
\bigl(d+\log(1/\delta)\bigr),
\]
which proves the theorem.
\hfill\BlackBox

\subsubsection{
Proof of Lemma~\ref{lem:localized_massart_cube}: Localized Traditional VC Lower Bound}
\label{sec:lem:localized_massart_cube}

Choose a shattered set \(\{x_1,\dots,x_d\}\subset\X\), and set
\(
\pi_0\coloneqq\frac{m_0}{d-1}.
\)
For each \(b=(b_1,\dots,b_{d-1})\in\{-1,1\}^{d-1}\), define
\(P_b\) as follows.  Let
\[
P_b(X=x_i)=\pi_0,\qquad 1\le i\le d-1,
\]
and
\(
P_b(X=x_d)=1-m_0.
\)
Conditionally on \(X=x_i\), for \(1\le i\le d-1\), set
\[
P_b(Y=b_i\mid X=x_i)=\frac{1+\gamma}{2},
\qquad
P_b(Y=-b_i\mid X=x_i)=\frac{1-\gamma}{2},
\]
where \(\gamma\in(0,1/2]\) will be chosen below.  At the anchor point,
set
\(
P_b(Y=-1\mid X=x_d)=1.
\)

Since the \(d\) points are shattered, there exists
\(h_b^\star\in\hypoclass\) such that
\[
h_b^\star(x_i)=b_i,\qquad 1\le i\le d-1,
\qquad\text{and}\qquad
h_b^\star(x_d)=-1.
\]
This classifier is Bayes optimal under \(P_b\).  Hence
\[
p_b^\star
=
\inf_{h\in\hypoclass}\err{h}{P_b}
=
\err{h_b^\star}{P_b}
=
m_0\frac{1-\gamma}{2}.
\]
Since \(\gamma\le1/2\),
\(
p_b^\star\in\left[{m_0}/{4},{m_0}/{2}\right].
\)
For every \(g\in\mathcal G\),
\[
\err{g}{P_b}-p_b^\star
=
\pi_0\gamma
\sum_{i=1}^{d-1}\ind{g(x_i)\ne b_i}
+
(1-m_0)\ind{g(x_d)\ne -1}.
\]
Indeed, an incorrect prediction at an active point increases the
conditional error by \(\gamma\), while the anchor contribution is
nonnegative. For a learning rule \(r_n\), define
\[
\hat b_n(S_n)
\coloneqq
\bigl(r_n(S_n)(x_1),\dots,r_n(S_n)(x_{d-1})\bigr).
\]
Dropping the nonnegative anchor contribution gives
\begin{equation}
\label{eq:localized_massart_excess_hamming}
\err{r_n(S_n)}{P_b}
-
p_b^\star
\ge
\pi_0\gamma\,\hamming(\hat b_n(S_n),b).
\end{equation}

We next lower-bound the Hamming error.  
If \(b^{(j)}\) is obtained from \(b\) by flipping coordinate \(j\),
then \(P_b\) and \(P_{b^{(j)}}\) differ only at \(x_j\), and for
\(\gamma\leq1/2\),
\[
\DKL(P_b\Vert P_{b^{(j)}})
=
\pi_0\gamma
\log\frac{1+\gamma}{1-\gamma}
\leq
4\pi_0\gamma^2.
\]
Choose
\[
\gamma
\coloneqq
\min\left\{
\frac12,
\frac18\sqrt{\frac{d-1}{nm_0}}
\right\}.
\]
Then
\[
n\DKL(P_b\Vert P_{b^{(j)}})\leq\frac14,
\]
and hence, by Pinsker's inequality,
\[
\left\|
P_b^n-P_{b^{(j)}}^n
\right\|_{\mathrm{TV}}
\leq\frac12.
\]
Assouad's lemma \citep{yu1997assouad} therefore gives
\[
\sup_b
\EE_{P_b^n}
\left[
\hamming(\hat b_n,b)
\right]
\geq
\frac{d-1}{4}.
\]
Since
\[
0
\leq
\hamming(\hat b_n(S_n),b)
\leq
d-1,
\]
the preceding expectation bound implies
\[
\sup_{b\in\{-1,1\}^{d-1}}
\Pr_{S_n\sim P_b^n}\!\left[
\hamming(\hat b_n(S_n),b)
\geq
\frac{d-1}{8}
\right]
\geq
\frac17.
\]
Together with
\eqref{eq:localized_massart_excess_hamming} and
\(\pi_0(d-1)=m_0\), this yields
\[
\sup_{b\in\{-1,1\}^{d-1}}
\Pr_{S_n\sim P_b^n}\!\left[
\err{r_n(S_n)}{P_b}
-
p_b^\star
\geq
\frac{m_0\gamma}{8}
\right]
\geq
\frac17.
\]
Moreover, since \(d\geq2\),
\begin{align*}
m_0\gamma
&=
\min\left\{
\frac{m_0}{2},
\frac18
\sqrt{\frac{m_0(d-1)}{n}}
\right\}
\geq
\frac{1}{8\sqrt2}
\min\left\{
m_0,
\sqrt{\frac{m_0d}{n}}
\right\}.
\end{align*}
Thus the lemma holds with, for example,
\[
c_{\mathrm{trad}}
=
\frac{1}{64\sqrt2},
\qquad
\delta_{\mathrm{cube}}
=
\frac17.
\]
\hfill\BlackBox

\subsubsection{
Proof of Lemma~\ref{lem:localized_robust_lower_mass}: Localized Robust Lower Bound}
\label{sec:lem:localized_robust_lower_mass}

Fix
\[
\rho\geq0,
\qquad
m_0\in(0,p_c(\rho;k)],
\]
and a learning rule
\(r_n:(\X\times\Y)^n\to\mathcal G\).
Apply Lemma~\ref{lem:localized_massart_cube} with this value of
\(m_0\), and set
\[
\xi_n(m_0)
\coloneqq
c_{\mathrm{trad}}
\min\left\{
m_0,
\sqrt{\frac{m_0d}{n}}
\right\}.
\]
For some \(b\in\{-1,1\}^{d-1}\), writing \(P=P_b\),
\begin{equation}
\label{eq:localized_ordinary_event}
\Pr_{S_n\sim P^n}\!\left[
\err{r_n(S_n)}{P}
-
p_b^\star
\geq
\xi_n(m_0)
\right]
\geq
\delta_{\mathrm{cube}} .
\end{equation}
Moreover,
\[
p_b^\star
\in
\left[
\frac{m_0}{4},
\frac{m_0}{2}
\right],
\qquad
\xi_n(m_0)
\leq
\frac{m_0}{8},
\]
and hence
\[
0
<
p_b^\star
<
p_b^\star+\xi_n(m_0)
<
m_0
\leq
p_c(\rho;k).
\]

By the one-dimensional event-inflation identity and monotonicity,
\[
\inf_{h\in\hypoclass}
\risk{k,\rho}{h}{P}
=
\q{k,\rho}{p_b^\star},
\]
and on the event in
\eqref{eq:localized_ordinary_event},
\[
\begin{aligned}
\risk{k,\rho}{r_n(S_n)}{P}
-
\inf_{h\in\hypoclass}
\risk{k,\rho}{h}{P}
\geq
\q{k,\rho}{
p_b^\star+\xi_n(m_0)
}
-
\q{k,\rho}{p_b^\star}.
\end{aligned}
\]
If \(\rho=0\), the right-hand side equals
\(\xi_n(m_0)\). If \(\rho>0\), the lower-increment bound in
Theorem~\ref{thm:qp-qp'_bounds} gives
\begin{align*}
\q{k,\rho}{
p_b^\star+\xi_n(m_0)
}
-
\q{k,\rho}{p_b^\star}
&\geq
\tilde c_k
\xi_n(m_0)
\left[
1+
\left(
\frac{\rho}{
p_b^\star+\xi_n(m_0)
}
\right)^{1/k}
\right]
\\
&\geq
\tilde c_k
\xi_n(m_0)
\left[
1+
\left(
\frac{\rho}{m_0}
\right)^{1/k}
\right].
\end{align*}
Substituting the definition of \(\xi_n(m_0)\) proves the lemma,
after decreasing \(c_{\mathrm{rob},k}\) if necessary, with
\[
\delta_{\mathrm{rob}}
=
\delta_{\mathrm{cube}}.
\]
\hfill\BlackBox

\subsubsection{
Proof of Lemma~\ref{lem:agnostic_endpoint_comparison}: Endpoint Comparison}
\label{sec:lem:agnostic_endpoint_comparison}

Set
\[
t
\coloneqq
\rho^{1/(k-1)}.
\]
Since
\[
1+k(k-1)t^{k-1}
\asymp_k
(1+t)^{k-1},
\]
we have
\[
p_c(\rho;k)
=
\bigl(1+k(k-1)\rho\bigr)^{-1/(k-1)}
\asymp_k
\frac1{1+t}.
\]
Consequently,
\begin{align*}
&
p_c(\rho;k)
+
\rho^{1/k}
p_c(\rho;k)^{1-1/k}
\asymp_k
\frac1{1+t}
+
\left(\frac{t}{1+t}\right)^{(k-1)/k}
\asymp_k
1.
\end{align*}
Both terms are at most \(1\). Moreover, the first is at least
\(1/2\) when \(t\leq1\), whereas the second is at least
\(2^{-(k-1)/k}\) when \(t\geq1\).

Similarly,
\begin{align*}
p_c(\rho;k)^{1/2}
+
\rho^{1/k}
p_c(\rho;k)^{1/2-1/k}
&\asymp_k
(1+t)^{1/2}
\left[
\frac1{1+t}
+
\left(\frac{t}{1+t}\right)^{(k-1)/k}
\right]
\\
&\asymp_k
(1+t)^{1/2}
=
\left(
1+\rho^{1/(k-1)}
\right)^{1/2}.
\end{align*}
This proves both comparisons.
\hfill\BlackBox

\subsection{Self-Consistency Lemma}
\label{sec:self-cons}

\begin{lemma}[Self-consistency of the sample-size condition]
\label{lem:self-consistency}
Let \(\Gamma_{d,n}(\delta)\) be as in
\eqref{eq:Gamma_def}, and let \(\Psi\geq1\). There exists a universal
constant \(C>0\) 
such that, for all integers \(n\geq d\geq1\) and every
\(\delta\in(0,1)\),
\[
n
\geq
C\Psi
\left(
d\log(\e\Psi)+\log\delta^{-1}
\right)
\quad\Longrightarrow\quad
n
\geq
\Psi\Gamma_{d,n}(\delta).
\]
\end{lemma}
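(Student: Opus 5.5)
Write \(\Gamma_{d,n}(\delta)=d\log(2\e n/d)+\log(8/\delta)\). The task is to show that the explicit lower bound on \(n\) dominates \(\Psi\) times this quantity. The only difficulty is that \(\Gamma_{d,n}\) grows logarithmically in \(n\). I will therefore split \(\Psi\Gamma_{d,n}(\delta)\) into a \(\delta\)-part \(\Psi\log(8/\delta)\) and a \(d\)-part \(\Psi d\log(2\e n/d)\), and show that each is at most \(n/2\).

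The \(\delta\)-part is immediate. Since \(\log(8/\delta)\le \log 8+\log\delta^{-1}\) and \(d\log(\e\Psi)\ge 1\) (because \(\Psi\ge1\) and \(d\ge1\)), we get
\[
\Psi\log(8/\delta)\le 3\Psi\bigl(d\log(\e\Psi)+\log\delta^{-1}\bigr)\le n/2
\]
as soon as \(C\ge 6\).

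For the \(d\)-part, set \(x\coloneqq n/d\ge1\). The hypothesis gives \(x\ge C\Psi\log(\e\Psi)\), and the required inequality \(\Psi d\log(2\e n/d)\le n/2\) becomes \(x\ge 2\Psi\log(2\e x)\). The function \(x\mapsto x-2\Psi\log(2\e x)\) is increasing for \(x\ge 2\Psi\). So it suffices to check the inequality at the single point \(x_0=C\Psi\log(\e\Psi)\), which lies to the right of \(2\Psi\) once \(C\ge2\). At \(x_0\) we have
\[
\log(2\e x_0)=\log(2\e C)+\log\Psi+\log\log(\e\Psi).
\]
Using \(\log\log(\e\Psi)\le\log(\e\Psi)\) and \(\log\Psi\le\log(\e\Psi)\), this is at most \(\bigl(\log(2\e C)+2\bigr)\log(\e\Psi)\), where \(\log(\e\Psi)\ge1\) absorbs the constant. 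The required inequality then reduces to
\[
C\ge 2\bigl(\log(2\e C)+2\bigr),
\]
which holds for all sufficiently large universal \(C\), for example \(C=40\).

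Adding the two halves gives \(n\ge\Psi\Gamma_{d,n}(\delta)\). The only step needing care is the monotonicity reduction for the \(d\)-part, that is, the bookkeeping with \(\log(2\e x_0)\). Everything else is elementary, and the hypothesis \(n\ge d\) is used only to ensure \(x\ge1\), so that \(\Gamma_{d,n}\) is well defined and positive.
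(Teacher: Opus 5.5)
Your proof is correct and follows essentially the paper's route: both pass to \(x=n/d\), use that \(t\mapsto t-c\Psi\log(2\e t)\) is increasing for \(t\ge c\Psi\), and verify the inequality at the threshold point using \(\log\log(\e\Psi)\le\log(\e\Psi)\)-type bounds. The only difference is bookkeeping. You split the budget in half and dispose of the \(n\)-independent term \(\Psi\log(8/\delta)\) directly. The paper instead folds that term into the single quantity \(L=\Psi\log(\e\Psi)+(\Psi/d)\log(8/\delta)\) and runs one monotonicity argument.
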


\begin{proof}
Set
\[
x\coloneqq\frac{n}{d}
\qquad\text{and}\qquad
L
\coloneqq
\Psi\log(\e\Psi)
+
\frac{\Psi}{d}\log\left(\frac{8}{\delta}\right).
\]
We first note that the premise of the lemma, after increasing its
universal constant if necessary, implies
\[
x\geq C_0L
\]
for any prescribed universal constant \(C_0\). Indeed, if
\[
A
\coloneqq
\Psi\log(\e\Psi)
+
\frac{\Psi}{d}\log\delta^{-1},
\]
then \(A\geq\Psi\), and hence
\[
L
=
A+\frac{\Psi}{d}\log 8
\leq
A+\Psi\log 8
\leq
(1+\log 8)A.
\]

It therefore suffices to show that, for a sufficiently large
universal constant \(C_0\),
\[
x\geq C_0L
\quad\Longrightarrow\quad
x
\geq
\Psi\log(2\e x)
+
\frac{\Psi}{d}\log\left(\frac{8}{\delta}\right).
\]
The latter inequality is equivalent to
\(n\geq\Psi\Gamma_{d,n}(\delta)\).

Define
\[
F(t)
\coloneqq
t
-
\Psi\log(2\e t)
-
\frac{\Psi}{d}\log\left(\frac{8}{\delta}\right).
\]
Then
\[
F'(t)=1-\frac{\Psi}{t},
\]
so \(F\) is increasing on \([2\Psi,\infty)\). Moreover,
\(L\geq\Psi\), and therefore \(C_0L\geq2\Psi\) whenever
\(C_0\geq2\).

We next bound \(F(C_0L)\). Since \(L\geq\Psi\),
\begin{align*}
\Psi\log L
&=
\Psi\log\Psi
+
\Psi\log\left(\frac{L}{\Psi}\right)
\\
&\leq
\Psi\log(\e\Psi)+L
\\
&\leq
2L,
\end{align*}
where we used
\(\log u\leq u\) for \(u\geq1\). Also,
\[
\frac{\Psi}{d}\log\left(\frac{8}{\delta}\right)
\leq L
\qquad\text{and}\qquad
\Psi\log(2\e C_0)
\leq
\log(2\e C_0)L.
\]
Consequently,
\begin{align*}
F(C_0L)
&=
C_0L
-
\Psi\log(2\e C_0)
-
\Psi\log L
-
\frac{\Psi}{d}\log\left(\frac{8}{\delta}\right)
\\
&\geq
\left(
C_0-3-\log(2\e C_0)
\right)L.
\end{align*}
Choosing \(C_0\) sufficiently large makes the last expression
nonnegative. Since \(F\) is increasing on
\([2\Psi,\infty)\), every \(x\geq C_0L\) therefore satisfies
\(F(x)\geq0\). This proves
\[
n\geq\Psi\Gamma_{d,n}(\delta).
\]
\end{proof}

\bibliography{ref_jmlr_submission}
\end{document}